\setlist[enumerate]{leftmargin=.5in}
\setlist[itemize]{leftmargin=.5in}
\crefname{hypothesis}{Hypothesis}{Hypotheses}
\title{Image Segmentation with Adaptive Spatial Priors from Joint Registration
}
\author{Haifeng Li\thanks{Laboratory of Mathematics and Complex Systems (Ministry of Education of China), School of Mathematical Sciences, Beijing Normal University, Beijing, 100875, People's Republic of China.
  (\email{hfli@mail.bnu.edu.cn}, \email{jliu@bnu.edu.cn}, \email{licui@bnu.edu.cn}).}
\and Weihong Guo\thanks{Department of Mathematics, Applied Mathematics and Statistics, Case Western Reserve University, Cleveland, OH, 44106, USA. 
  (\email{wxg49@case.edu}).}
\and Jun Liu\footnotemark[1]
\and Li Cui\footnotemark[1]
\and Dongxing Xie\thanks{Department of Orthopaedics, Xiangya Hospital, Central South University, Changsha, Hunan, 410008, People's Republic of China (87 Xiangya Rd, Changsha, Hunan, China 410008). (\email{xdx1024@csu.edu.cn}).}}
\begin{document}

\maketitle

\begin{abstract}
  Image segmentation is a crucial but challenging task that has many applications. In medical imaging for instance, intensity inhomogeneity and noise are common. In thigh muscle images, different muscles are closed packed together and there are often no clear boundaries between them. Intensity based segmentation models cannot separate one muscle from another. To solve such problems, in this work we present a segmentation model with adaptive spatial priors from joint registration. This model combines segmentation and registration in a unified framework to leverage their positive mutual influence. The segmentation is based on a modified Gaussian mixture model (GMM), which integrates intensity inhomogeneity and spacial smoothness. The registration plays the role of providing a shape prior. We adopt a modified sum of squared difference (SSD) fidelity term and Tikhonov regularity term for registration, and also utilize Gaussian pyramid and parametric method for robustness. The connection between segmentation and registration is guaranteed by the cross entropy metric that aims to make the segmentation map (from segmentation) and deformed atlas (from registration) as similar as possible. This joint framework is implemented within a constraint optimization framework, which leads to an efficient algorithm. We evaluate our proposed model on synthetic and thigh muscle MR images. Numerical results show the improvement as compared to segmentation and registration performed separately and other joint models.
\end{abstract}

\begin{keywords}
  image segmentation, shape priors, image registration, intensity inhomogeneity, joint model, Gaussian mixture model, variational method, thigh muscle segmentation
\end{keywords}


\section{Introduction}
Image segmentation is a classical problem in image processing. It becomes very challenging when the boundaries between objects of interest are of similar intensity and texture. In medical analysis for example, the segmentation of different thigh muscles is an essential task for the evaluation of musculoskeletal diseases such as osteoarthritis. In addition, noise and intensity inhomogeneity make it even harder. \cref{fig:exampleIM} shows an example of thigh muscle MR image.
\begin{figure}[htp]
\centering
\subfigure{
\begin{minipage}[b]{0.23\textwidth}
\includegraphics[width=1\textwidth]{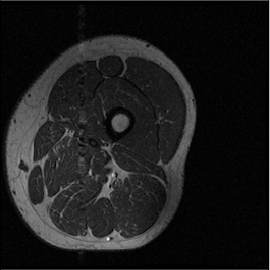}
\end{minipage}}
\subfigure{
\begin{minipage}[b]{0.23\textwidth}
\includegraphics[width=1\textwidth]{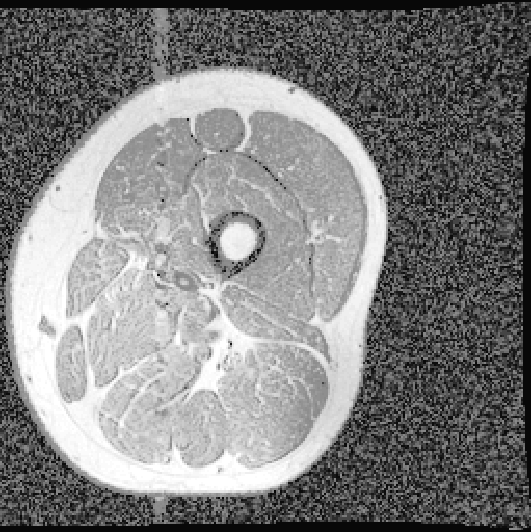}
\end{minipage}}
\subfigure{
\begin{minipage}[b]{0.23\textwidth}
\includegraphics[width=1\textwidth]{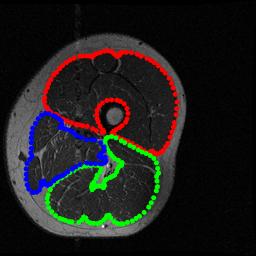}
\end{minipage}}
\caption{A sample axial slice of thigh muscle in T1-weight MR image. Left: original MR image, middle: histogram equalized image, right: manually segmented objects. The muscle groups of quadriceps, hamstrings and others are in red, green, and blue, respectively.}
\label{fig:exampleIM}
\end{figure}

In this case, segmentation methods only depending on intensity can hardly distinguish different muscles, so it is necessary to introduce auxiliary information as priors. So far, several approaches have been devoted to solve such problems. One kind of approaches is semi-automatic methods (e.g. \cite{ahmad2014atlas, jolivet2014skeletal, ogier2017individual, ogier2020novel,molaie2020knowledge}), that obtain priors by the manual delineation of lines, polygons, or manual segmentation of beginning slices, and so on. These semi-automatic methods have proven some efficiency but need manual intervention that can be time consuming and tedious. Another kind of approaches is fully automatic methods. Most of them are shape based methods (e.g. \cite{baudin2012prior, andrews2015generalized, kemnitz2017validation}, and \cite{chen2003using, lotjonen2010fast, sharma2019mammogram} for more general images), or atlas based methods (e.g. \cite{le2016volume, yokota2018automated, mesbah2019novel}). However, these automatic methods are either single segmentation/registration techniques or sequential methods (like ``pipeline" procedures), which ignore the mutual promotion between segmentation and registration. Recently, approaches based on deep learning are emerging, such as using AlexNet network \cite{ghosh2017structured}, U-Net architecture \cite{2020Clinical}, bounding boxes with 3D U-Net \cite{ni2019automatic}, and edge-aware network based on U-Net \cite{guo2021fully}. Although deep learning has great potential in muscle segmentation, we should note that this data driven method needs large amount of annotated data (manual segmentation of thigh muscles has been recognized as time consuming) and a specific network tuning for each dataset, which make it not very suitable and competitive for this task. Considering these facts, in this current work, we devote to propose a fully automatic model-based segmentation model, which integrates adaptive spatial priors from joint registration. 

A joint segmentation and registration model can exploit the strong correlation between them, thus achieving more accurate results. For the segmentation, the registration can be viewed as a prior (e.g. shape or topology prior) to guide the segmentation process. In turn, relevant segmented structures can promote registration by providing a reliable estimation of the deformation between the encoded structures, making the registration not only based on intensity matching (local criterion), but also on geometrical and shape pairing (nonlocal character) \cite{debroux2020variational}. 
There have been some related work discussing about joint segmentation and registration.  Most of them adopt active contour or level set based variational approaches. For example, in the pioneering work, Yezzi et al. \cite{yezzi2001variational} introduced a variational framework that integrates an active contour segmentation model with a rigid registration technique to simultaneously segment and register features from multiple images. In \cite{le2011combined}, the authors combined a matching criterion based on the active contour without edges for segmentation and a nonlinear elasticity based smoother on the displacement vector field to perform joint segmentation and registration. In \cite{atta2018joint}, the authors formulated the joint problem as a minimization of a functional that integrates a nonlinear elastic registration with a geodesic active contours which is introduced together with a weighted total variation term to segment the deforming template image. Pawar et al. \cite{pawar2019joint} presented a joint approach using bidirectional composition based level set formulation, in which the implicit level set function defining the segmentation contour and the displacement field for registration are both defined using B-splines. Swierczynski et al. \cite{swierczynski2018level} proposed an algorithm based on a level-set formulation, which merges a classic Chan-Vese segmentation with an active dense displacement field estimation. Debroux et al. \cite{debroux2018joint} established a joint model which is based on a nonlocal characterization of weighted total variation and nonlocal shape descriptors inspired by the piecewise constant Mumford-Shah model. However, these variational methods have some drawbacks. First, all the above methods \cite{yezzi2001variational, le2011combined, atta2018joint, pawar2019joint, swierczynski2018level, debroux2018joint} are designed for segmentation of two classes, and the extension to multi classes is not straightforward. Second, some of the joint models \cite{yezzi2001variational, atta2018joint, debroux2018joint} are still intensity based methods, as they only consider the pixel value information from two images, without using any atlas as a shape prior. Third, in \cite{le2011combined, swierczynski2018level}, the registration is a matching of segmentation maps, without considering intensity matching of the two images, thus making a rough registration. Therefore, these joint models can not accomplish the segmentation of thigh muscle images with regions of interests are closely packed together without clear boundaries in between. 

In addition to the variational methods, there are also statistical methods for joint segmentation and registration, and most of them are based on the expectation maximization (EM) algorithm. For example, Wyatt et al. \cite{wyatt2003map} applied a Markov random field framework, a mathematical technique for embedding local spatial information, within which they seek to obtain a maximum a posteriori estimate of the segmentation and registration. 
Ashburner et al. \cite{ashburner2005unified} presented a probabilistic framework based on GMM that enables image registration, tissue classification, and intensity bias correction to be combined within the same generative model. 
In \cite{pohl2006bayesian}, the authors developed a statistical model that combines the registration of an atlas with the segmentation of MR images. The model is solved by an EM algorithm which simultaneously estimates image artifacts, anatomical label maps, and a structure-dependent hierarchical mapping from the atlas to the image space. Gooya et al. \cite{gooya2012glistr} presented a generative method for simultaneously registering a probabilistic atlas of a healthy population to brain MR scans showing glioma and segmenting the scans into tumor as well as healthy tissue labels, by integrating the EM algorithm with a glioma growth model, such that EM iteratively refines the estimates of the posterior probabilities of tissue labels, the deformation field and the tumor growth model parameters. 
These statistical methods are more natural and good at dealing with big data, as well as segmentation of multi classes. However, in \cite{wyatt2003map, ashburner2005unified, pohl2006bayesian, gooya2012glistr}, registrations only consider the matching between the segmentation map and the deformed atlas, thus making a rough registration. Besides, statistical methods are susceptible to noise and are not easy to combine with excellent variational properties such as spatial regularization and geometric structures. In contrast, variational methods can incorporate these properties more flexibly. In \cite{Jun2013Image}, the authors proposed a variational framework to solve GMM based methods for image segmentation, thus making it flexible to combine with spatial regularization and bias correction, but it didn't consider a registration. Inspired by the work \cite{ashburner2005unified} and \cite{Jun2013Image}, we combine the GMM based segmentation with a nonrigid registration in a variational framework, which integrates not only the merits of both variational and statistical methods, but also the advantages of joint model. To the best of our knowledge, this is the first time such approach has been used in the joint segmentation and registration framework.

The main work of the current paper can be summarized as follows.
\begin{itemize}
\item We present a fully automatic model for joint segmentation and registration, which is designed for multi classes segmentation. By interpreting the GMM based model in a variational sense, this joint model absorbs the merits of both variational and statistical methods. The segmentation component combines GMM with bias correction and spatial regularization in a variational framework. The registration component consists of a modified SSD and the Tikhonov regularization. The coupling between segmentation and registration is achieved by a cross entropy metric, making the segmentation map as close as possible to the deformed atlas.
\item In this joint model, the registration can provide an adaptive spatial prior for the segmentation process. In turn, the segmentation can promote registration by providing geometrical structures encoded in the segmentation map. Therefore, the registration process considers not only the matching of image intensity, but also geometrical pairing through the matching of the segmentation map and the deformed atlas. This two levels matching make a more accurate registration, and further a more reliable spatial prior for segmentation.
\item This proposed joint model shows improvement performance on synthetic images and thigh muscle MR images as compared to sequential methods (segmentation and registration done separately) and other joint models.
\end{itemize}

The remainder of this paper is organized as follows. In \cref{sec:reviews}, we introduce the related work and the motivation of this work. Then we introduce the proposed joint segmentation and registration model, and its optimization algorithm in \cref{sec:proposedmodel}.  We show some numerical experiments to demonstrate the effectiveness of the proposed model in \cref{sec:experiments}. Finally, a brief conclusion is drawn in \cref{sec:conclusion}.

\section{Related work and motivation}\label{sec:reviews}
In this section, we will review some related work based on GMM, introduce the motivation of our work, and make a description of image registration.

\subsection{Gaussian mixture model}\label{sec:GMM}
Gaussian mixture models have been widely used in many classification problems (e.g. \cite{bishop1995neural} and \cite{mclachlan2019finite}), such as image segmentation. For a gray image $I:\Omega\subset\mathbb{R}^d\rightarrow\mathbb{R}$ (e.g. $d=2$ for two-dimensional images and $d=3$ for three-dimensional images), the probability of pixel intensity $z$ can be modeled as
\begin{equation}\label{eq:GMM}
p(z)=\sum\limits_{k=1}^K\gamma_k p_k(z; c_k,\sigma_k^2), \quad \mathrm{with}\quad \sum\limits_{k=1}^K\gamma_k=1, \gamma_k\geq0,
\end{equation}
where $K$ is the total number of mixtures, $\gamma_k$ is the weight of $k$-th Gaussian component, and $p_k$ is the Gaussian probability density function parameterized by mean $c_k$ and variance $\sigma_k^2$. Denote parameters in GMM as 
\begin{displaymath}
\Theta=\{\gamma_1,\cdots,\gamma_K,c_1,\cdots,c_K,\sigma_1^2,\cdots,\sigma_K^2\},
\end{displaymath} 
and assume that all pixels are independent, then the probability density function of the entire image $I$ can be written as
\begin{displaymath}
p(I|\Theta)=\prod\limits_{x\in\Omega}\sum\limits_{k=1}^K\frac{\gamma_k}{\sqrt{2\pi}\sigma_k}\mathrm{exp}\left\{-\frac{[I(x)-c_k]^2}{2\sigma_k^2}\right\}.
\end{displaymath}
This probability can be maximized with respect to the unknown parameter $\Theta$, by minimizing the following negative log-likelihood function 
\begin{equation}\label{eq:loglikelihood}
\mathcal{L}(\Theta)=-\int_\Omega\log\sum\limits_{k=1}^K\frac{\gamma_k}{\sqrt{2\pi}\sigma_k}\mathrm{exp}\left\{-\frac{[I(x)-c_k]^2}{2\sigma_k^2}\right\}dx.
\end{equation}
The parameters in the GMM can be efficiently estimated using the EM algorithm \cite{mclachlan2007algorithm}, and then the clusters can be obtained through the estimated parameters.

\subsection{Unified segmentation}
A probabilistic framework is presented in \cite{ashburner2005unified} that combines image segmentation, image registration and bias correction within a GMM based model.  

In this work, bias correction is included in GMM by extra parameters that account for smooth intensity variations. The bias field at pixel $x$ is denoted as $\rho(x; \bm\beta)$, where $\bm\beta$ is a vector of unknown parameters. Assume that the bias is multiplicative, and the $k$-th cluster is normally distributed with mean $c_k/\rho(x; \bm\beta)$ and variance $(\sigma_k/\rho(x;\bm\beta))^2$. The negative log-likelihood function \eqref{eq:loglikelihood} can be rewritten as  
\begin{equation}\label{eq:loglikelihoodbias}
\mathcal{L}_{\bm\beta}(\Theta)=-\int_\Omega\log\sum\limits_{k=1}^K\frac{\gamma_k\rho(x;\bm\beta)}{\sqrt{2\pi}\sigma_k}\mathrm{exp}\left\{-\frac{[\rho(x;\bm\beta)I(x)-c_k]^2}{2\sigma_k^2}\right\}dx.
\end{equation}

In GMM, the prior probability of any pixel, irrespective of its intensity, belonging to the k-th Gaussian is the stationary mixing proportion $\gamma_k$. For medical images, additional information usually can be obtained from other subjects' images with tissue probability maps. These maps give the probability of any pixel being of any of the tissue classes. For example, in the brain image segmentation, the tissue probability maps $b_k(x), k=1,2,3,4$, could represent the probability of pixel $x$ being of classes of grey matter, white matter, cerebrospinal fluid (CSF), and ``other'', respectively. The $b_k(x)$ can be further deformed according to a parameter vector $\bm\alpha$, which allows the tissue probability maps register to the image to be segmented. The deformed tissue probability map $b_k(x;\bm\alpha)$ therefore gives a prior probability of any pixel in a registered image being of any of the tissue classes. To combine with this prior, in \cite{ashburner2005unified}, the authors replace the stationary mixing proportion $\gamma_k$ by $\frac{\gamma_kb_k(x;\bm\alpha)}{\sum\limits_{l=1}^K\gamma_lb_l(x;\bm\alpha)}$, which introduces a segmentation prior at each pixel $x$, and the model \eqref{eq:loglikelihoodbias} can be modified to 
\begin{equation}\label{eq:loglikelihoodbiasreg}\nonumber
\mathcal{L}_{\bm\alpha,\bm\beta}(\Theta)=-\int_\Omega\log\sum\limits_{k=1}^K\frac{\gamma_kb_k(x;\bm\alpha)\rho(x;\bm\beta)}{\left[\sum\limits_{l=1}^K\gamma_lb_l(x;\bm\alpha)\right]\sqrt{2\pi}\sigma_k}\mathrm{exp}\left\{-\frac{[\rho(x;\bm\beta)I(x)-c_k]^2}{2\sigma_k^2}\right\}dx.
\end{equation}

Furthermore, to guarantee the smoothness of the bias field and the deformations, two regularization terms $p(\bm\beta)$ and $p(\bm\alpha)$ are considered in \cite{ashburner2005unified}. $p(\bm\beta)$ and $p(\bm\alpha)$ are zero-mean multivariate Gaussian probability density functions with covariances $\Sigma_{\bm\beta}$, $\Sigma_{\bm\alpha}$ respectively. 

Above all, the objective function to be minimized in \cite{ashburner2005unified} can be written as
\begin{equation}\label{eq:UnifiedSeg}
\mathcal{E}(\Theta)=\mathcal{L}_{\bm\alpha,\bm\beta}(\Theta)-\log p(\bm\beta)-\log p(\bm\alpha).
\end{equation}
The unknown parameters are 
$$\Theta=\{\gamma_1,\cdots,\gamma_K,c_1,\cdots,c_K,\sigma_1^2,\cdots,\sigma_K^2,\bm\alpha,\bm\beta\}.$$
The optimization of \eqref{eq:UnifiedSeg} involves the partial derivatives of the objective function and alternating among classification, bias correction and registration \cite{ashburner2005unified}.

This method effectively integrates intensity inhomogeneity and image registration into GMM, which can make great help for tissue classification. However, this method still has some drawbacks. First, the assumption in GMM that pixels are independent is inappropriate, which ignores the smoothness of clusters. Although the spatial priors embody a degree of spatial dependency, it is not enough as illustrated in the simulations in \cite{ashburner2005unified}. Second, although this method could accurately model the intensity inhomogeneity, the EM algorithm lacks spatial regularization and the additional regularization term for the bias parameter $\bm\beta$ is computationally time-consuming. Third, the registration in this model is performed by only matching the segmentation map with tissue probability prior, which is somehow a weak registration. Therefore it needs a preprocessing step to roughly align the images with the tissue probability prior to avoid strange results. At last, this model is embedded in the probabilistic framework of GMM, which makes it less straightforward to incorporate with spatial regularity. 
In this work, we will try to derive a more general framework to tackle these problems by considering the contributions of \cite{Jun2013Image}, introduced in the next \cref{sec:LGMM}.

\subsection{Image segmentation using a local GMM in a variational framework} \label{sec:LGMM}
A variational framework to solve GMM based methods was proposed in \cite{Jun2013Image}. Under this framework, the GMM based method can be extended more easily, regularization terms for instance can be added. To achieve this, the following conclusion of convex analysis \cite{rockafellar1970convex} is used.
\begin{theorem}[Commutativity of Log-sum operations]\label{thm:logsum}
 Given a function $\mathcal{A}_k(x)>0$, for any function $\mathcal{B}_k(x)>0$, we have
$$-\log\sum\limits_{k=1}^K\mathcal{A}_k(x)\mathrm{exp}[-\mathcal{B}_k(x)]=\mathop{\mathrm{min}}\limits_{\bm{ u}\in \mathbb{ U}}\left\{\sum\limits_{k=1}^K[\mathcal{B}_k(x)-\log\mathcal{A}_k(x)]u_k(x)+\sum\limits_{k=1}^Ku_k(x)\log u_k(x)\right\}$$
where $\mathbb{ U}=\left\{\bm{ u}(x)=(u_1(x),u_2(x),\cdots,u_K(x))\in [0,1]^K:\sum\limits_{k=1}^Ku_k(x)=1\right\}.$
\end{theorem}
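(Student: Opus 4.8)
The plan is to recognize the right-hand side minimization as a generalized relative entropy and reduce the identity to the log-sum inequality. First I would introduce the shorthand $w_k(x)=\mathcal{A}_k(x)\exp[-\mathcal{B}_k(x)]$, which is strictly positive by the hypotheses $\mathcal{A}_k>0$ and $\mathcal{B}_k>0$, so that the left-hand side is simply $-\log\sum_{k=1}^K w_k(x)$. The key algebraic observation is the identity $\mathcal{B}_k(x)-\log\mathcal{A}_k(x)=-\log w_k(x)$, which lets me collapse the bracketed functional into $\sum_{k=1}^K u_k\log(u_k/w_k)$, a Kullback--Leibler-type divergence between the probability vector $\bm{u}$ and the unnormalized weights $w_k$. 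After this rewriting the claim is exactly that this divergence, minimized over the simplex $\mathbb{U}$, equals $-\log\sum_k w_k$.

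Next I would carry out the minimization. The functional $g(\bm{u})=\sum_k u_k\log(u_k/w_k)$ is convex on $\mathbb{U}$, since each term $u_k\log u_k$ is convex and the remaining contributions $(-\log w_k)u_k$ are linear, so any interior stationary point is automatically the global minimizer. I would attach a Lagrange multiplier $\lambda$ to the constraint $\sum_k u_k=1$ and set $\partial_{u_k}\bigl[g+\lambda(\sum_j u_j-1)\bigr]=\log(u_k/w_k)+1+\lambda=0$, which forces $u_k$ to be proportional to $w_k$; enforcing the constraint yields the normalized minimizer $u_k^\star=w_k/\sum_j w_j$. Substituting back, every ratio $u_k^\star/w_k$ equals the common value $1/\sum_j w_j$, whence $g(\bm{u}^\star)=\log\bigl(1/\sum_j w_j\bigr)=-\log\sum_j w_j$, matching the left-hand side. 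An equivalent, multiplier-free route is to invoke the log-sum inequality directly, which gives $\sum_k u_k\log(u_k/w_k)\ge\bigl(\sum_k u_k\bigr)\log\bigl(\sum_k u_k/\sum_k w_k\bigr)=-\log\sum_k w_k$, with equality precisely when $u_k/w_k$ is constant in $k$, i.e.\ at $\bm{u}^\star$; this simultaneously delivers the lower bound and the attaining point.

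The only subtlety, rather than a genuine obstacle, is justifying that the minimum is attained in the interior of $\mathbb{U}$ and not on a face where some $u_k=0$. I would dispatch this by noting that $u_k\log u_k\to 0$ as $u_k\to 0^+$, so $g$ extends continuously to the compact set $\mathbb{U}$ and a minimizer therefore exists; and because every $w_k(x)>0$, the stationary point $\bm{u}^\star$ lies strictly inside the simplex. Convexity then promotes this interior stationary point to the unique global minimum. Since $x$ plays only the role of a fixed parameter throughout, the identity holds pointwise, which completes the argument.
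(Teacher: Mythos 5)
Your proof is correct. Note, however, that the paper never actually proves this statement: Theorem~2.1 is quoted as a known fact of convex analysis with a citation to Rockafellar, and even its generalization (Theorem~3.1, the Fenchel--Legendre conjugacy between the log-sum-exp function and negative entropy on the simplex, from which Theorem~2.1 follows by taking $\varepsilon=1$ and $z_k=\log\mathcal{A}_k(x)-\mathcal{B}_k(x)$) is explicitly ``left to the readers.'' So your argument is genuinely a different, and more self-contained, route. Where the paper leans on duality machinery --- the right-hand side is $-\mathcal{F}^{**}$ evaluated at $\bm z$, and the identity is biconjugacy of a convex function --- you reduce everything to an elementary computation: rewriting the bracket as $-\log w_k$ with $w_k=\mathcal{A}_k e^{-\mathcal{B}_k}$, recognizing the objective as $\sum_k u_k\log(u_k/w_k)$, and then either minimizing by Lagrange multipliers or invoking the log-sum (Gibbs) inequality, whose equality case hands you the minimizer $u_k^\star=w_k/\sum_j w_j$ for free. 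Your handling of the boundary (continuity of $u\log u$ at $0^+$, compactness of the simplex, interiority of $\bm u^\star$ from $w_k>0$) closes the one gap a Lagrange-multiplier argument alone would leave, and the log-sum-inequality route bypasses it entirely since the bound holds on all of $\mathbb{U}$. Two small remarks: the hypothesis $\mathcal{B}_k>0$ is never needed (only $\mathcal{A}_k>0$ matters, since $e^{-\mathcal{B}_k}>0$ regardless), and your explicit minimizer is exactly the posterior-membership formula that makes the variational scheme \cref{eq:VarGMM} reproduce the E-step of EM, so your proof also explains the algorithmic claim the paper makes right after invoking the theorem. What the paper's duality viewpoint buys instead is the scaled ($\varepsilon$-entropy) version and the convexity of $\mathcal{F}$ needed elsewhere; your argument, run with $z_k/\varepsilon$ in place of $z_k$, would recover that generalization as well.
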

By defining $\mathcal{A}_k(x):=\frac{\gamma_k}{\sqrt{2\pi}\sigma_k}, ~\mathcal{B}_k(x):=\frac{[I(x)-c_k]^2}{2\sigma_k^2},$ and applying \cref{thm:logsum}, we have that minimizing the negative log-likelihood function \eqref{eq:loglikelihood} is equivalent to minimizing functional 
\begin{equation}\label{eq:VarGMM}
\mathcal{E}(\Theta,\bm u)=\int_\Omega\sum\limits_{k=1}^K\left[\frac{[I(x)-c_k]^2}{2\sigma_k^2}-\log \frac{\gamma_k}{\sqrt{2\pi}\sigma_k}\right]u_k(x)dx+\int_\Omega\sum\limits_{k=1}^Ku_k(x)\log u_k(x)dx.
\end{equation}
This problem can be solved by alternately solving $\bm u$ and $\Theta$ with closed forms. One can easily check that the formulas for solving \eqref{eq:VarGMM} are the same as the EM algorithm for solving \eqref{eq:loglikelihood}.

In this variational form, the spatial regularization can be easily handled. In \cite{Jun2013Image}, the authors adopted a special smoothness term which is related to the boundary length of the clusters, called threshold dynamics regularization \cite{merriman1992diffusion}\cite{merriman1994motion}\cite{esedog2015threshold}, 
\begin{equation}\label{eq:regularization}
 \mathcal{R}(\bm u)=\int_\Omega\sum\limits_{k=1}^Ku_k(x)(\omega\ast(1-u_k))(x)dx.
\end{equation}
The symbol “*” stands for convolution and $\omega$ is a kernel function (usually a Gaussian kernel). Compared to the classical total variation (TV) regularization, a superiority of \cref{eq:regularization} is that it has no singularity and $\bm u$ still has an explicit updating formulation by using a linearization in some sense. 
 
Based on the variational framework \cref{eq:VarGMM} of GMM, combining with regularization \cref{eq:regularization} and a bias field for intensity inhomogeneity, an effective segmentation model is proposed in \cite{Jun2013Image}.

\subsection{The motivation of our work}\label{sec:Motivation}
As mentioned above, on the one hand, the unified segmentation model in \cite{ashburner2005unified} integrates GMM with intensity inhomogeneity and registration prior. However, under this probabilistic framework, it is not easy to combine with some excellent variational properties, such as spatial regularization. Besides, the registration in this model only considers the matching of tissue probability maps, without considering other information such as image intensity, thus making it a rough registration.

On the other hand, the local GMM model in \cite{Jun2013Image} combines GMM with spatial regularization and intensity inhomogeneity in a variational framework. Compared to \cite{ashburner2005unified}, this model can flexibly combine some additional excellent variational properties. However, it lacks spatial priors from an atlas, which is indispensable in some difficult segmentation tasks, such as thigh muscle segmentation. 

Therefore, a combination of both advantages will be more powerful. Inspired by the technique used in \cite{ashburner2005unified}, one can introduce registration into the variational framework in a similar way. First, let us see what the replacement used in \cite{ashburner2005unified} means under the variational framework. For simplicity, denote $\mathcal{H}(\bm u)=\int_\Omega\sum\limits_{k=1}^Ku_k(x)\log u_k(x)dx$, and then replace $\gamma_k$ with $\frac{b_k(x;\bm\alpha)}{\sum\limits_{l=1}^Kb_l(x;\bm\alpha)}$ in \cref{eq:VarGMM} (this is equivalent to the replacement used in \cite{ashburner2005unified}). One has 
\begin{equation}\nonumber
\begin{split}
&\tilde{\mathcal{E}}(\Theta,\bm u)
=\int_\Omega\sum\limits_{k=1}^K\left[\frac{[I(x)-c_k]^2}{2\sigma_k^2}-\log\frac{b_k(x;\bm\alpha)}{\left[\sum\limits_{l=1}^Kb_l(x;\bm\alpha)\right]\sqrt{2\pi}\sigma_k}\right]u_k(x)dx+\mathcal{H}(\bm u)\\
&=\int_\Omega\sum\limits_{k=1}^K\left[\frac{[I(x)-c_k]^2}{2\sigma_k^2}-\log\frac{1}{\sqrt{2\pi}\sigma_k}\right]u_k(x)dx-\int_\Omega\sum\limits_{k=1}^Ku_k(x)\log\frac{b_k(x;\bm\alpha)}{\sum\limits_{l=1}^Kb_l(x;\bm\alpha)}dx+\mathcal{H}(\bm u).
\end{split}
\end{equation}
It is not difficult to find that replacing $\gamma_k$ with $\frac{b_k(x;\bm\alpha)}{\sum\limits_{l=1}^Kb_l(x;\bm\alpha)}$ in the probabilistic framework is equivalent to modify the cross entropy metric in the variational framework, i.e. 
replacing $-\int_\Omega\sum\limits_{k=1}^Ku_k(x)\log\gamma_kdx$ with $-\int_\Omega\sum\limits_{k=1}^Ku_k(x)\log\frac{b_k(x;\bm\alpha)}{\sum\limits_{l=1}^Kb_l(x;\bm\alpha)}dx$. The former cross entropy only considers the volume of classes, not including information at pixels ($\gamma_k$ only depends on $k$ but not $x$). But the latter introduced a prior (deformed tissue probability map) at each pixel $x$ ($b_k(x;\bm\alpha)$ depends on both $k$ and $x$) by combining with a registration problem. 

With this observation, in the variational form, the registration model can be easily extended. In \cref{sec:proposed}, we will present a more general variational framework for joint segmentation and registration, and the two components are connected by a cross entropy metric, in which we use a ground truth segmentation $s_k$ serves as the tissue probability map $b_k$ here.

\subsection{Image registration}\label{sec:registration}
Before presenting the proposed framework, for completeness, we would like to introduce first image registration briefly. Image registration is a process to establish spatial correspondence between different images. The goal of registration is to estimate an optimal displacement field $T:\Omega\subset\mathbb{R}^d\rightarrow\mathbb{R}^d$, that maps a moving image $I_m: \Omega\rightarrow\mathbb{R}$ to a fixed image $I_f: \Omega\rightarrow\mathbb{R}$. Image registration can commonly be formulated as the following problem
$$\mathop{\mathrm{min}}\limits_T E_D(I_f, I_m\circ T)+\eta E_R(T),$$
where $I_m\circ T$ is the deformed image of $I_m$, $E_D$ is an image dissimilarity metric that quantifies the level of alignment between $I_f$ and $I_m\circ T$, and $E_R$ is a regularization term to favor specific properties that the user needs and tackle the difficulty of ill-posed problem, while $\eta$ controls the amount of regularization. A large number of researches have been dedicated to image registration, such as \cite{sederberg1986free}\cite{rueckert2015non}\cite{vishnevskiy2016isotropic}, to name a few. Comprehensive reviews of image registration techniques can be found in \cite{zitova2003image}\cite{sotiras2013deformable}. The selection of registration method depends on the anatomical properties of the tissue and the features of images to be registered. In this paper, we mainly focus on the registration of human thigh muscle images, which usually have small deformation between different subjects. Therefore, in this paper, we just choose a simple registration model to demonstrate the effectiveness of the proposed joint segmentation and registration framework. Specifically, we use the sum of squared difference (SSD) dissimilarity metric and the Tikhonov regularity, that is
\begin{equation}\label{eq:Reg}
\mathop{\mathrm{min}}\limits_{T} \frac{1}{2}\int_\Omega\left[I_f(x)-I_m(x+T(x))\right]^2dx+\frac{\eta}{2}\int_\Omega|\nabla T(x)|^2dx.
\end{equation}
In this case, any gradient-based solver can be used to minimize \cref{eq:Reg} by computing the derivatives of the smooth metric $E_D$ and regularizer $E_R$ with respect to the displacement field $T$. Further, we combine this registration model with Gaussian pyramid and parametric technique for robustness. In small deformation case, this registration method can usually provide a not bad spatial shape prior. More sophisticated registration models, such as diffeomorphic registration, feature based registration, may obtain further benefits and lead to better overall results but that is not the focus of this work, which is reserved for follow-up work.

\section{The proposed joint segmentation and registration model}\label{sec:proposedmodel}
When there is not enough boundary information to separate one region from another, one needs some prior spatial information. We adopt an automatic spatial prior by registering the image to be segmented to an image with ground truth segmentation. Let $I_m:\Omega \subset \mathbb{R}^d \rightarrow \mathbb{R}$ be such an image with ground truth segmentation $\bm{ s}=(s_{1},\cdots, s_{K})$, here $K$ stands for the number of clusters, and $s_k$ the segmentation map for the $k$-th cluster, $k=1,\cdots, K$. Denote $I_f:\Omega\subset\mathbb{R}^d\rightarrow\mathbb{R}$ the given image to be segmented. First, let's introduce the GMM based segmentation model for intensity inhomogeneity image.

\subsection{The GMM with intensity inhomogeneity}\label{LGMMpropose}

As introduced in the introduction and shown in \cref{fig:exampleIM}, thigh muscle MR images usually have the problem of intensity inhomogeneity, which is often caused by a smooth, spatially varying illumination, etc. Although it is not usually a problem for visual inspection, it can hinder the performance of automatic intensity-based image segmentation. According to its contributing factors, many algorithms assume that the bias is multiplicative. In this work, similar to \cite{Jun2013Image}, we model the intensity inhomogeneity image as
\begin{equation}\label{eq:biasimage}
I_f(x)=\beta(x)g(x),
\end{equation}
where $I_f(x)$ is the observed image, $g(x)$ represents the ground truth image and $\beta(x)$ is a smoothly varying intensity bias field. 

By assuming that the true image $g$ can be well segmented by the GMM like \eqref{eq:GMM}, with the relationship \eqref{eq:biasimage}, the probability of pixel intensity $I_f(x)$ can be modeled as
$$p(I_f(x))=\sum\limits_{k=1}^K\frac{\gamma_k}{\sqrt{2\pi}\sigma_k\beta(x)}\mathrm{exp}\left\{-\frac{[I_f(x)-c_k\beta(x)]^2}{2\sigma_k^2\beta^2(x)}\right\}.$$

Besides, since the bias field $\beta$ is varying smoothly, i.e. $\beta(x)\approx\beta(y)$ when $x$ is in a small neighborhood $O_y$ centered at $y$. The intensities of image $I_f$ within the neighborhood $O_y$ share the same probability density function parameterized by $\gamma_k, c_k, \sigma_k^2, \beta(y)$. By the independence assumption, we can get a local negative log-likelihood function in $O_y$
\begin{equation}\nonumber
\mathcal{L}_y(\Theta)=-\int_{O_y}\log\sum\limits_{k=1}^K\frac{\gamma_k}{\sqrt{2\pi}\sigma_k\beta(y)}\mathrm{exp}\left\{-\frac{[I_f(x)-c_k\beta(y)]^2}{2\sigma_k^2\beta^2(y)}\right\}dx,
\end{equation}
here $\Theta=\{\gamma_1,\cdots,\gamma_K,c_1,\cdots,c_K,\sigma_1^2,\cdots,\sigma_K^2, \beta(y)\}$.

If we consider different contributions of each $x\in O_y$ to the local cost function $\mathcal{L}_y$ in terms of the distance to centering point $y$, we can assign a weight for each pixel. A common choice is to use a Gaussian kernel $G_\sigma$ with an appropriate standard deviation $\sigma$, such that $G_\sigma(y-x)\approx0$ when $x\notin O_y$. Then the local cost function in $O_y$ becomes
\begin{equation}\label{eq:localcostfunction}
\mathcal{L}_y(\Theta)=-\int_{\Omega}G_\sigma(y-x)\log\sum\limits_{k=1}^K\mathrm{exp}\left\{\log\frac{\gamma_k}{\sqrt{2\pi}\sigma_k\beta(y)}-\frac{[I_f(x)-c_k\beta(y)]^2}{2\sigma_k^2\beta^2(y)}\right\}dx.
\end{equation}

Note that the above function \eqref{eq:localcostfunction} has the form of log-sum-exp.  Now we introduce the dual formulation of log-sum-exp functional, which can be regarded as a generalization of \cref{thm:logsum}.
\begin{theorem}\label{thm:logsumexp}
Let functional
$$\mathcal{F}(\bm {z})=\varepsilon\log\sum\limits_{k=1}^K\exp\left\{\frac{z_k}{\varepsilon}\right\},$$
then its Fenchel-Legendre transformation is
\begin{equation}\nonumber
\begin{array}{lll}
\mathcal{F}^{*}(\bm u)&=\max\limits_{\bm{z}}\left\{<\bm{z},\bm u>-\mathcal{F}(\bm{z})\right\}\\
&=\left\{
\begin{array}{lll}
\varepsilon\displaystyle\sum\limits_{k=1}^{K}u_{k}\log u_{k},& \bm u\in\mathbb{U},\\
+\infty,& else.
\end{array}
\right.
\end{array}
\end{equation}
where $\mathbb{ U}=\left\{\bm{ u}=(u_1,u_2,\cdots,u_K)\in [0,1]^K:\sum\limits_{k=1}^Ku_k=1\right\}.$ Moreover, $\mathcal{F}(\bm{z})$ is convex with respect to $\bm z$ and thus 
$$\mathcal{F}(\bm{z})=\mathcal{F}^{**}(\bm{z})=\max\limits_{\bm u\in\mathbb{U}}\left\{<\bm{z},\bm u>-\varepsilon\displaystyle\sum\limits_{k=1}^{K}u_k\log u_k\right\}.$$
\end{theorem}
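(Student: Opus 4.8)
The plan is to establish the three assertions in order: first compute the Fenchel--Legendre conjugate $\mathcal{F}^{*}$ explicitly, then verify that $\mathcal{F}$ is convex (and continuous, hence proper and lower semicontinuous), and finally invoke the Fenchel--Moreau theorem to identify $\mathcal{F}^{**}$ with $\mathcal{F}$ and read off the stated variational formula for $\mathcal{F}$ itself.

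For the conjugate, I would fix $\bm u$ and study $\phi(\bm z):=\langle\bm z,\bm u\rangle-\mathcal{F}(\bm z)$, whose supremum over $\bm z\in\mathbb{R}^{K}$ is $\mathcal{F}^{*}(\bm u)$, and first isolate the two mechanisms that force $\mathcal{F}^{*}=+\infty$. The key observation is the translation behaviour of $\mathcal{F}$: for any scalar $t$ one has $\mathcal{F}(\bm z+t\mathbf{1})=\mathcal{F}(\bm z)+t$, so $\phi(\bm z+t\mathbf{1})=\phi(\bm z)+t\,(\sum_{k}u_{k}-1)$. Hence if $\sum_{k}u_{k}\neq1$, letting $t\to\pm\infty$ drives $\phi\to+\infty$ and $\mathcal{F}^{*}(\bm u)=+\infty$; this produces the affine constraint $\sum_{k}u_{k}=1$. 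Similarly, if some $u_{j}<0$, sending $z_{j}\to-\infty$ with the remaining coordinates fixed makes the term $z_{j}u_{j}\to+\infty$ while $\mathcal{F}$ stays bounded, again giving $+\infty$; this yields $u_{j}\geq0$. Together these confine the finite domain to $\mathbb{U}$.

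On $\mathbb{U}$, first for $\bm u$ in the relative interior ($u_{k}>0$), I would maximize $\phi$ by stationarity. A direct computation gives $\partial_{z_{k}}\phi=u_{k}-p_{k}(\bm z)$, where $p_{k}(\bm z)=\exp(z_{k}/\varepsilon)/\sum_{l}\exp(z_{l}/\varepsilon)$ is the softmax, so the critical equation is simply $p_{k}(\bm z)=u_{k}$. Because $\mathcal{F}$ is invariant along $\mathbf{1}$, the maximizer is unique only up to translation, so I would pick the representative $z_{k}=\varepsilon\log u_{k}$, for which $p_{k}(\bm z)=u_{k}/\sum_{l}u_{l}=u_{k}$ and $\mathcal{F}(\bm z)=\varepsilon\log\sum_{k}u_{k}=0$. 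Substituting back yields $\phi=\langle\bm z,\bm u\rangle=\varepsilon\sum_{k}u_{k}\log u_{k}$, the claimed value. The boundary faces where some $u_{k}=0$ I would handle by continuity: the map $\bm u\mapsto\varepsilon\sum_{k}u_{k}\log u_{k}$ extends continuously to the closed simplex under the convention $0\log0=0$, and concavity of $\phi$ lets me pass to the limit to confirm the supremum equals the same expression there. For the remaining assertions, I would prove convexity of $\mathcal{F}$ by showing its Hessian equals $\varepsilon^{-1}\bigl(\diag(\bm p)-\bm p\,\bm p^{\top}\bigr)$ with $\bm p=\bm p(\bm z)$, which is positive semidefinite since it is the covariance matrix of the distribution $\bm p$ (equivalently, by Cauchy--Schwarz); moreover $\mathcal{F}$ is finite and continuous, hence proper and lower semicontinuous. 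The Fenchel--Moreau theorem then gives $\mathcal{F}=\mathcal{F}^{**}$, and since $\mathcal{F}^{*}$ is $+\infty$ off $\mathbb{U}$, the biconjugate reduces to the stated maximization over $\mathbb{U}$. (Alternatively one can bypass Fenchel--Moreau and verify $\mathcal{F}^{**}=\mathcal{F}$ by a second Lagrange computation: maximizing the concave objective $\langle\bm z,\bm u\rangle-\varepsilon\sum_{k}u_{k}\log u_{k}$ over $\mathbb{U}$ returns $u_{k}=p_{k}(\bm z)$ and recovers $\mathcal{F}(\bm z)$.)

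I expect the main obstacle to be the rigorous treatment of the nonsmooth and boundary aspects rather than the interior calculus: carefully justifying the $+\infty$ conclusions via the recession directions above, and---because the maximizer of $\phi$ is determined only modulo the direction $\mathbf{1}$---arguing that this degeneracy does not change the optimal value and that the formula persists on the faces $\{u_{k}=0\}$ by the continuity and limiting argument. The interior stationarity computation and the Hessian convexity check are routine by comparison.
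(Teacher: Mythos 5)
The paper never actually writes out a proof of this theorem: it states that the argument is ``a standard argument of convex optimization'' and explicitly leaves it to the reader, so there is no paper proof to compare against. Your proposal is a correct and essentially complete execution of that standard argument: the recession analysis along $\mathbf{1}$ (using $\mathcal{F}(\bm z+t\mathbf{1})=\mathcal{F}(\bm z)+t$) and along $-\bm e_j$ correctly forces $\sum_k u_k=1$ and $u_j\geq 0$; the interior stationarity computation via the softmax, with the representative $z_k=\varepsilon\log u_k$, yields the entropy value; the covariance form of the Hessian gives convexity; and Fenchel--Moreau, together with $\mathcal{F}^*\equiv+\infty$ off $\mathbb{U}$, gives the biconjugacy claim.

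One step deserves tightening, and you correctly flagged it as the delicate one. The phrase ``concavity of $\phi$ lets me pass to the limit'' conflates concavity of $\phi$ in $\bm z$ with a limit taken in $\bm u$; what is actually needed on the faces $\{u_k=0\}$ is either (a) lower semicontinuity and convexity of $\mathcal{F}^*$, which hold automatically because $\mathcal{F}^*$ is a supremum of affine functions of $\bm u$, combined with continuity of the entropy on the closed simplex, or (b) a direct uniform bound. Route (b) is cleanest and handles interior and boundary at once: for any $\bm u\in\mathbb{U}$ and any $\bm z$, Jensen's inequality applied to the concave logarithm gives
$$\langle\bm z,\bm u\rangle-\varepsilon\sum\limits_{k=1}^K u_k\log u_k=\varepsilon\sum\limits_{k:\,u_k>0}u_k\log\frac{e^{z_k/\varepsilon}}{u_k}\leq\varepsilon\log\sum\limits_{k:\,u_k>0}e^{z_k/\varepsilon}\leq\mathcal{F}(\bm z),$$
so $\mathcal{F}^*(\bm u)\leq\varepsilon\sum_k u_k\log u_k$ everywhere on $\mathbb{U}$, while your choice $z_k=\varepsilon\log u_k$ on $\{u_k>0\}$, with $z_j\to-\infty$ on the zero coordinates, attains this value in the limit; the same inequality, with equality at the softmax $\bm u=\bm p(\bm z)$, also proves $\mathcal{F}^{**}=\mathcal{F}$ directly without invoking Fenchel--Moreau. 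A last cosmetic point your construction exposes: on the boundary faces the ``$\max$'' over $\bm z$ in the theorem statement is really a supremum that is not attained, which is consistent with, and explained by, your limiting argument.
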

The proof of this theorem is a standard argument of convex optimization, we leave it to the readers. 

With \cref{thm:logsumexp}, problem \eqref{eq:localcostfunction} has the following variational form
\begin{equation}\nonumber
\begin{split}
\mathcal{L}_y(\Theta)=&\min\limits_{\bm u\in\mathbb{U}}
\int_\Omega G_\sigma(y-x)\left[\sum\limits_{k=1}^K\left(\frac{[I_f(x)-c_k\beta(y)]^2}{2\sigma_k^2\beta^2(y)}-\log\frac{\gamma_k}{\sqrt{2\pi}\sigma_k\beta(y)}\right)u_k(x)\right]dx\\
&+\int_\Omega\sum\limits_{k=1}^Ku_k(x)\log u_k(x)dx.
\end{split}
\end{equation} 

Considering the global information, then the total cost function can be written as
\begin{equation}\nonumber
\begin{split}
\mathcal{L}(\Theta)=&\min\limits_{\bm u\in\mathbb{U}}
\int_\Omega\int_\Omega G_\sigma(y-x)\left[\sum\limits_{k=1}^K\left(\frac{[I_f(x)-c_k\beta(y)]^2}{2\sigma_k^2\beta^2(y)}-\log\frac{\gamma_k}{\sqrt{2\pi}\sigma_k\beta(y)}\right)u_k(x)\right]dxdy\\
&+\int_\Omega\sum\limits_{k=1}^Ku_k(x)\log u_k(x)dx.
\end{split}
\end{equation} 
here parameter set $\Theta=\{\gamma_1,\cdots,\gamma_K,c_1,\cdots,c_K,\sigma_1^2,\cdots,\sigma_K^2, \cup_y\beta(y)\}$.

In this formulation, we can easily combine with a regularization like \cref{eq:regularization}. Finally, we have the following image segmentation model
\begin{equation}\label{eq:ProposedSeg}
\min\limits_{\Theta, \bm u\in\mathbb{U}} \hat{\mathcal{E}}(\Theta, \bm u)=\mathcal{F}(\Theta, \bm u)+\lambda\mathcal{R}(\bm u), 
\end{equation}
where
\begin{equation}\nonumber
\begin{split}
\mathcal{F}(\Theta, \bm u)=&
\int_\Omega\int_\Omega G_\sigma(y-x)\left[\sum\limits_{k=1}^K\left(\frac{[I_f(x)-c_k\beta(y)]^2}{2\sigma_k^2\beta^2(y)}-\log\frac{\gamma_k}{\sqrt{2\pi}\sigma_k\beta(y)}\right)u_k(x)\right]dxdy\\
&+\int_\Omega\sum\limits_{k=1}^Ku_k(x)\log u_k(x)dx,
\end{split}
\end{equation} 
$$\mathcal{R}(\bm u)=\int_\Omega\sum\limits_{k=1}^Ku_k(x)(\omega\ast(1-u_k))(x)dx.$$

The minimizer of $\hat{\mathcal{E}}(\Theta, \bm u)$ can be computed by the following alternating algorithm
\begin{equation}\label{eq:iteration1}
\left\{
  \begin{array}{ll}
    \Theta^{t+1}=\mathop{\mathrm{arg~min}}\limits_{\Theta}~\mathcal{F}(\Theta, \bm u^t), & \hbox{}\\
    \bm u^{t+1}=\mathop{\mathrm{arg~min}}\limits_{\bm u\in\mathbb{U}}~\mathcal{F}(\Theta^{t+1}, \bm u)+\lambda\mathcal{R}(\bm u; \bm u^t), & \hbox{} 
  \end{array} \right.
\end{equation}
where $t=0,1,2,\cdots$ stands for the iteration number, and
\begin{equation}\nonumber
\mathcal{R}(\bm u; \bm u^t)=\int_\Omega\sum\limits_{k=1}^Ku_k(x)(\omega\ast(1-2u^t_k))(x)dx
\end{equation} 
is the linearization of $\mathcal{R}(\bm u)$ at $\bm u^t$. We can get the following energy descent theorem.

\begin{theorem}[Energy descent]\label{thm:EnergyDescent}
The sequence $(\Theta^t, \bm u^t)$ produced by iteration scheme \cref{eq:iteration1} satisfies
$$\hat{\mathcal{E}}(\Theta^{t+1}, \bm u^{t+1})\leq\hat{\mathcal{E}}(\Theta^{t}, \bm u^{t}).$$
\end{theorem}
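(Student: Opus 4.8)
The plan is to decompose the combined update \cref{eq:iteration1} into its two constituent half-steps and to show that each one separately does not increase the energy $\hat{\mathcal{E}}$, so that chaining the two inequalities yields the claim. Throughout I write the energy as $\hat{\mathcal{E}}(\Theta,\bm u)=\mathcal{F}(\Theta,\bm u)+\lambda\mathcal{R}(\bm u)$.

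The $\Theta$-step is immediate. Since $\Theta^{t+1}$ is defined as a minimizer of $\Theta\mapsto\mathcal{F}(\Theta,\bm u^t)$, we have $\mathcal{F}(\Theta^{t+1},\bm u^t)\le\mathcal{F}(\Theta^t,\bm u^t)$; adding the term $\lambda\mathcal{R}(\bm u^t)$, which does not depend on $\Theta$, to both sides gives $\hat{\mathcal{E}}(\Theta^{t+1},\bm u^t)\le\hat{\mathcal{E}}(\Theta^t,\bm u^t)$. It therefore remains to establish $\hat{\mathcal{E}}(\Theta^{t+1},\bm u^{t+1})\le\hat{\mathcal{E}}(\Theta^{t+1},\bm u^t)$, which is where the linearization of $\mathcal{R}$ enters and where the real work lies.

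The crux is to recognize the $\bm u$-step as a majorize--minimize iteration. The regularizer splits as $\mathcal{R}(\bm u)=\int_\Omega\sum_k u_k(\omega\ast 1)\,dx-\int_\Omega\sum_k u_k(\omega\ast u_k)\,dx$, where the first term is linear in $\bm u$ and the second is the quadratic form $\sum_k\int_\Omega u_k(\omega\ast u_k)\,dx$. When $\omega$ is a symmetric, positive-semidefinite kernel --- which holds for the Gaussian kernel used here, since its Fourier transform is everywhere positive --- this quadratic form is convex, and hence $\mathcal{R}$ is concave in $\bm u$. The hard part of the argument is precisely this positive-semidefiniteness (equivalently, concavity) verification, together with the bookkeeping that identifies $\mathcal{R}(\bm u;\bm u^t)$ with the first-order expansion of $\mathcal{R}$ at $\bm u^t$ up to an additive constant: a direct variation shows that the gradient of $\mathcal{R}$ at $\bm u^t$ has $k$-th component $\omega\ast(1-2u_k^t)$, so that $\mathcal{R}(\bm u;\bm u^t)-\mathcal{R}(\bm u^t;\bm u^t)=\int_\Omega\sum_k(\omega\ast(1-2u_k^t))(u_k-u_k^t)\,dx$ is exactly the linear part of the Taylor expansion. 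Concavity then supplies the tangent-majorant inequality $\mathcal{R}(\bm u)\le\mathcal{R}(\bm u^t)+\mathcal{R}(\bm u;\bm u^t)-\mathcal{R}(\bm u^t;\bm u^t)$ for every $\bm u\in\mathbb{U}$.

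With this majorant in hand the descent follows by chaining. By minimality of $\bm u^{t+1}$ for the surrogate energy $\mathcal{F}(\Theta^{t+1},\cdot)+\lambda\mathcal{R}(\cdot\,;\bm u^t)$ over $\mathbb{U}$, comparing against the feasible point $\bm u^t$ gives $\mathcal{F}(\Theta^{t+1},\bm u^{t+1})+\lambda\mathcal{R}(\bm u^{t+1};\bm u^t)\le\mathcal{F}(\Theta^{t+1},\bm u^t)+\lambda\mathcal{R}(\bm u^t;\bm u^t)$. Starting from $\hat{\mathcal{E}}(\Theta^{t+1},\bm u^{t+1})=\mathcal{F}(\Theta^{t+1},\bm u^{t+1})+\lambda\mathcal{R}(\bm u^{t+1})$, I would first apply the tangent-majorant inequality at $\bm u=\bm u^{t+1}$ to bound $\lambda\mathcal{R}(\bm u^{t+1})$ from above, then substitute the surrogate-minimality inequality to eliminate $\mathcal{F}(\Theta^{t+1},\bm u^{t+1})+\lambda\mathcal{R}(\bm u^{t+1};\bm u^t)$; the two copies of $\lambda\mathcal{R}(\bm u^t;\bm u^t)$ then cancel, leaving $\hat{\mathcal{E}}(\Theta^{t+1},\bm u^{t+1})\le\mathcal{F}(\Theta^{t+1},\bm u^t)+\lambda\mathcal{R}(\bm u^t)=\hat{\mathcal{E}}(\Theta^{t+1},\bm u^t)$. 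Combining this with the $\Theta$-step inequality yields $\hat{\mathcal{E}}(\Theta^{t+1},\bm u^{t+1})\le\hat{\mathcal{E}}(\Theta^t,\bm u^t)$. The only genuine subtlety is the concavity of $\mathcal{R}$; once that is secured, the remainder is the standard majorize--minimize telescoping.
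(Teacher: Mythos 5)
Your proof is correct and takes essentially the same route as the paper's own: both rest on the two minimality inequalities of \cref{eq:iteration1} plus concavity of $\mathcal{R}$ for a symmetric positive-semidefinite kernel $\omega$, which makes the linearization a tangent majorant. Your bookkeeping is in fact more careful than the paper's: the paper's proof opens with the identity $\mathcal{R}(\bm u^{t+1};\bm u^{t+1})=\mathcal{R}(\bm u^{t+1})$, which is false as written since $\mathcal{R}(\bm u;\bm u)=\mathcal{R}(\bm u)-\int_\Omega\sum_{k=1}^K u_k(\omega\ast u_k)\,dx$, whereas your version---matching $\mathcal{R}(\cdot\,;\bm u^t)$ to the first-order expansion of $\mathcal{R}$ at $\bm u^t$ only up to an additive constant and cancelling the two copies of $\lambda\mathcal{R}(\bm u^t;\bm u^t)$ explicitly---is the correct repair of that step.
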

\begin{proof} 
See \cref{sec:proof}.
\end{proof} 

Compared with segmentation model \cref{eq:UnifiedSeg} in \cite{ashburner2005unified}, the proposed segmentation model \cref{eq:ProposedSeg} integrates GMM with a spatial regularization, and this model doesn't need an extra regularization term for the bias field $\beta$ by using an Gaussian kernel controls the smoothness of the bias field.
 
\subsection{The proposed joint model}\label{sec:proposed}
With the segmentation model introduced above and the motivation described in \cref{sec:Motivation}, our proposed variational framework for joint segmentation and registration is to optimize the following energy function 
\begin{equation}\label{eq:proposed}
\mathcal{E}(\Theta,\bm{ u},T)=\mathcal{E}_{\mathrm{Seg}}(\Theta, \bm{ u})+\mathcal{E}_{\mathrm{CE}}(\bm{ u},\bm{ s}\circ T)+\mathcal{E}_{\mathrm{Reg}}(\Theta, T),
\end{equation}
where
\begin{equation}\nonumber
\begin{split}
\mathcal{E}_{\mathrm{Seg}}(\Theta, \bm{ u})=&
\int_\Omega\int_\Omega G_\sigma(y-x)\left[\sum\limits_{k=1}^K\left(\frac{[I_f(x)-c_k\beta(y)]^2}{2\sigma_k^2\beta^2(y)}-\log\frac{\gamma_k}{\sqrt{2\pi}\sigma_k\beta(y)}\right)u_k(x)\right]dxdy\\
&+\varepsilon\int_\Omega\sum\limits_{k=1}^Ku_k(x)\log u_k(x)dx+\lambda\int_\Omega\sum\limits_{k=1}^Ku_k(x)(\omega\ast(1-u_k))(x)dx,
\end{split}
\end{equation} 
$$\mathcal{E}_{\mathrm{CE}}(\bm{ u},\bm{ s}\circ T)=-\xi\int_\Omega\sum\limits_{k=1}^Ku_k(x)\log\frac{s_k(x+T(x))}{\sum\limits_{l=1}^Ks_{l}(x+T(x))}dx,$$
$$\mathcal{E}_{\mathrm{Reg}}(\Theta, T)=\frac{\zeta}{2}\int_\Omega\left[\frac{I_f(x)}{\beta(x)}-I_m(x+T(x))\right]^2dx+\frac{\eta}{2}\int_\Omega|\nabla T(x)|^2dx,$$
and $\Theta=\{\gamma_1,\cdots,\gamma_K,c_1,\cdots,c_K,\sigma_1^2,\cdots,\sigma_K^2,\cup_y\beta(y)\}$. Parameters $\varepsilon, \lambda, \xi, \zeta, \eta$ are used to balance the weight of different terms. 

In this model, the first part $\mathcal{E}_{\mathrm{Seg}}(\Theta, \bm{ u})$ is designed to segment the fixed image $I_f$ through an extended GMM, which integrates intensity inhomogeneity and spatial regularization into the classical GMM. After explaining GMM based methods in this variational formulation, it is more flexible to cooperate with other variational models, such as image registration, as shown in this framework.

The third part $\mathcal{E}_{\mathrm{Reg}}(\Theta, T)$ is a registration model that maps the moving image $I_m$ to the intensity corrected image $I_f/\beta$. Here the registration model can come in various and be chosen according to the characteristics of the specific dataset. In this work, we just use a simple registration model ( i.e. composed of a modified SSD and Tikhonov regularization) as an example to show the effectiveness of this joint framework. 

The second part $\mathcal{E}_{\mathrm{CE}}(\bm{ u},\bm{ s}\circ T)$ plays a bridge role between segmentation and registration, which enforces the segmentation map $\bm u$ and the normalized deformed tissue probability map $\bm s\circ T$ as similar as possible under the cross entropy metric. For the segmentation, $\bm s\circ T$ can be viewed as a prior to guid the segmentation process. In turn, the segmentation map $\bm u$ can promote the registration procedure by providing a reliable geometric structure, since the registration procedure considers not only the alignment of image intensities but also segmentation maps. Here, we choose cross entropy as the similarity metric for several reasons. First, as interpreted in \cref{sec:Motivation}, this cross entropy constraint is equivalent to replacing the constant mixture ratio $\gamma_k$ in GMM by tissue probability priors $\frac{s_k(x+T(x))}{\sum\limits_{l=1}^Ks_l(x+T(x))}$, which has a close relationship with GMM based segmentation model. Second, it can well measure the similarity between distributions and is widely used in machine learning. At last, the cross entropy is linear with respect to the segmentation map $\bm u$ and derivable with respect to the displacement field $T$, thus won't bring additional difficulties in solving $\bm u$ and $T$. 

Overall, we propose a joint model for segmentation and registration to leverage their positive mutual influence. Compared with other joint models, the proposed model has some advantages. First, for the segmentation component, it efficiently combines GMM with intensity correction (doesn't need an extra regularization for the bias field), spatial regularization (based on variational regularization) and adaptive spatial priors from variational based registration. This model integrates the merits of probabilistic based and variational based methods, and can be used for multi class segmentation. Second, for the registration component, two levels of matching are considered, i.e. geometrical structure matching (between the segmentation map and the normalized deformed probability map) and intensity matching (between the intensity corrected image and the moving image). Thus this registration can provide more reliable priors and further promote the segmentation. 
Therefore, with intensity correction, spatial regularization, and spatial priors from joint registration, this proposed model has the potential to find an accurate segmentation of images which are degraded by intensity inhomogeneity, noise, and weak boundaries, such as thigh muscle MR images.

\subsection{Optimization}\label{sec:optimization}
In this section, the optimization scheme for minimizing objective function \cref{eq:proposed} is described. This is implemented within a constraint optimization framework, using the following iterated conditional modes,
$$\left\{
  \begin{array}{ll}
    \Theta^{t+1}=\mathop{\mathrm{arg~min}}\limits_{\Theta}~\mathcal{E}(\Theta,\bm u^{t},T^{t}), & \hbox{(i)}\\
    \bm u^{t+1}=\mathop{\mathrm{arg~min}}\limits_{\bm u\in\mathbb{U}}~\mathcal{E}(\Theta^{t+1},\bm u, T^t), & \hbox{(ii)} \\
    T^{t+1}=\mathop{\mathrm{arg~min}}\limits_{T}~\mathcal{E}(\Theta^{t+1},\bm u^{t+1}, T), & \hbox{(iii)} 
  \end{array}
\right.$$
where $t$ is the iteration number.

\textbf{(i)} The $\Theta$-subproblem is to solve the following optimization problem, which is similar to the M-step in EM algorithm.
\begin{equation}\nonumber
\begin{split}
\mathop{\mathrm{arg~min}}\limits_{\Theta}~&
\int_\Omega\int_\Omega G_\sigma(y-x)\left[\sum\limits_{k=1}^K\left(\frac{[I_f(x)-c_k\beta(y)]^2}{2\sigma_k^2\beta^2(y)}-\log\frac{\gamma_k}{\sqrt{2\pi}\sigma_k\beta(y)}\right)u^t_k(x)\right]dxdy\\
&+\frac{\zeta}{2}\int_\Omega\left[\frac{I_f(x)}{\beta(x)}-I_m(x+T^t(x))\right]^2dx. 
\end{split}
\end{equation} 
Recall that $\sum\limits_{k=1}^K\gamma_k^{t+1}=1$, one can easily get the updating formulations, 
\begin{equation}\label{eq:SolveTheta}
\left\{
  \begin{array}{ll}
    \gamma_k^{t+1}=\frac{\int_\Omega u_k^t(x)dx}{\int_\Omega 1 dx},\\
    \quad\\
    c_k^{t+1}=\frac{\int_\Omega u_k^t(x)I_f(x)\int_\Omega G_\sigma(y-x)\frac{1}{\beta^t(y)}dydx}{\int_\Omega u_k^t(x)dx},\\
    \quad\\
    (\sigma_k^2)^{t+1}=\frac{\int_\Omega u_k^t(x)\int_\Omega G_\sigma(y-x)\left[\frac{I_f(x)}{\beta^t(y)}-c_k^{t+1}\right]^2dydx}{\int_\Omega u_k^t(x)dx}, \\
    \quad\\
    \beta^{t+1}(y)=\frac{-[s^{t+1}(y)+p^{t+1}(y)]+\sqrt{[s^{t+1}(y)+p^{t+1}(y)]^2+4[v^{t+1}(y)+\zeta I_f^2(y)]}}{2},
  \end{array}
\right.
\end{equation}
where
\begin{equation}\nonumber
\begin{split}
&s^{t+1}(y)=\sum\limits_{l=1}^K\frac{c_l^{t+1}}{(\sigma_l^2)^{t+1}}\int_\Omega G_\sigma(y-x)I_f(x)u_l^t(x)dx,\\
&v^{t+1}(y)=\sum\limits_{l=1}^K\frac{1}{(\sigma_l^2)^{t+1}}\int_\Omega G_\sigma(y-x)I_f^2(x)u_l^t(x)dx,\\
&p^{t+1}(y)=\zeta I_f(y)I_m(y+T^t(y)).
\end{split}
\end{equation} 

\textbf{(ii)} The $\bm u$-subproblem is to optimize the following problem, which can be considered as a regularized E-step with priors in EM algorithm.
\begin{small}
\begin{equation}\nonumber
\begin{split}
\mathop{\mathrm{arg~min}}\limits_{\bm u\in\mathbb{U}}~&\int_\Omega\int_\Omega G_\sigma(y-x)\left[\sum\limits_{k=1}^K\left(\frac{[I_f(x)-c_k^{t+1}\beta^{t+1}(y)]^2}{2(\sigma_k^{t+1})^2(\beta^{t+1}(y))^2}-\log\frac{\gamma^{t+1}_k}{\sqrt{2\pi}\sigma^{t+1}_k\beta^{t+1}(y)}\right)u_k(x)\right]dxdy\\
&+\varepsilon\int_\Omega\sum\limits_{k=1}^Ku_k(x)\log u_k(x)dx+\lambda\int_\Omega\sum\limits_{k=1}^Ku_k(x)(\omega\ast(1-2u_k^t))(x)dx\\
&-\xi\int_\Omega\sum\limits_{k=1}^Ku_k(x)\log\frac{s_k(x+T^t(x))}{\sum\limits_{l=1}^Ks_{l}(x+T^t(x))}dx.
\end{split}
\end{equation} 
\end{small}
Here we use the linearization of the regularization term. Now the $\bm u$-subproblem is convex with respect to $\bm u$, and one can easily get the updating formulation
\begin{equation}\label{eq:SolveU}
u_k^{t+1}(x)=\frac{q_k^{t+1}(x)}{\sum\limits_{\ell=1}^Kq_\ell^{t+1}(x)},
\end{equation}
where
\begin{small}
\begin{equation}\nonumber
\begin{split}
q_k^{t+1}(x)=&\left(\frac{\gamma_k^{t+1}}{\sigma_k^{t+1}}\right)^\frac{1}{\varepsilon}\left(\frac{s_k(x+T^t(x))}{\sum\limits_{l=1}^K s_l(x+T^t(x))}\right)^\frac{\xi}{\varepsilon}\mathrm{exp}\left\{-\frac{1}{2\varepsilon(\sigma_k^{t+1})^{2}}\int_\Omega G_\sigma(y-x)\left[\frac{I_f(x)}{\beta^{t+1}(y)}-c_k^{t+1}\right]^2dy \right. \\
&\left. -\frac{\lambda}{\varepsilon} (\omega\ast(1-2u_k^t))(x)\right\}.
\end{split}
\end{equation}
\end{small}

\textbf{(iii)} The $T$-subproblem is to solve the following registration problem.
\begin{small}
\begin{equation}\label{eq:Tsubproblem}
\begin{split}
\mathop{\mathrm{arg~min}}\limits_{T}~&-\xi\int_\Omega\sum\limits_{k=1}^Ku^{t+1}_k(x)\log\frac{s_k(x+T(x))}{\sum\limits_{l=1}^Ks_{l}(x+T(x))}dx+\frac{\zeta}{2}\int_\Omega\left[\frac{I_f(x)}{\beta^{t+1}(x)}-I_m(x+T(x))\right]^2dx\\
&+\frac{\eta}{2}\int_\Omega|\nabla T(x)|^2dx.
\end{split}
\end{equation} 
The partial derivative with respect to $T(x)$ is
\begin{equation}\label{eq:SolveT}
\begin{split}
\frac{\partial \mathcal{E}(\Theta^{t+1},\bm u^{t+1}, T)}{\partial T(x)}&=
-\xi\sum\limits_{k=1}^Ku^{t+1}_k(x)\frac{\nabla s_k(x+T(x))}{s_k(x+T(x))}+\xi\frac{\sum\limits_{l=1}^K \nabla s_{l}(x+T(x))}{\sum\limits_{l=1}^K s_{l}(x+T(x))}\\
&+\zeta\left[I_m(x+T(x))-\frac{I_f(x)}{\beta^{t+1}(x)}\right]\nabla I_m(x+T(x))-\eta\triangle T(x).
\end{split}
\end{equation}
\end{small}
It is a smooth optimization problem that can be solved using any gradient-based optimization technique. In this work, we solve it with the quasi-Newton limited-memory BFGS (LBFGS) method as in \cite{vishnevskiy2016isotropic}. Since non-parametric registration methods are highly susceptible to local minima and not robust during optimization, we employ a parametric method to reduce the dimensionality of the registration optimization problem, thus providing physically plausible transformations in robust schemes with large displacement capture ranges. Concretely, as in \cite{vishnevskiy2016isotropic} and \cite{vishnevskiy2014total}, we parametrize the displacement field $T$ using bilinear (for 2D images) or trilinear (for 3D images) interpolation with displacements $D$ on control points that are placed with $N$-pixel spacing. Then impose regularization on the displacement control grid points $D$ instead of the displacement field $T$ itself. In addition, Gaussian pyramid is used to downsample images, and the registration is implemented from coarse-to-fine. At each consecutive image level, the displacement is initialized by interpolating from the previous level’s control grid displacements. We implement the registration details based on a modified version of the MATLAB toolbox \texttt{pTVreg} \cite{toolbox-web}. 

In this optimization process, the first two steps (i) and (ii) are to solve the segmentation problem, and the last step (iii) is to solve the registration problem. Therefore, this process can be considered as alternating update segmentation and registration, and is summarized in \cref{alg:algorithm}.

\renewcommand{\algorithmicrequire}{\textbf{Input:}}
\renewcommand{\algorithmicensure}{\textbf{Output:}}
\begin{algorithm}[htb]
\caption{The proposed joint segmentation and registration algorithm.}\label{alg:algorithm}
\begin{algorithmic}
\REQUIRE{$I_f$, $I_m$, $s$, $\varepsilon$, $\lambda$, $\xi$, $\zeta$, $\eta$;
 $L$ - number of image pyramid levels; $M_{\mathrm{iter}}$ - max number of alternate iterations, $M_{\mathrm{LGMM}}$ - max number of LGMM iterations, $M_{\mathrm{LBFGS}}$ - max number of LBFGS iterations, $Tol$ - tolerance error.}
\ENSURE{Segmentation map $\bm u$ or deformed tissue probability map $ s\circ T$.}
\renewcommand{\algorithmicensure}{\textbf{Initialization:}}
\ENSURE{$ u^0$ by K-means, $\beta^0=1$, $T^0=0$.}
\STATE{Compute objective function $\mathcal{E}^0$.}
\FOR{$t=0,1,2,\cdots, M_{\mathrm{iter}}-1$}
\STATE{$ u^{t+1,0}= u^{t}$, $\beta^{t+1,0}=\beta^t$.}
\FOR{$i=0,1,2,\cdots, M_{\mathrm{LGMM}}-1$}
\STATE{Compute $\Theta^{t+1,i+1}$ by \cref{eq:SolveTheta} with $ u^{t+1,i}$, $\beta^{t+1,i}$, $T^t$.}
\STATE{Compute $ u^{t+1,i+1}$ by \cref{eq:SolveU} with $ u^{t+1,i}$, $\Theta^{t+1,i+1}$, $T^t$.}
\ENDFOR
\STATE{$ u^{t+1}= u^{t+1,M_{\mathrm{LGMM}}}$, $\beta^{t+1}=\beta^{t+1,M_{\mathrm{LGMM}}}$.}
\STATE{Initialize parametric displacement field $D^{t+1,L}$(the coarsest level) with $T^t$.}
\FOR{$j=1,\cdots,L$}
\STATE{Compute $I_m^j$: ($L-j+1$)-th Gaussian pyramid level of $I_m$.}
\STATE{Compute $I_f^j$: ($L-j+1$)-th Gaussian pyramid level of $\frac{I_f}{\beta^{t+1}}$.}
\STATE{Compute $s^j$: ($L-j+1$)-th Gaussian pyramid level of $s$.}
\STATE{Compute $u^{t+1,j}$: ($L-j+1$)-th Gaussian pyramid level of $u^{t+1}$.}
\STATE{Compute $D^{t+1,L-j+1}$: solution of \cref{eq:Tsubproblem} with LBFGS optimization scheme (using $I_m^j$, $I_f^j$, $s^j$, $u^{t+1,j}$, $M_{\mathrm{LBFGS}}$).}
\STATE{Initialize $D^{t+1,L-j}$: linealy upsample $D^{t+1,L-j+1}$.}
\ENDFOR
\STATE{Compute $T^{t+1}$ with parametric displacement field $D^{t+1,1}$.}
\STATE{Compute objective function $\mathcal{E}^{t+1}$.}
\STATE{Convergence check. If $\frac{(\mathcal{E}^{t+1}-\mathcal{E}^t)^2}{(\mathcal{E}^t)^2}< Tol$, break.}
\ENDFOR
\STATE{Segmentation function:
$$label(I_f(x))=\mathop{\mathrm{arg~max}}\limits_{1\leq k\leq K}\{ u_{k}(x)\} ~~ \mathrm{or} ~~label(I_f(x))=\mathop{\mathrm{arg~max}}\limits_{1\leq k\leq K}\{ s_{k}(x+T(x))\}.$$}
\RETURN{Segmentation result $label(I_f(x))$.}
\end{algorithmic}
\end{algorithm}

\subsection{Modification for thigh muscle segmentation}\label{sec:ModForMuscle}
In our proposed model \cref{eq:proposed}, the cross entropy metric enforces the segmentation map $\bm{u}$ and the normalized deformed tissue probability map $\bm{s}\circ T$ as similar as possible for every class $k$ ($k=1,\cdots,K$). 
For different applications, the proposed model can be modified to accommodate the specific needs.

In thigh muscle MR images, as shown in \cref{fig:exampleIM}, there are three main intensity classes: ``dark" (background, cortical bone), ``gray" (muscle), and ``light" (fat, bone marrow) \cite{andrews2015generalized}. In our proposed model, the segmentation model $\mathcal{E}_{\mathrm{Seg}}$ is an intensity based method. So it is more suitable to classify a muscle image into three classes (dark, gray, light) but not four classes (one for each of the three muscle groups and one for others) which is actually the goal of muscle segmentation. While for the registration model, the classification is based on a deformed atlas as the shape prior, thus it can distinguish different muscles and classify a muscle image into the desired four classes. Besides, in the proposed model, segmentation and registration play different roles. In some sense, the role of the segmentation model is to obtain accurate segmentation boundaries, and the registration model is to provide accurate segmentation categories, which can be illustrated by the later simulations in \cref{sec:TestSynthetic}. Therefore, the classification numbers of segmentation and registration could be different. In \cref{sec:TestMuscle}, for the thigh muscle segmentation, we set the classification number to be three for segmentation, and four for registration. 

 In order to achieve this, we need to modify the cross entropy metric in the proposed model \cref{eq:proposed} to the following form,   
$$\mathcal{E}_{\mathrm{CE}}(u, s\circ T)=-\xi\int_\Omega\sum\limits_{k=1}^2p_k(x; u)\log q_k(x; s\circ T)dx,$$ 
where 
$$p_1(x; u)=u_2(x), ~~ p_2(x; u)=u_1(x)+u_3(x),$$  
$$q_1(x; s\circ T)=\frac{s_1(x+T(x))+s_2(x+T(x))+s_3(x+T(x))}{\sum\limits_{l=1}^4s_{l}(x+T(x))},~~ q_2(x; s\circ T)=\frac{s_4(x+T(x))}{\sum\limits_{l=1}^4s_{l}(x+T(x))}.$$ 
Here, according to intensity, we set $u_1$, $u_2$, $u_3$ to be the segmentation maps for ``dark", ``gray" and ``light", respectively. Meanwhile, we set $s_1+s_2+s_3$, $s_4$ to be the ground truth segmentation maps of the moving image for ``the whole muscle" and ``others", respectively. Therefore, the modified cross entropy metric guarantees the similarity of the segmentation map and normalized deformed atlas for ``the whole muscle" and ``others", not every classes. With this modification, it is not difficult to get all of the updating formulations. This also reflects the flexibility of our proposed model in specific applications.

\section{Numerical experiments}\label{sec:experiments}

\subsection{Parameter selection and implementation details}\label{sec:parameters}
For the segmentation component, the Gaussian kernel $G_\sigma$ in the data fidelity term controls the smoothness of the bias field $\bm \beta$. Bigger $\sigma$ leads to smoother bias field. The entropy parameter $\varepsilon$ controls the smoothness of the segmentation map $\bm u$. Small $\varepsilon$ tends to lead to a binary segmentation, and large $\varepsilon$ tends to lead to a smooth segmentation according to the maximum entropy principle. The kernel function $\omega$ in the regularization term controls the scale of the spatial regularization. The regularization parameter $\lambda$ balances the data fidelity term, the regularization term and the cross entropy term in the segmentation procedure, which is important for obtaining a desired segmentation result. In this paper, without specification, we set $\sigma=20$, $\varepsilon=1$ and the kernel function $\omega$ to be a $7\times7$ matrix with equal weights. The regularization parameter $\lambda$ is image dependent, usually between $0.001$ and $0.01$. 

The cross entropy parameter $\xi$ acts as a bridge between segmentation and registration. According to our experience, the value between $0.001$ and $0.03$ is suitable for most of the cases. 

For the registration component, the fidelity parameter $\zeta$ and the regularization parameter $\eta$ control the weight of the modified SSD and the regularization term respectively. In this paper, we set $\zeta=1$ and make the value of $\eta$ depend on images, usually between $0.001$ and $0.05$. For the Gaussian pyramid multi-scales, the number of levels $L$ can be determined by the size of the input images. In our experiments, we set $L=8$ and $0.7$ for downscaling factor. For the displacement parameterization in each level, the control grid spacing (the gap in pixels between knots) controls the flexibility of the displacement field, and we set it to be $4$ and use bilinear interpolation for 2D (or trilinear for 3D) image warping. 

 In our experiments, both the moving and the fixed images are converted to double precision and mapped to $[0,1]$. We set the maximum iteration number $M_\mathrm{iter}=10$, $M_{\mathrm{LGMM}}=100$, $M_{\mathrm{LBFGS}}=100$, and the tolerance $Tol=10^{-5}$. Although the proposed model is a joint segmentation and registration work, which is more complex and more difficult to implement than a sequential one (registration followed by segmentation), the running time is still acceptable due to several reasons. First, the convolution appears in the equations can be efficiently calculated by FFT. Second, note that the proposed algorithm has inner and outer loops. Most of the computation time is spent on dealing with the first outer loop of segmentation and registration. The following segmentation and registration processes take little time, as only a few number of inner loops are needed when given good initializations. The computation is done on an Intel processor (Core i5, 2GHz, 16GB RAM) with non-optimized MATLAB code. The average computation time is about $2$ minutes for a 3D MR image of size $256\times256\times15$, which is much less than manual segmentation.

\subsection{Test on synthetic images}\label{sec:TestSynthetic}

\begin{figure}[htp]
\centering
\subfigure[Image $I_m$]{\label{fig:syntheticIMa}
\begin{minipage}[b]{0.23\textwidth}
\includegraphics[width=1\textwidth]{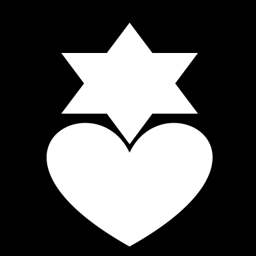}
\end{minipage}}
\subfigure[Image $I_f$]{\label{fig:syntheticIMb}
\begin{minipage}[b]{0.23\textwidth}
\includegraphics[width=1\textwidth]{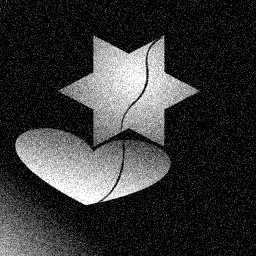}
\end{minipage}}
\subfigure[GT of $I_m$]{\label{fig:syntheticIMc}
\begin{minipage}[b]{0.23\textwidth}
\includegraphics[width=1\textwidth]{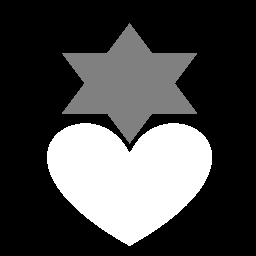}
\end{minipage}}
\subfigure[GT of $I_f$]{\label{fig:syntheticIMd}
\begin{minipage}[b]{0.23\textwidth}
\includegraphics[width=1\textwidth]{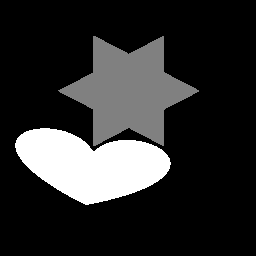}
\end{minipage}}
\caption{Synthetic images and the ground truth(GT) segmentations.}
\label{fig:syntheticIM}
\end{figure}

To show the validity of our proposed model, we first test it on a pair of synthetic images, which are shown in \cref{fig:syntheticIM} (a) and (b). The moving image $I_m$ and the fixed image $I_f$ are both composed of a star shape and a heart shape, while the fixed image is polluted by noise, intensity inhomogeneity and broken shapes. The segmentation task is to separate the two shapes from the background, individually, as shown in \cref{fig:syntheticIM} (c) and (d). This is difficult for the fixed image as the two objects have similar intensities. Before giving the segmentation result of the fixed image by our proposed model with given (a), (b) and (c), we first do some numerical experiments to show the effectiveness of the components by turning off others in the proposed model. 

The first experiment is to show the effectiveness of the segmentation component. \cref{fig:SegModel} shows the comparison of different segmentation energy terms, here ``LEM" stands for the EM algorithm combined with bias correction, i.e. local EM, ``RegEM" stands for the regularized EM algorithm, ``LRegEM" stands for the local and regularized EM algorithm \cite{Jun2013Image}, and ``LRegEMPrior" stands for the segmentation method of our proposed model, i.e. LRegEM combined with a ground truth prior provided by the cross entropy constraint. From images (c)-(l), we can clearly see that bias correction and regularization have great beneficial effects on segmentation. What's more, in the bottom-left of (g), (h), (k) and (l), one can see that the spatial regularization can greatly promote bias correction, i.e. the result of intensity correction depends on the promotion of regularizer to segmentation. However, all of these methods shown in (c)-(l) can not segment the two shapes from background individually. As shown in (m) and (n), our proposed segmentation method can well accomplish this task, showing the validity of spacial priors provided by the cross entropy. 

\begin{figure}[htp]
\centering
\subfigure[Image $I_f$]{\label{fig:SegModela}
\begin{minipage}[b]{0.23\textwidth}
\includegraphics[width=1\textwidth]{figure/starheart_fix_image1_0.01.png}
\end{minipage}}
\subfigure[Ground truth]{\label{fig:SegModelb}
\begin{minipage}[b]{0.23\textwidth}
\includegraphics[width=1\textwidth]{figure/starheart_fix_image_gt.png}
\end{minipage}}
\subfigure[K-means label]{\label{fig:SegModelc}
\begin{minipage}[b]{0.23\textwidth}
\includegraphics[width=1\textwidth]{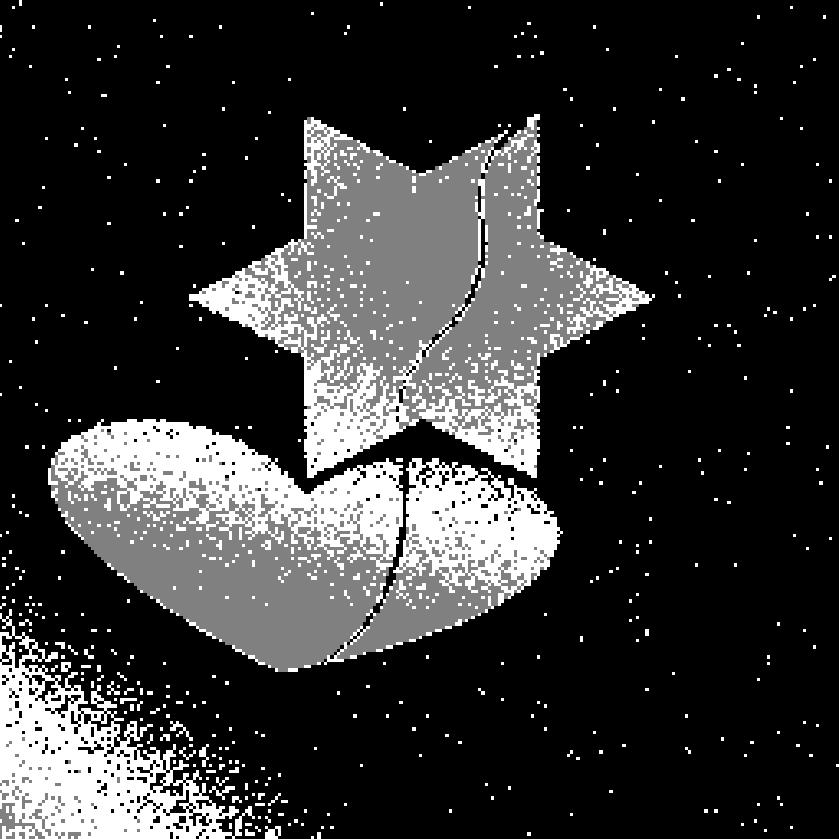}
\end{minipage}}
\subfigure[K-means contour]{\label{fig:SegModeld}
\begin{minipage}[b]{0.23\textwidth}
\includegraphics[width=1\textwidth]{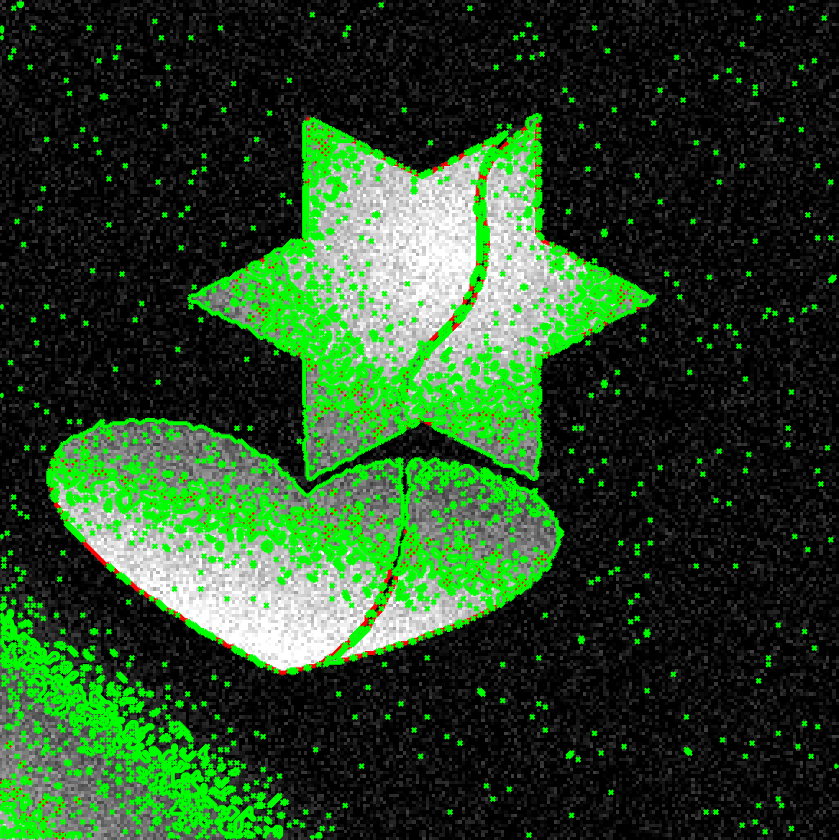}
\end{minipage}}
\subfigure[EM label]{\label{fig:SegModele}
\begin{minipage}[b]{0.23\textwidth}
\includegraphics[width=1\textwidth]{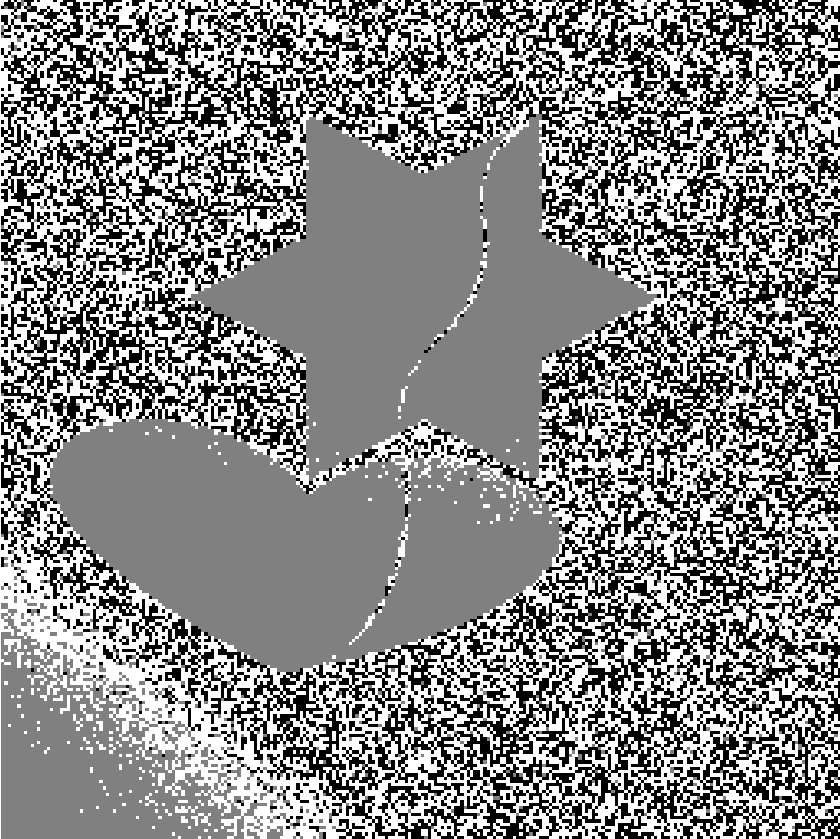}
\end{minipage}}
\subfigure[EM contour]{\label{fig:SegModelf}
\begin{minipage}[b]{0.23\textwidth}
\includegraphics[width=1\textwidth]{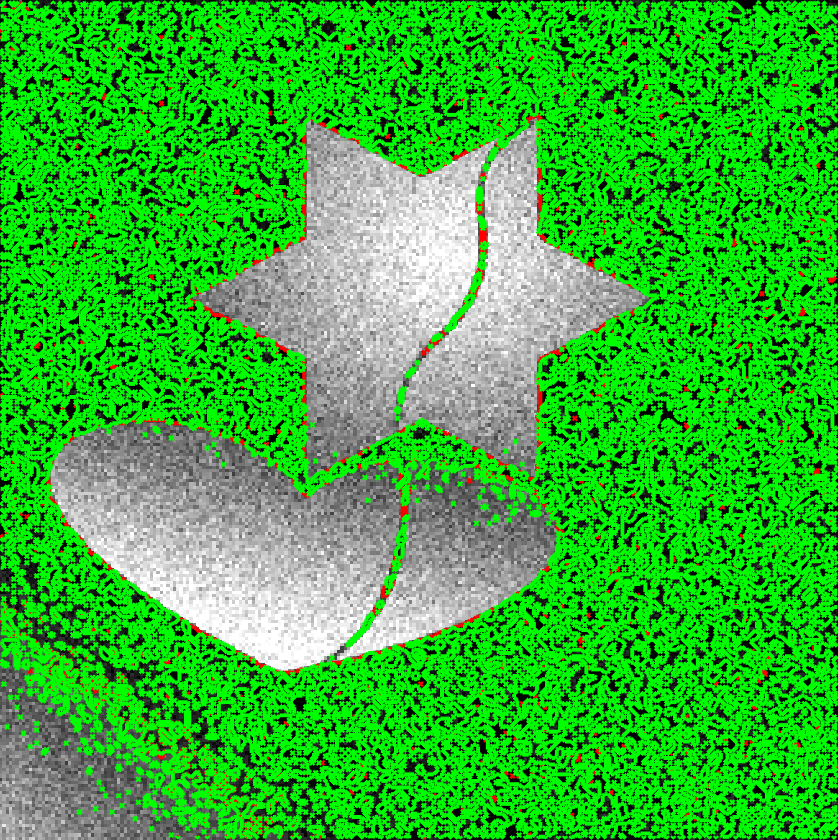}
\end{minipage}}
\subfigure[LEM label]{\label{fig:SegModelg}
\begin{minipage}[b]{0.23\textwidth}
\includegraphics[width=1\textwidth]{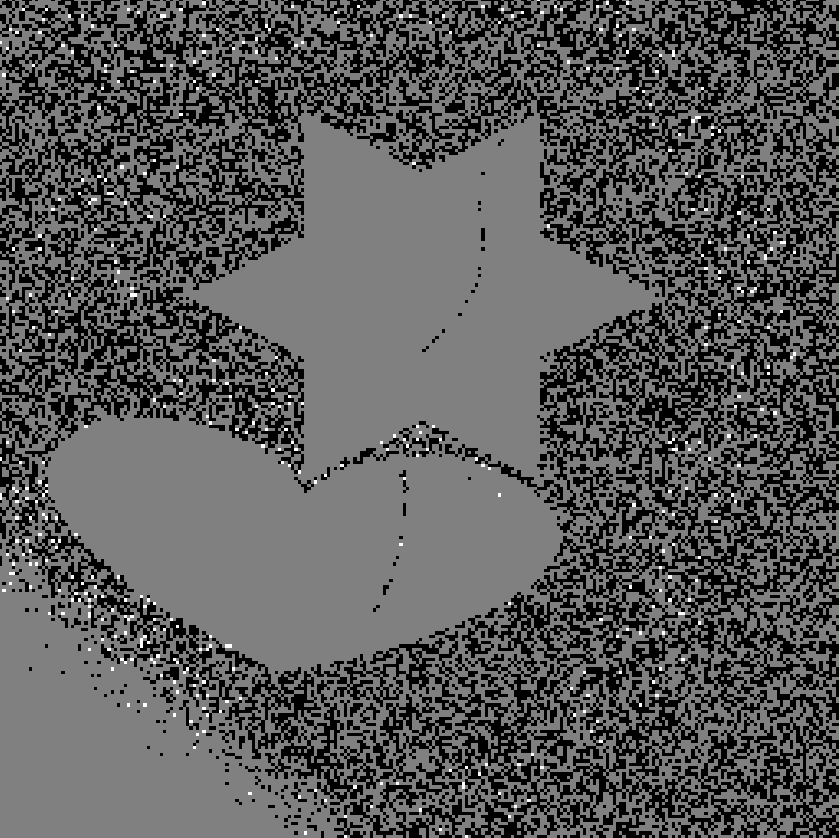}
\end{minipage}}
\subfigure[LEM contour]{\label{fig:SegModelh}
\begin{minipage}[b]{0.23\textwidth}
\includegraphics[width=1\textwidth]{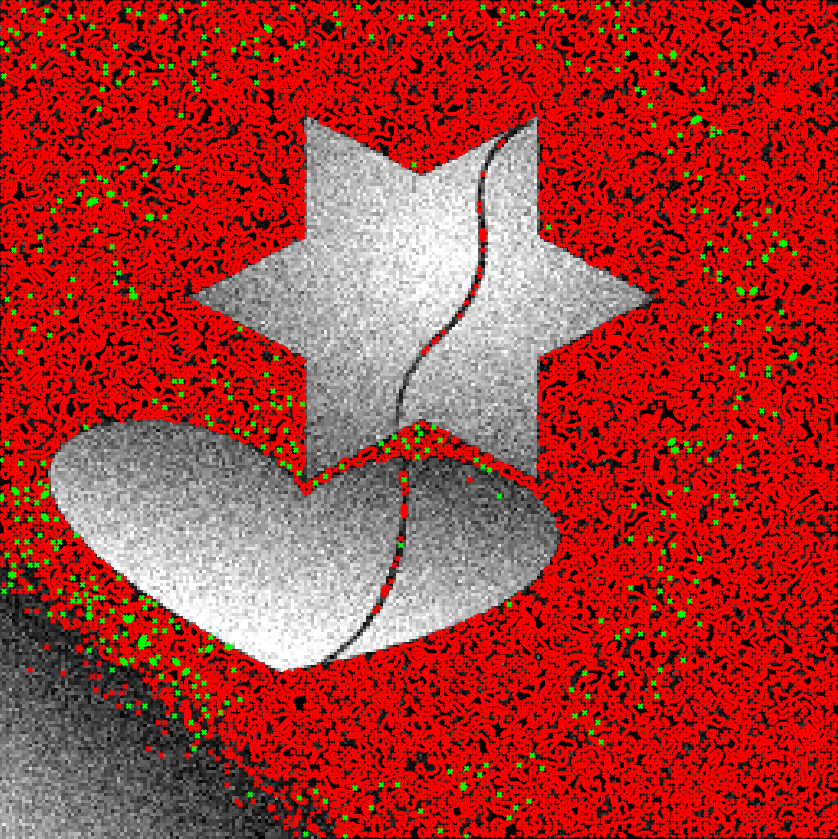}
\end{minipage}}
\subfigure[RegEM label]{\label{fig:SegModeli}
\begin{minipage}[b]{0.23\textwidth}
\includegraphics[width=1\textwidth]{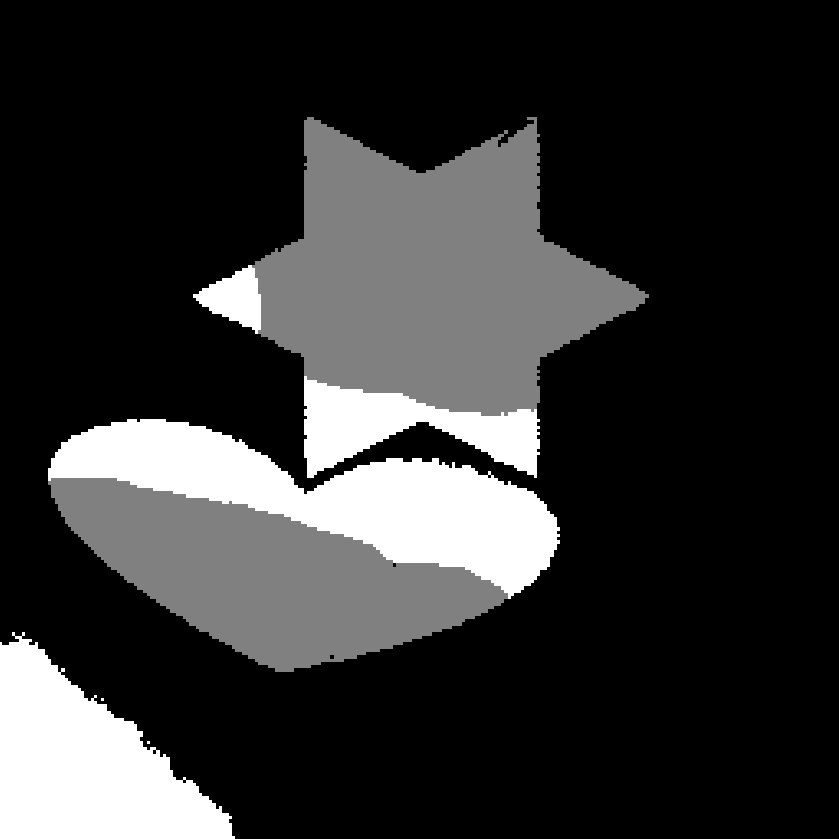}
\end{minipage}}
\subfigure[RegEM contour]{\label{fig:SegModej}
\begin{minipage}[b]{0.23\textwidth}
\includegraphics[width=1\textwidth]{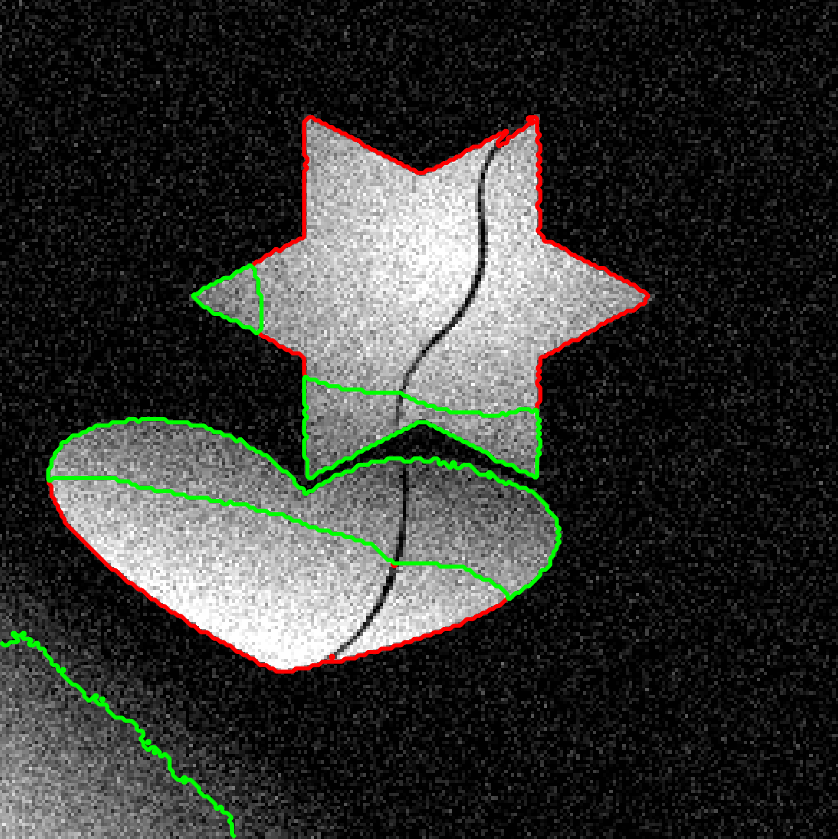}
\end{minipage}}
\subfigure[LRegEM label]{\label{fig:SegModelk}
\begin{minipage}[b]{0.23\textwidth}
\includegraphics[width=1\textwidth]{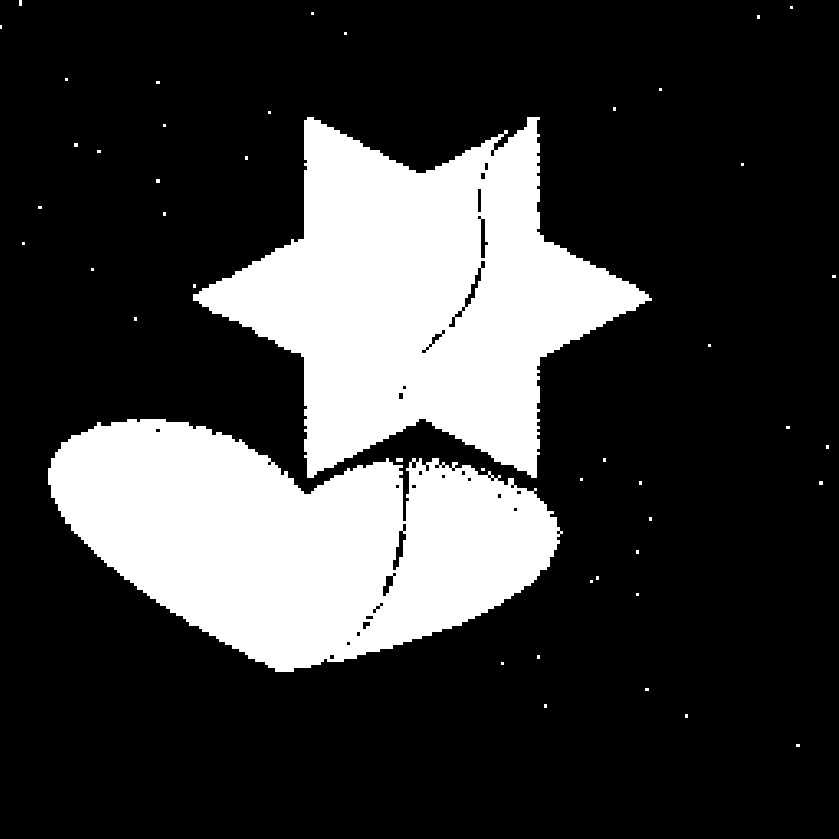}
\end{minipage}}
\subfigure[LRegEM contour]{\label{fig:SegModell}
\begin{minipage}[b]{0.23\textwidth}
\includegraphics[width=1\textwidth]{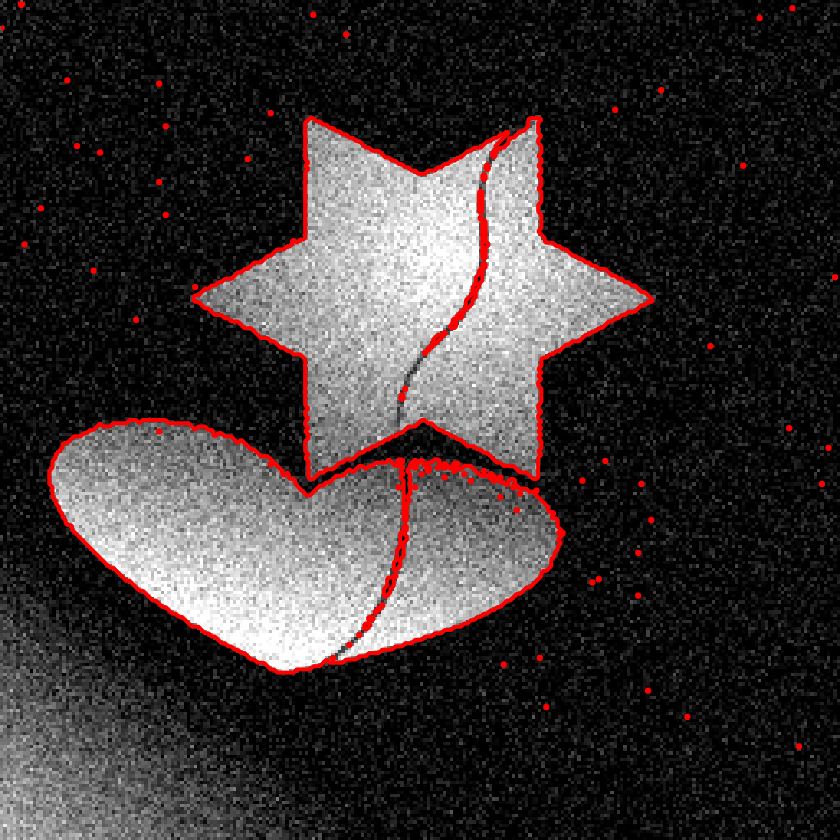}
\end{minipage}}
\subfigure[LRegEMPrior label]{\label{fig:SegModelm}
\begin{minipage}[b]{0.23\textwidth}
\includegraphics[width=1\textwidth]{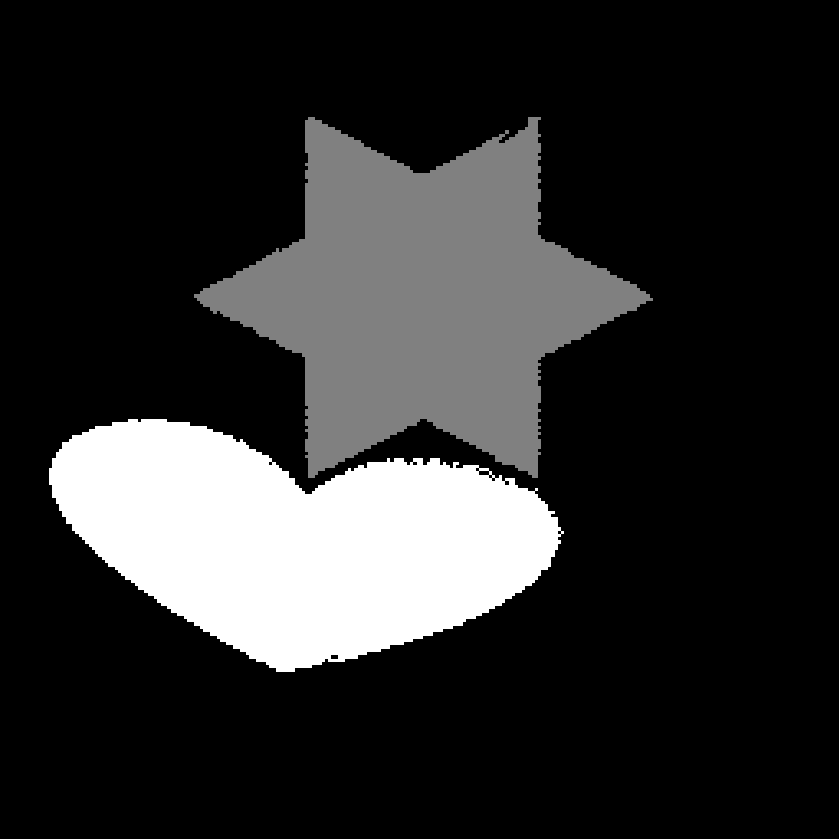}
\end{minipage}}
\subfigure[LRegEMPrior contour]{\label{fig:SegModeln}
\begin{minipage}[b]{0.23\textwidth}
\includegraphics[width=1\textwidth]{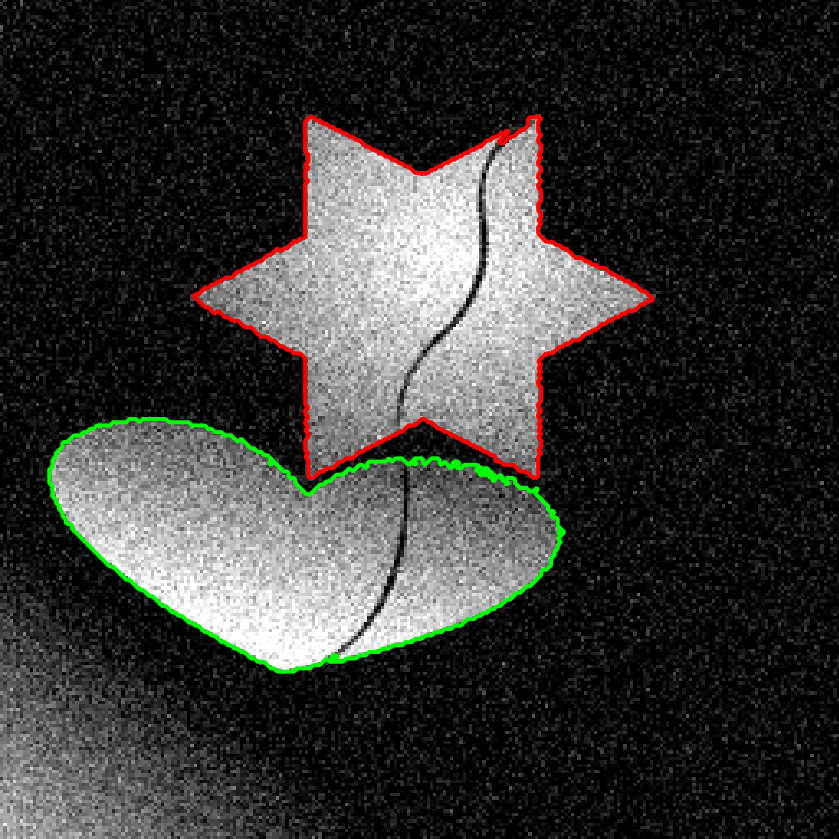}
\end{minipage}}
\subfigure[Bias field $\beta$]{\label{fig:SegModelo}
\begin{minipage}[b]{0.23\textwidth}
\includegraphics[width=1\textwidth]{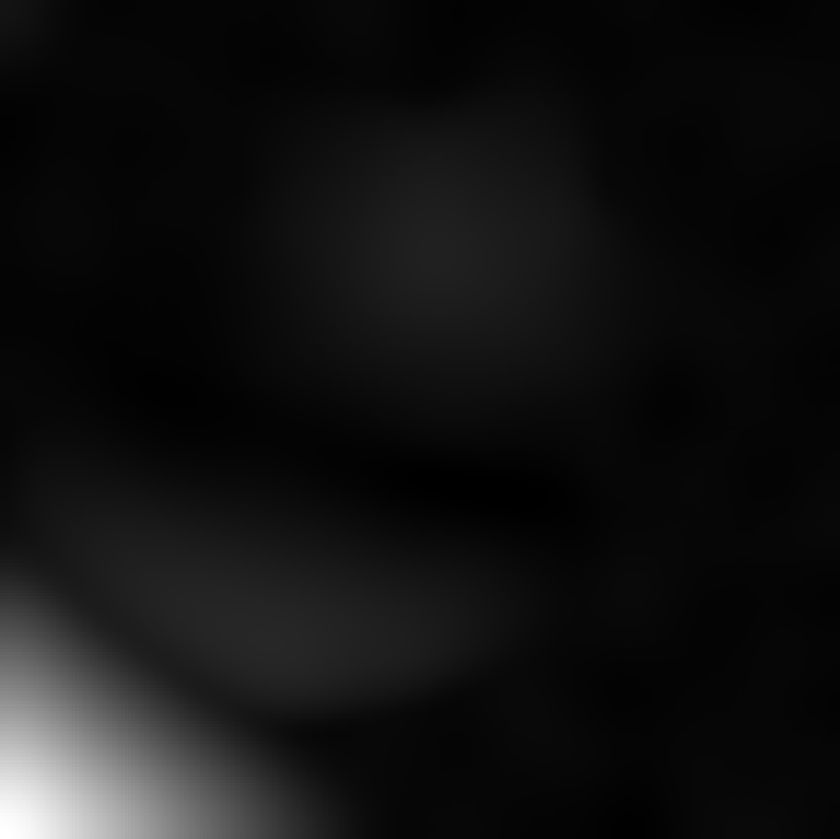}
\end{minipage}}
\subfigure[Revised image $I_f/\beta$]{\label{fig:SegModelp}
\begin{minipage}[b]{0.23\textwidth}
\includegraphics[width=1\textwidth]{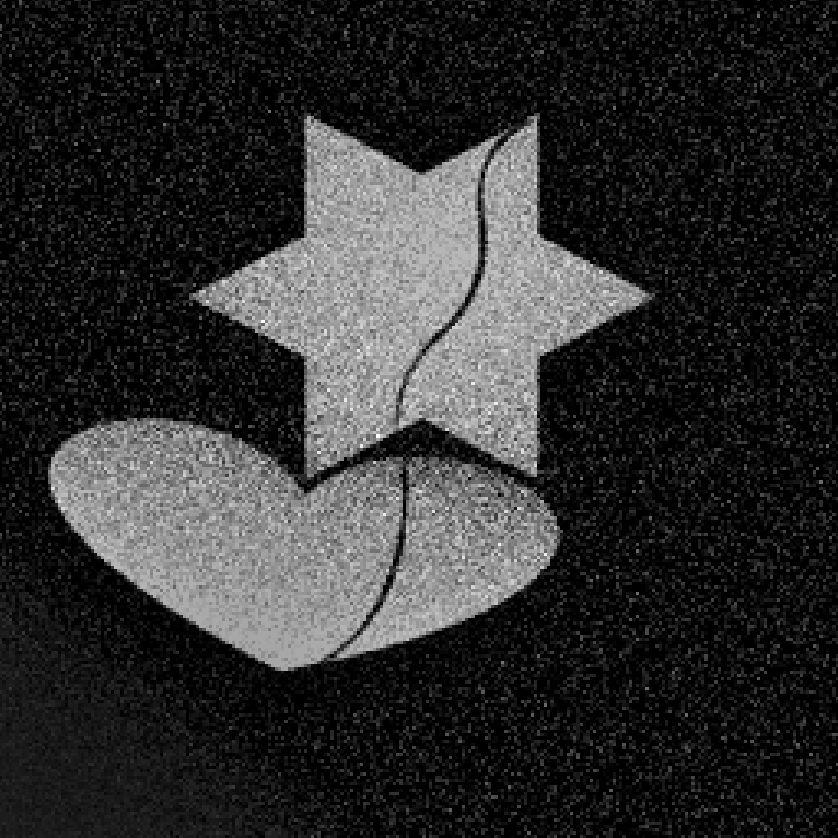}
\end{minipage}}
\caption{Segmentation model test. (a) the fixed image $I_f$, (b) the ground truth segmentation of $I_f$, (c)-(d) K-means, (e)-(f) EM algorithm, (g)-(h) local EM algorithm, i.e. EM combined with bias correction, (i)-(j) regularized EM algorithm, (k)-(l) local and regularized EM algorithm \cite{Jun2013Image}, (m)-(n) proposed method, i.e. local and regularized EM algorithm combined with a ground truth prior provided by the cross entropy constraint, (o)-(p) estimated bias field and revised image $I_f/\beta$ by the proposed segmentation model.} 
\label{fig:SegModel}
\end{figure}

The second numerical experiment is to test the performance of the registration component. \cref{fig:eta} shows the registration results with different regularization parameters $\eta$. As can be seen from this figure, the smaller $\eta$ the more irregular displacement field, and the bigger $\eta$ the smaller displacement, which tends to keep the grid on its original location. As mentioned in \cref{sec:optimization}, the registration is implemented from coarse level to fine level. In \cref{fig:RegProcess}, we show the registration results on odd levels, which indicates that the registration algorithm is stable and progressive. 

\begin{figure}[!htp]
\centering
\subfigure{
\begin{minipage}[b]{0.23\textwidth}
\includegraphics[width=1\textwidth]{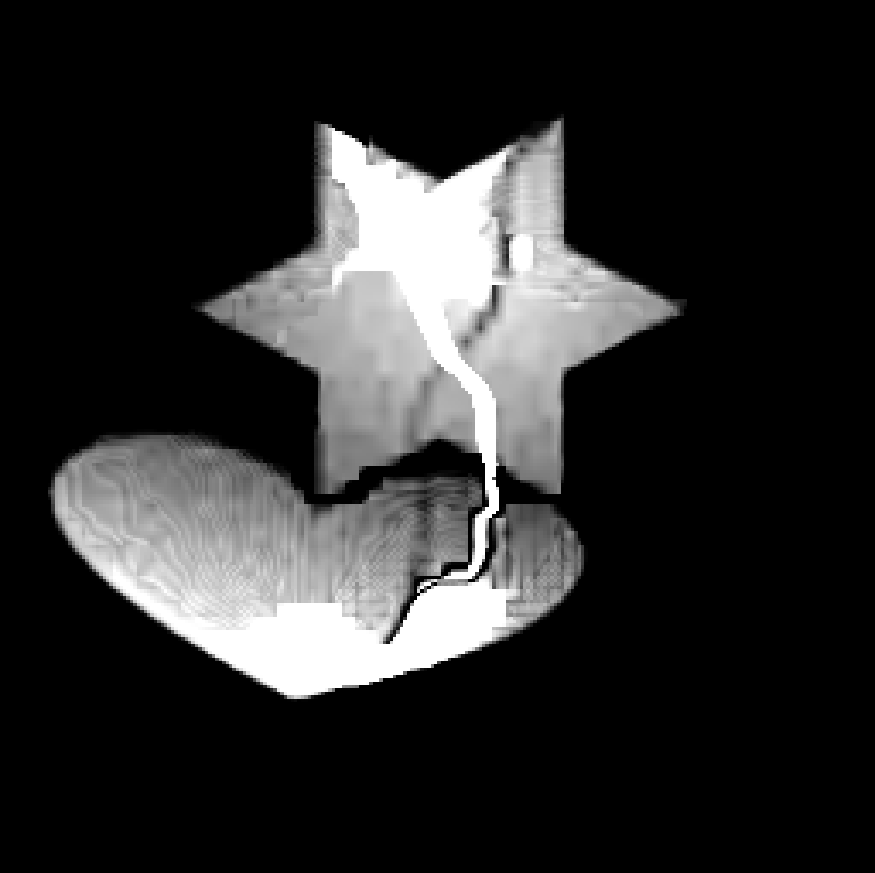}
\end{minipage}}
\subfigure{
\begin{minipage}[b]{0.23\textwidth}
\includegraphics[width=1\textwidth]{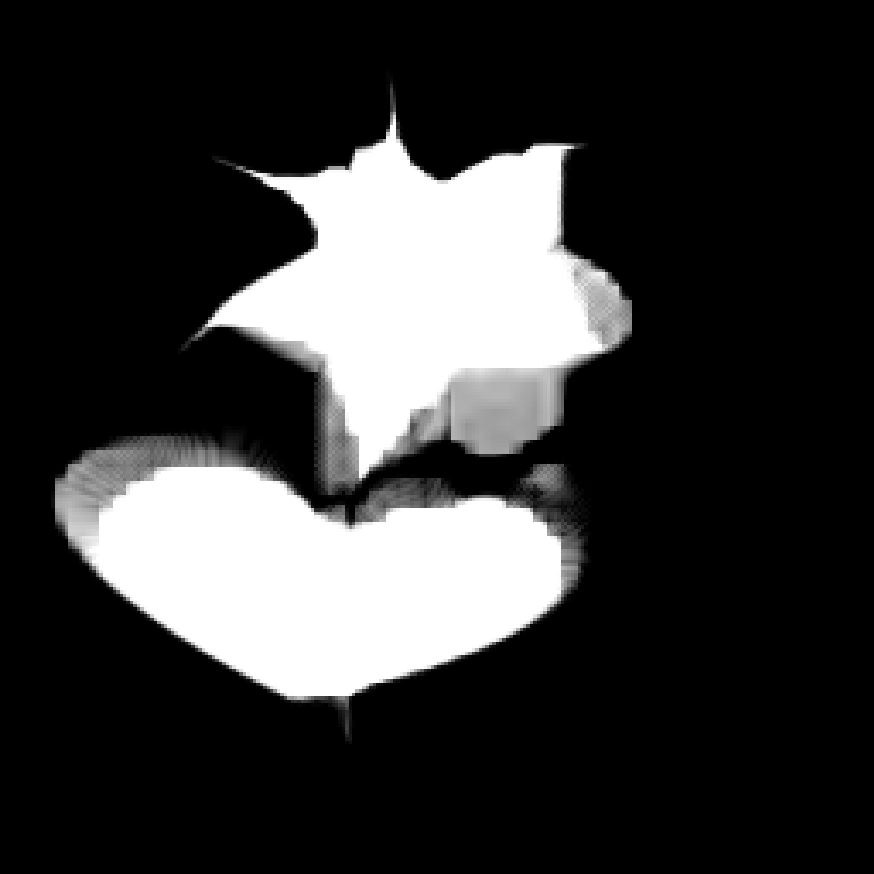}
\end{minipage}}
\subfigure{
\begin{minipage}[b]{0.23\textwidth}
\includegraphics[width=1\textwidth]{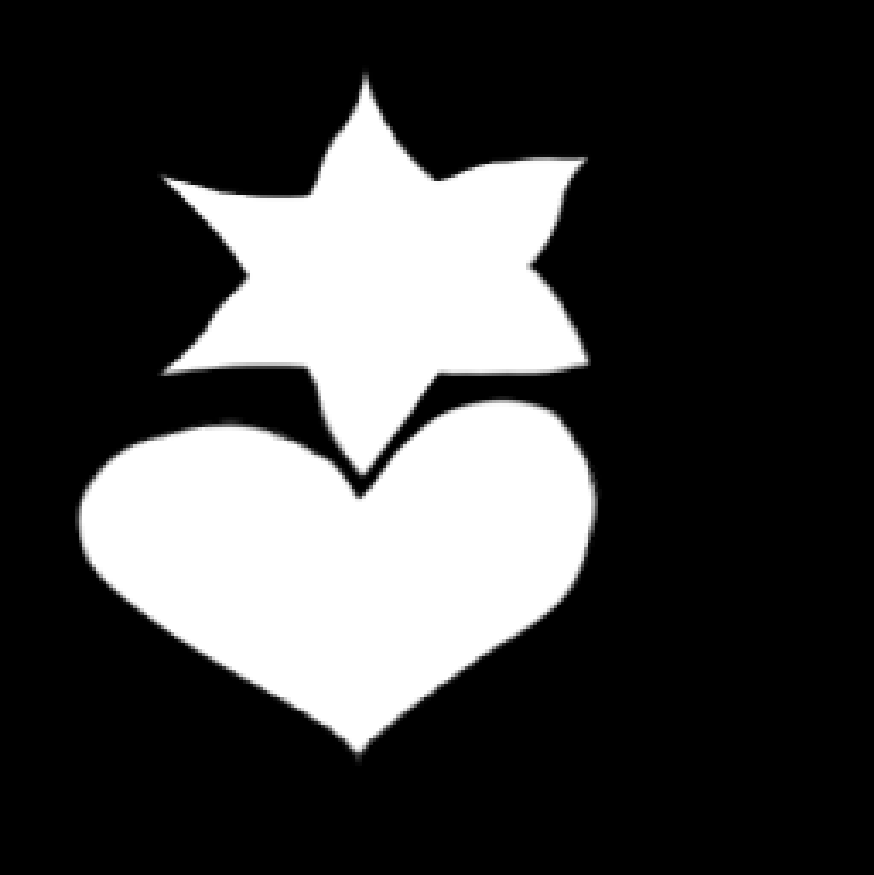}
\end{minipage}}\\
\subfigure{
\begin{minipage}[b]{0.23\textwidth}
\includegraphics[width=1\textwidth]{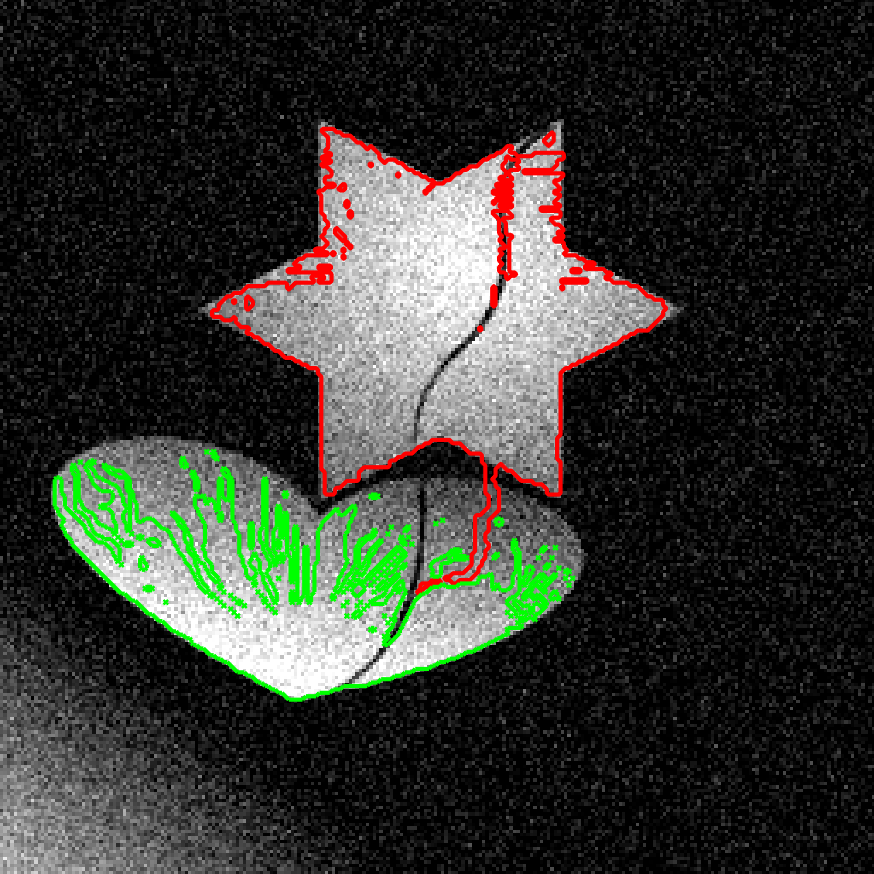}
\end{minipage}}
\subfigure{
\begin{minipage}[b]{0.23\textwidth}
\includegraphics[width=1\textwidth]{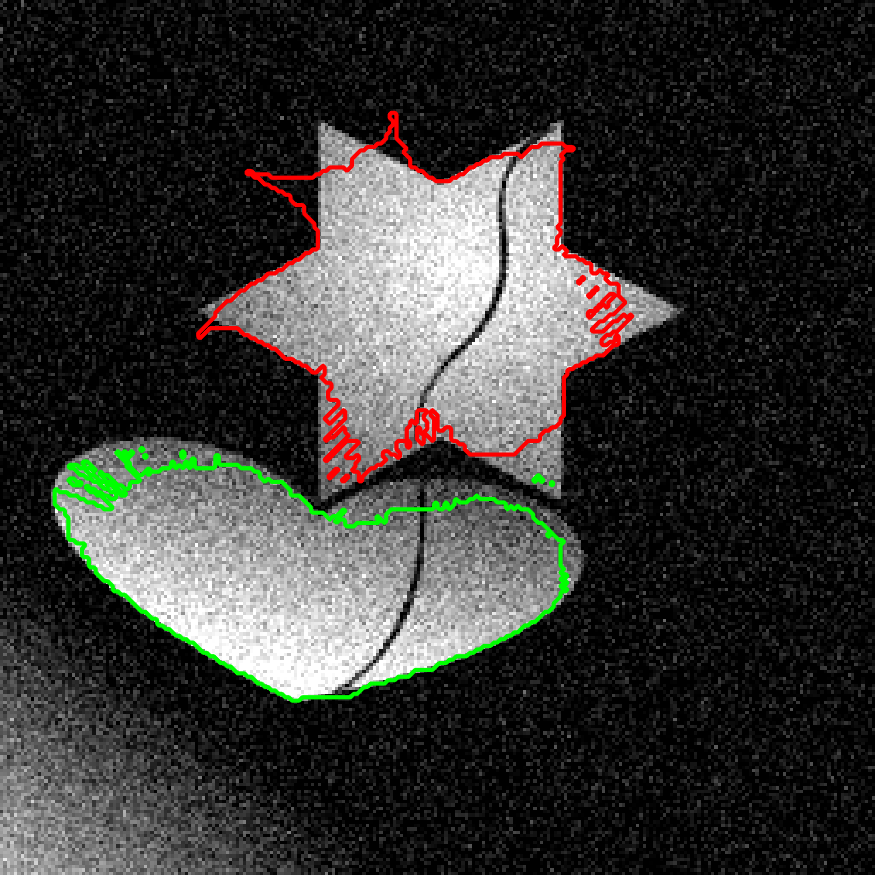}
\end{minipage}}
\subfigure{
\begin{minipage}[b]{0.23\textwidth}
\includegraphics[width=1\textwidth]{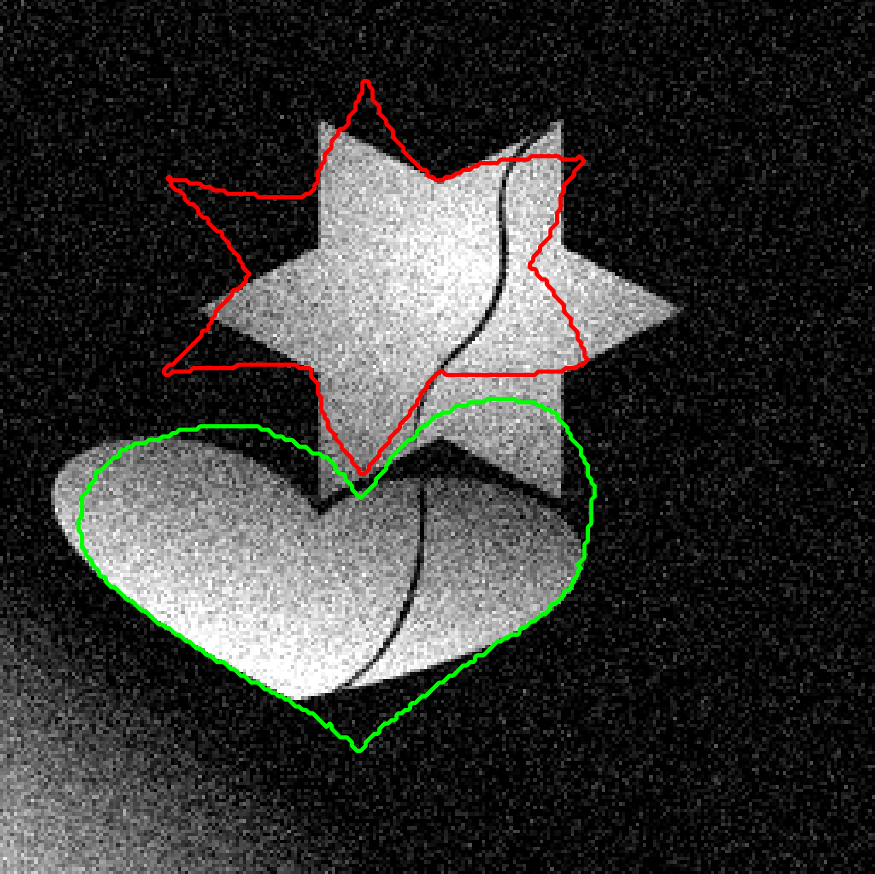}
\end{minipage}}\\
\subfigure{
\begin{minipage}[b]{0.23\textwidth}
\includegraphics[width=1\textwidth]{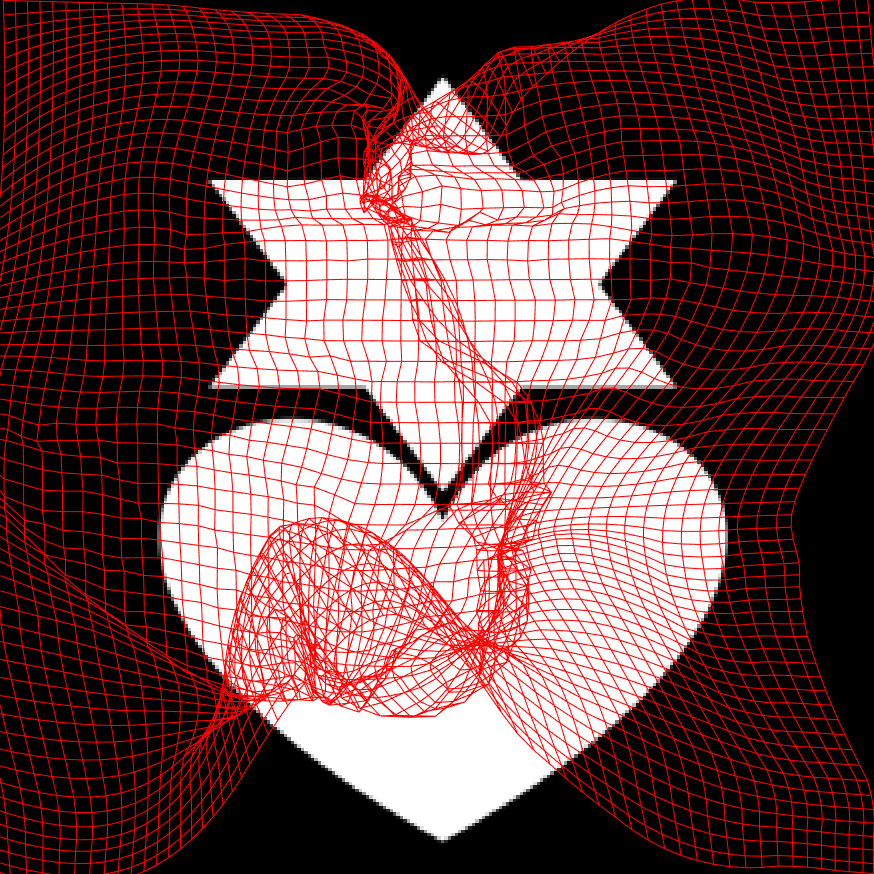}
\end{minipage}}
\subfigure{
\begin{minipage}[b]{0.23\textwidth}
\includegraphics[width=1\textwidth]{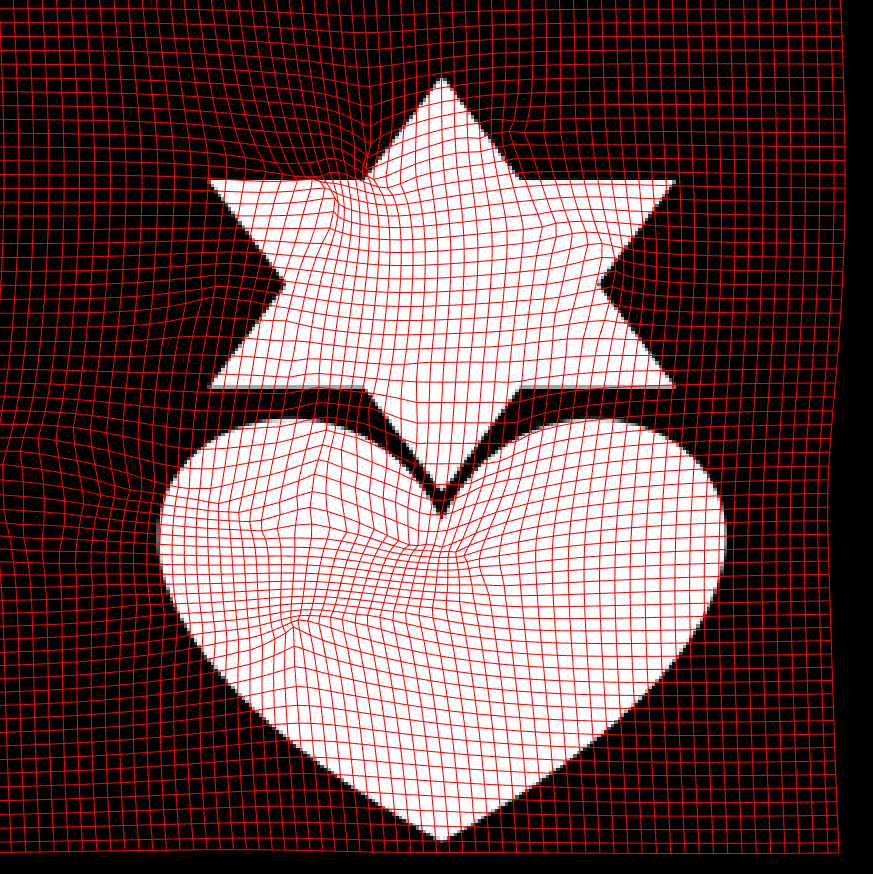}
\end{minipage}}
\subfigure{
\begin{minipage}[b]{0.23\textwidth}
\includegraphics[width=1\textwidth]{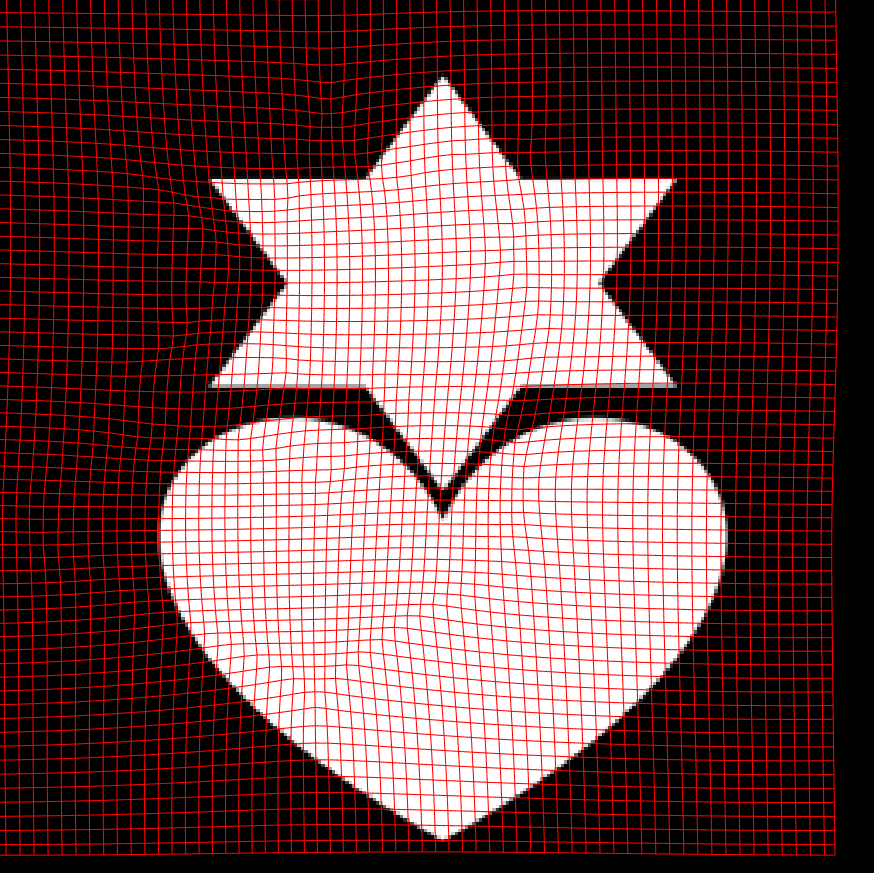}
\end{minipage}}\\
\subfigure{\label{fig:etaa}
\begin{minipage}[b]{0.23\textwidth}
\includegraphics[width=1\textwidth]{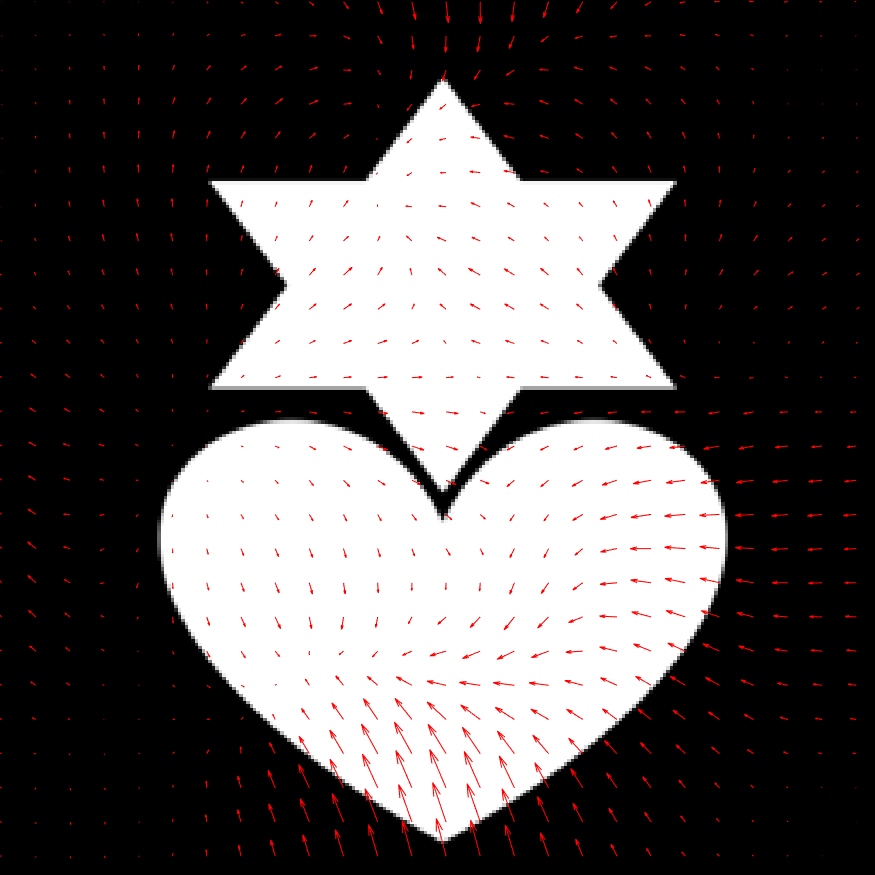}
\caption*{(a) $\eta=0$}
\end{minipage}}
\subfigure{\label{fig:etab}
\begin{minipage}[b]{0.23\textwidth}
\includegraphics[width=1\textwidth]{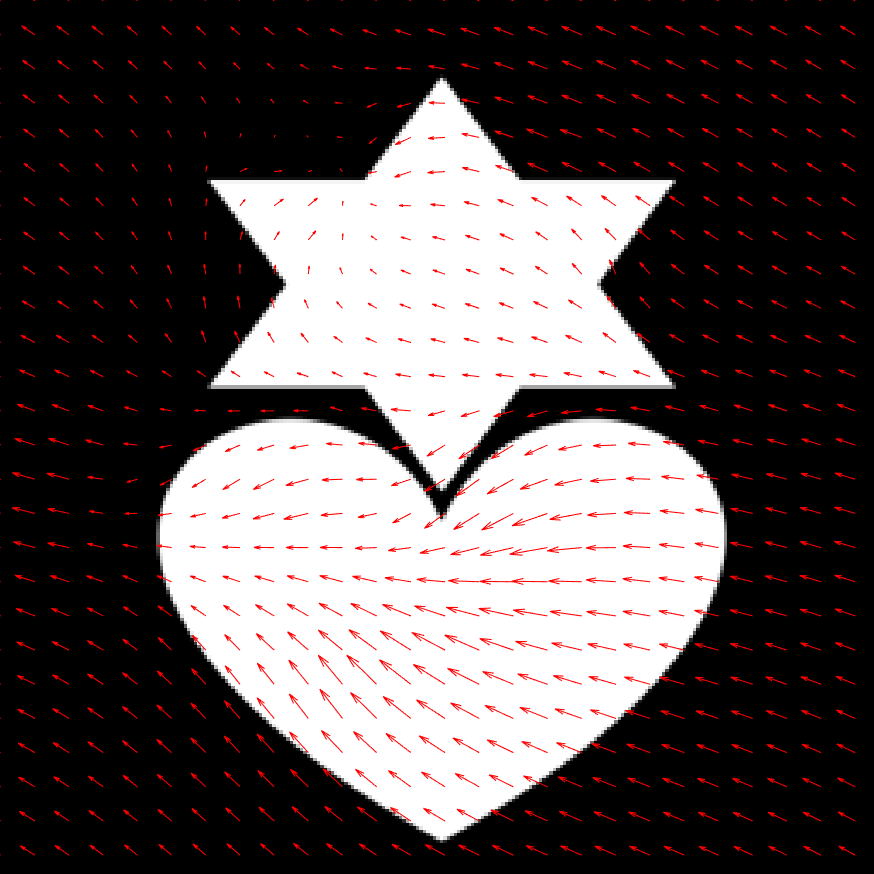}
\caption*{(b) $\eta=0.1$}
\end{minipage}}
\subfigure{\label{fig:etac}
\begin{minipage}[b]{0.23\textwidth}
\includegraphics[width=1\textwidth]{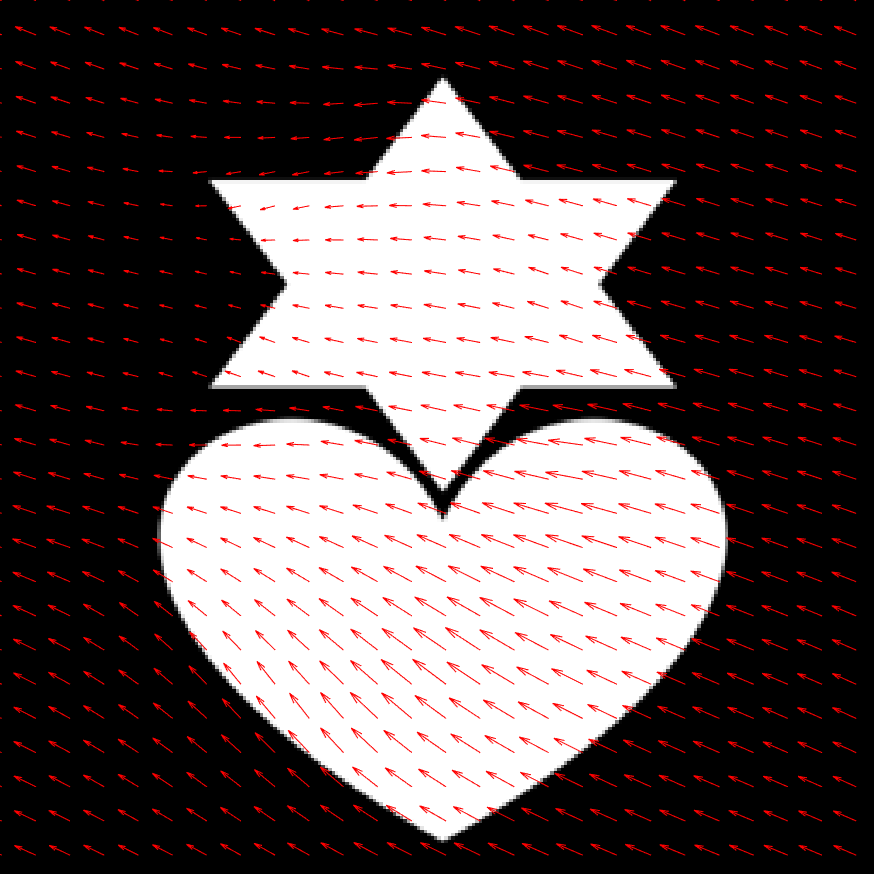}
\caption*{(c) $\eta=0.5$}
\end{minipage}}
\caption{Test of different regularization parameters $\eta$. First line: deformed image of $I_m$; second line: deformed contour; third line: deformed mesh grid; fourth line: displacement field.} 
\label{fig:eta}
\end{figure}

\begin{figure}[htp]
\centering
\subfigure{
\begin{minipage}[b]{0.09\textwidth}
\includegraphics[width=1\textwidth]{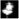}
\end{minipage}}
\subfigure{
\begin{minipage}[b]{0.13\textwidth}
\includegraphics[width=1\textwidth]{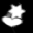}
\end{minipage}}
\subfigure{
\begin{minipage}[b]{0.17\textwidth}
\includegraphics[width=1\textwidth]{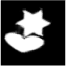}
\end{minipage}}
\subfigure{
\begin{minipage}[b]{0.21\textwidth}
\includegraphics[width=1\textwidth]{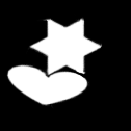}
\end{minipage}}
\subfigure{
\begin{minipage}[b]{0.25\textwidth}
\includegraphics[width=1\textwidth]{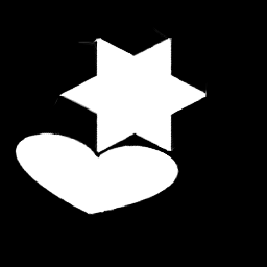}
\end{minipage}}\\
\subfigure{
\begin{minipage}[b]{0.09\textwidth}
\includegraphics[width=1\textwidth]{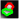}
\end{minipage}}
\subfigure{
\begin{minipage}[b]{0.13\textwidth}
\includegraphics[width=1\textwidth]{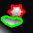}
\end{minipage}}
\subfigure{
\begin{minipage}[b]{0.17\textwidth}
\includegraphics[width=1\textwidth]{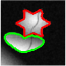}
\end{minipage}}
\subfigure{
\begin{minipage}[b]{0.21\textwidth}
\includegraphics[width=1\textwidth]{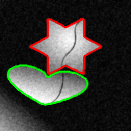}
\end{minipage}}
\subfigure{
\begin{minipage}[b]{0.25\textwidth}
\includegraphics[width=1\textwidth]{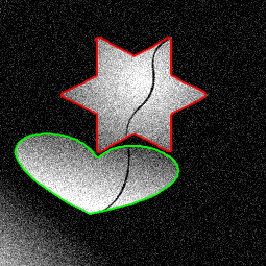}
\end{minipage}}\\
\subfigure{
\begin{minipage}[b]{0.09\textwidth}
\includegraphics[width=1\textwidth]{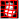}
\end{minipage}}
\subfigure{
\begin{minipage}[b]{0.13\textwidth}
\includegraphics[width=1\textwidth]{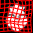}
\end{minipage}}
\subfigure{
\begin{minipage}[b]{0.17\textwidth}
\includegraphics[width=1\textwidth]{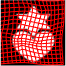}
\end{minipage}}
\subfigure{
\begin{minipage}[b]{0.21\textwidth}
\includegraphics[width=1\textwidth]{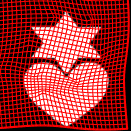}
\end{minipage}}
\subfigure{
\begin{minipage}[b]{0.25\textwidth}
\includegraphics[width=1\textwidth]{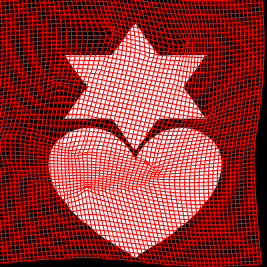}
\end{minipage}}\\
\subfigure{
\begin{minipage}[b]{0.09\textwidth}
\includegraphics[width=1\textwidth]{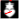}
\caption*{Level 9}
\end{minipage}}
\subfigure{
\begin{minipage}[b]{0.13\textwidth}
\includegraphics[width=1\textwidth]{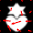}
\caption*{Level 7}
\end{minipage}}
\subfigure{
\begin{minipage}[b]{0.17\textwidth}
\includegraphics[width=1\textwidth]{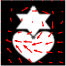}
\caption*{Level 5}
\end{minipage}}
\subfigure{
\begin{minipage}[b]{0.21\textwidth}
\includegraphics[width=1\textwidth]{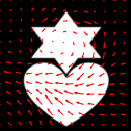}
\caption*{Level 3}
\end{minipage}}
\subfigure{
\begin{minipage}[b]{0.25\textwidth}
\includegraphics[width=1\textwidth]{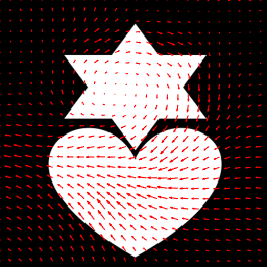}
\caption*{Level 1}
\end{minipage}}
\caption{Registration results on odd levels. First line: deformed image of $I_m$; second line: deformed contour; third line: deformed mesh grid; fourth line: displacement field.}
\label{fig:RegProcess}
\end{figure}

The third experiment is to show the bridge function, cross entropy, between segmentation and registration. \cref{fig:XiParameter} shows the results with different parameters $\xi$. We can see that when $\xi$ is small, the segmentation result is less similar to the registered template. However, when $\xi$ is too large, the strong interaction would force the segmentation result to be too close to the registered template and end up with missing the true boundaries.

\begin{figure}[htp]
\centering
\subfigure{
\begin{minipage}[b]{0.23\textwidth}
\includegraphics[width=1\textwidth]{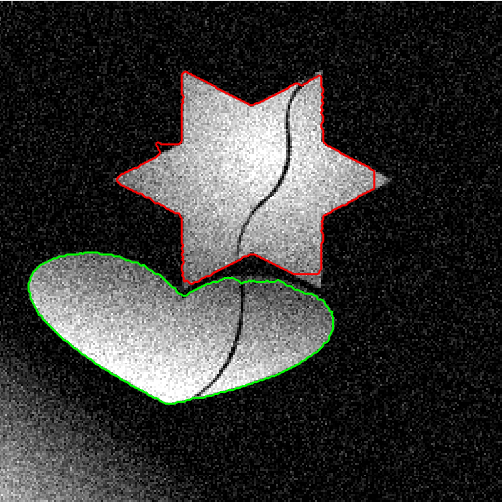}
\end{minipage}}
\subfigure{
\begin{minipage}[b]{0.23\textwidth}
\includegraphics[width=1\textwidth]{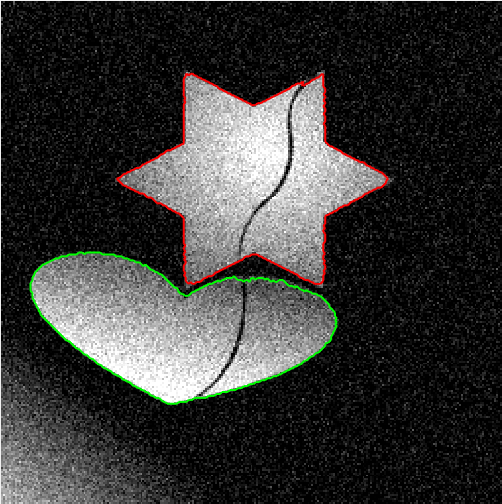}
\end{minipage}}
\subfigure{
\begin{minipage}[b]{0.23\textwidth}
\includegraphics[width=1\textwidth]{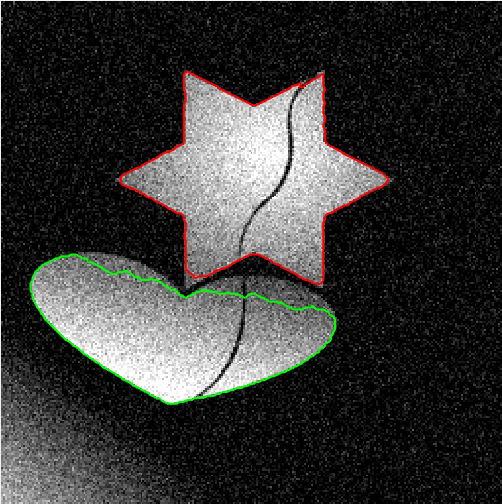}
\end{minipage}}\\
\subfigure{
\begin{minipage}[b]{0.23\textwidth}
\includegraphics[width=1\textwidth]{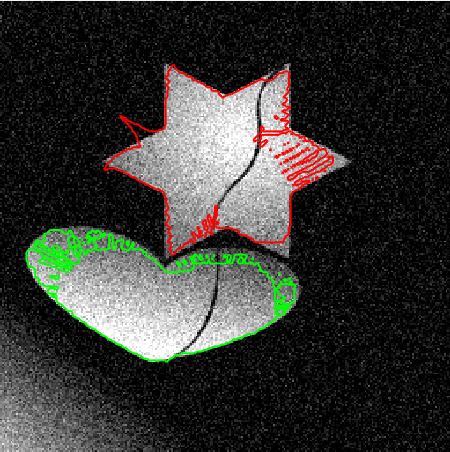}
\caption*{(a) $\xi=0.0001$}
\end{minipage}}
\subfigure{
\begin{minipage}[b]{0.23\textwidth}
\includegraphics[width=1\textwidth]{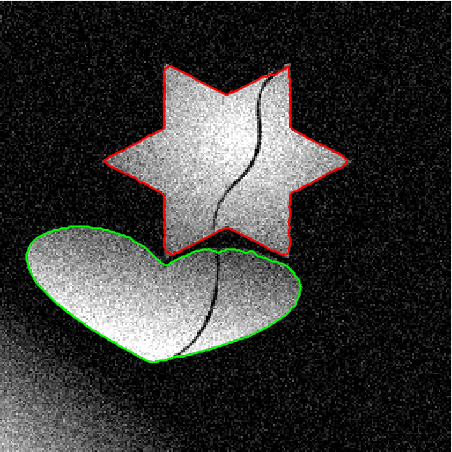}
\caption*{(b) $\xi=0.03$}
\end{minipage}}
\subfigure{
\begin{minipage}[b]{0.23\textwidth}
\includegraphics[width=1\textwidth]{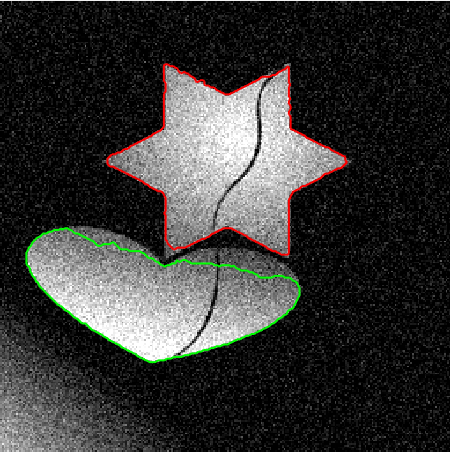}
\caption*{(c) $\xi=3$}
\end{minipage}}
\caption{Test of different cross entropy parameters $\xi$. First line: the contour of segmentation map $\bm u$; second line: the contour of deformed atlas by registration.}
\label{fig:XiParameter}
\end{figure}

The fourth experiment is to illustrate the superiority of the proposed joint model compared with segmentation only or registration only method. In \cref{fig:StarHeart} (d)-(e), we can see that segmentation only method can not separate the two shapes individually as they have similar intensities. \cref{fig:StarHeart} (f)-(g) show that registration only method can not well match the shapes of the objects. However, our proposed joint model can well accomplish this segmentation task, as shown in (h)-(k). \cref{fig:StarHeart} (l)-(q) display the bias corrected image, deformed image, deformed mesh, displacement field and the decay of the energy function of the proposed joint model.

\begin{figure}[htp]
\centering
\subfigure[Image $I_f$]{
\begin{minipage}[b]{0.21\textwidth}
\includegraphics[width=1\textwidth]{figure/starheart_fix_image1_0.01.png}
\end{minipage}}
\subfigure[Image $I_m$]{
\begin{minipage}[b]{0.21\textwidth}
\includegraphics[width=1\textwidth]{figure/starheart_mov_image.png}
\end{minipage}}
\subfigure[GT of $I_m$]{
\begin{minipage}[b]{0.21\textwidth}
\includegraphics[width=1\textwidth]{figure/starheart_mov_image_gt.png}
\end{minipage}}\\
\subfigure[$\min\mathcal{E}_{\mathrm{Seg}}$]{
\begin{minipage}[b]{0.21\textwidth}
\includegraphics[width=1\textwidth]{figure/starheart_LRegEM_lambda=0.2_label.png}
\end{minipage}}
\subfigure[$\min\mathcal{E}_{\mathrm{Seg}}$]{
\begin{minipage}[b]{0.21\textwidth}
\includegraphics[width=1\textwidth]{figure/starheart_LRegEM_lambda=0.2_edge.png}
\end{minipage}}
\subfigure[$\min\mathcal{E}_{\mathrm{Reg}}$]{
\begin{minipage}[b]{0.21\textwidth}
\includegraphics[width=1\textwidth]{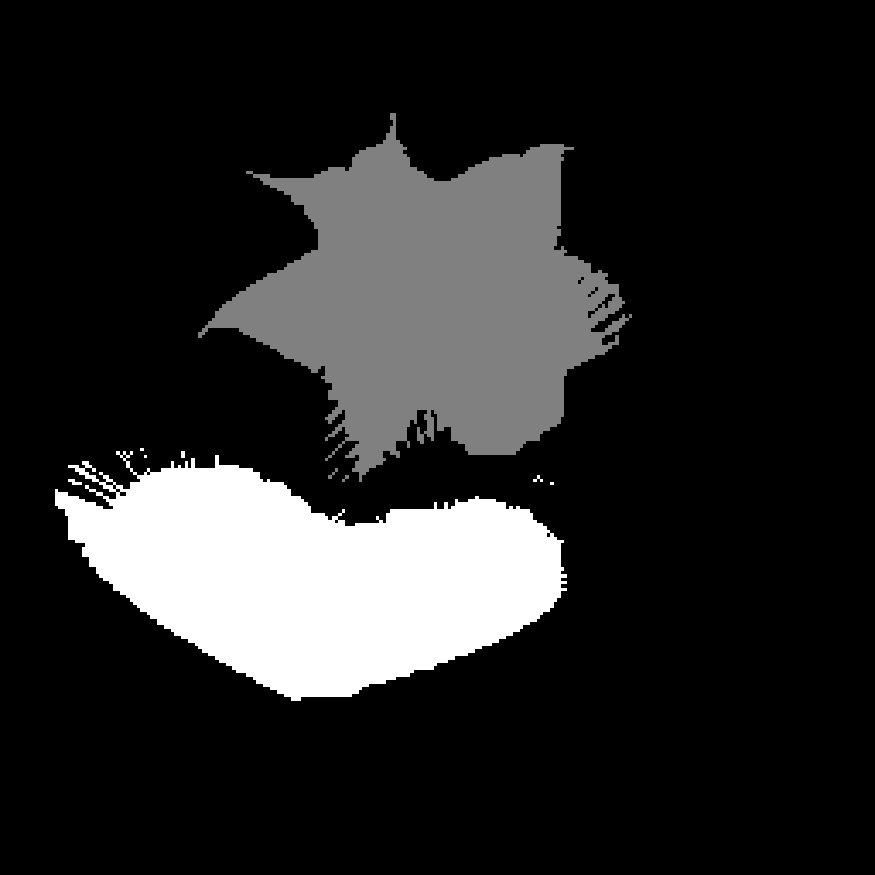}
\end{minipage}}
\subfigure[$\min\mathcal{E}_{\mathrm{Reg}}$]{
\begin{minipage}[b]{0.21\textwidth}
\includegraphics[width=1\textwidth]{figure/starheart_reg_D1L2=0.1_DeformedEdge.png}
\end{minipage}}
\subfigure[Segmentation label by $\min\mathcal{E}$]{
\begin{minipage}[b]{0.21\textwidth}
\includegraphics[width=1\textwidth]{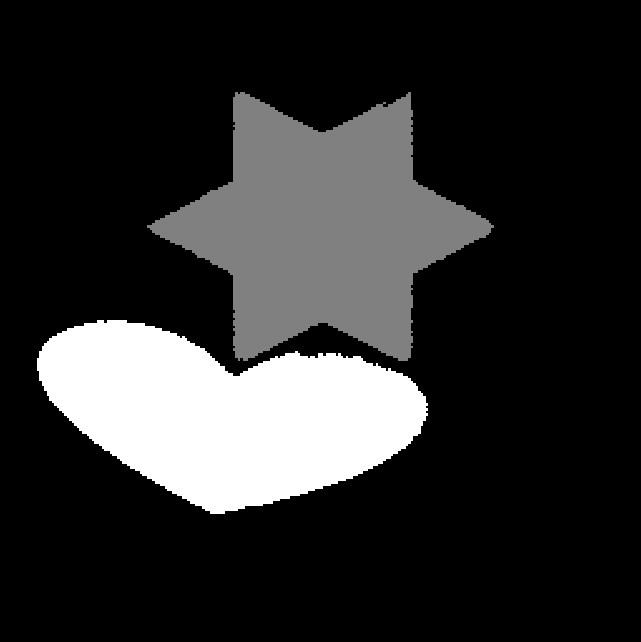}
\end{minipage}}
\subfigure[Segmentation contour by $\min\mathcal{E}$]{
\begin{minipage}[b]{0.21\textwidth}
\includegraphics[width=1\textwidth]{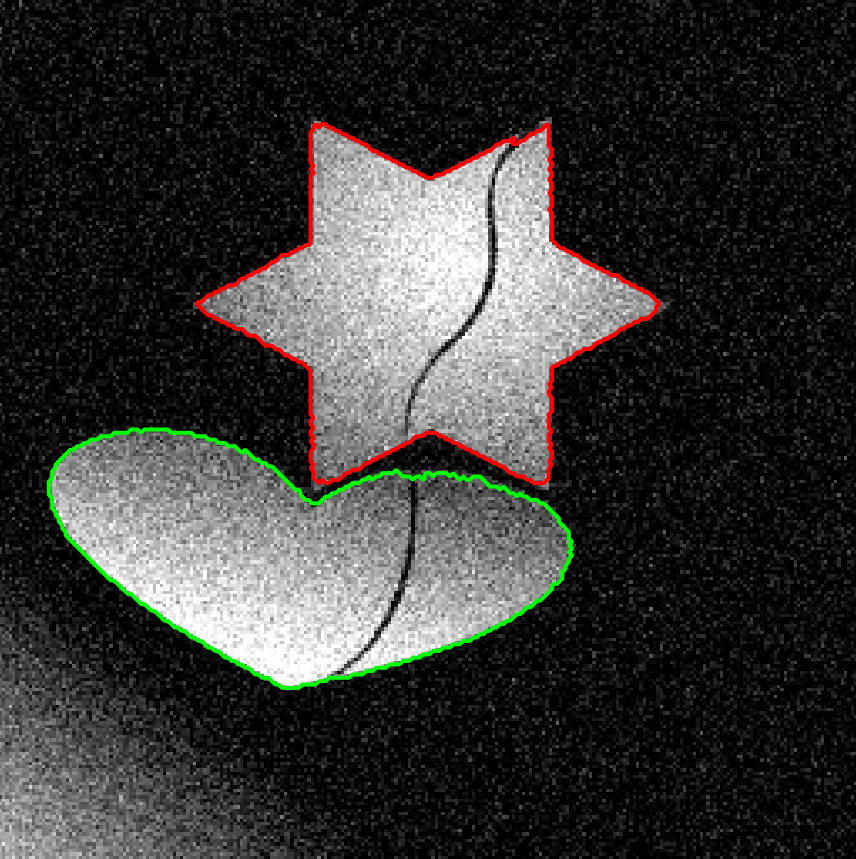}
\end{minipage}}
\subfigure[Deformed label by $\min\mathcal{E}$]{
\begin{minipage}[b]{0.21\textwidth}
\includegraphics[width=1\textwidth]{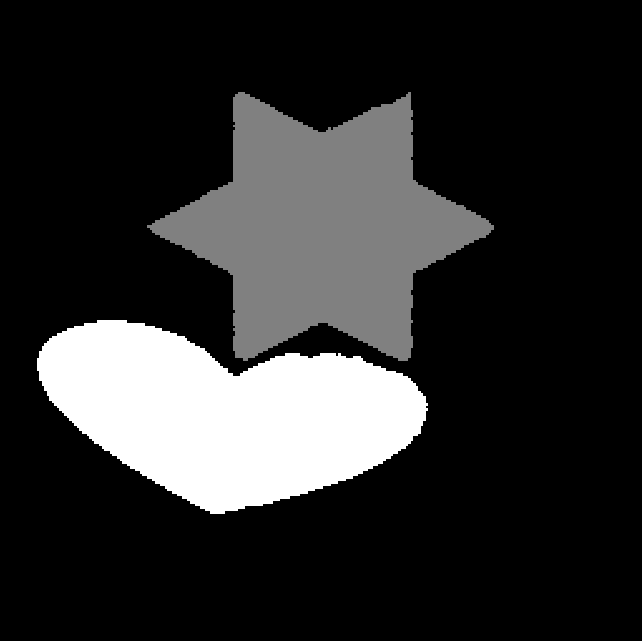}
\end{minipage}}
\subfigure[Deformed contour by $\min\mathcal{E}$]{
\begin{minipage}[b]{0.21\textwidth}
\includegraphics[width=1\textwidth]{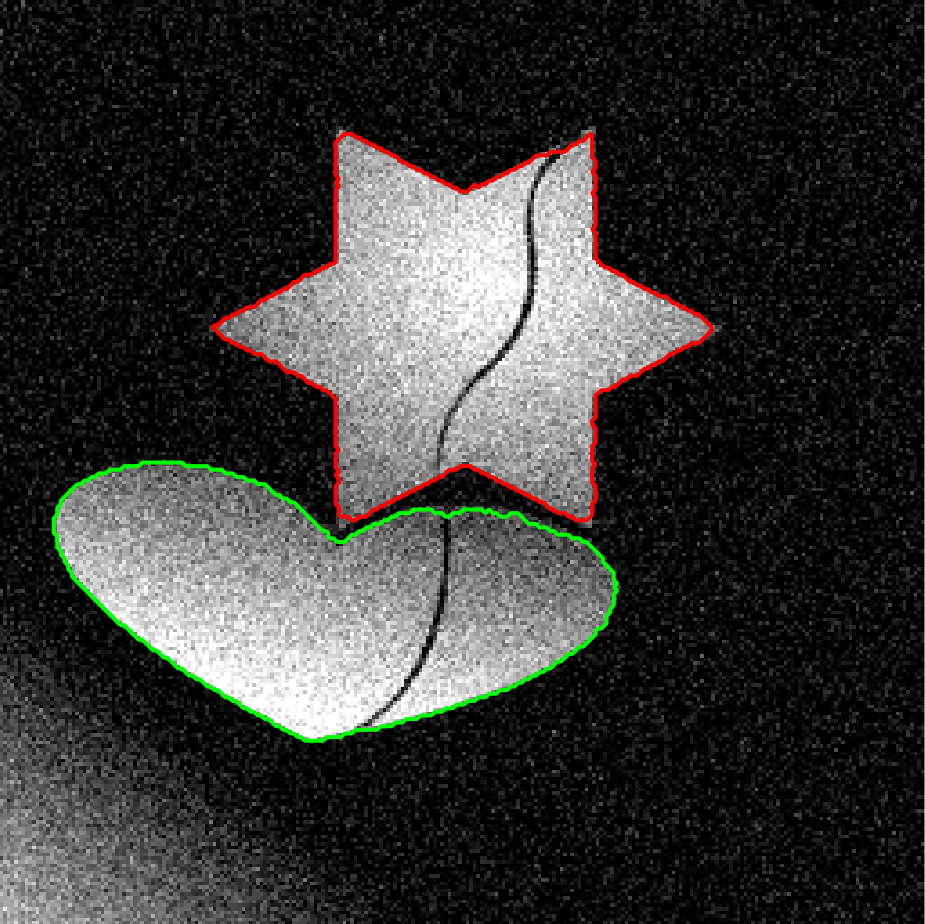}
\end{minipage}}
\subfigure[Revised image $I_f/\beta$]{
\begin{minipage}[b]{0.21\textwidth}
\includegraphics[width=1\textwidth]{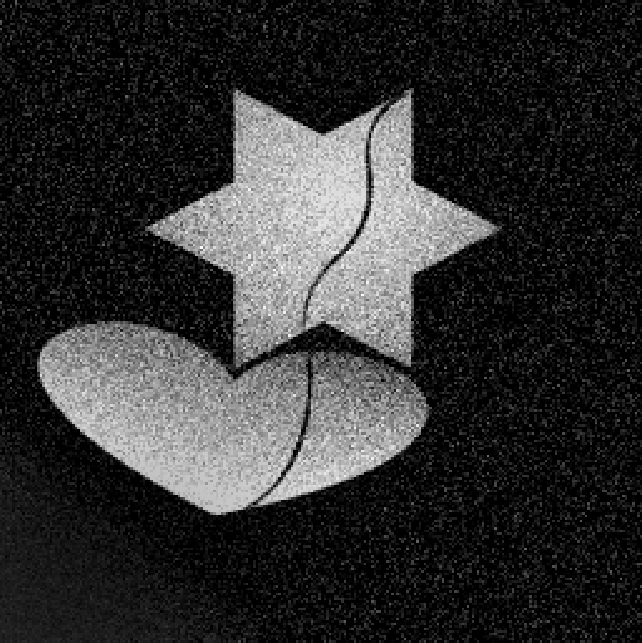}
\end{minipage}}
\subfigure[Deformed image]{
\begin{minipage}[b]{0.21\textwidth}
\includegraphics[width=1\textwidth]{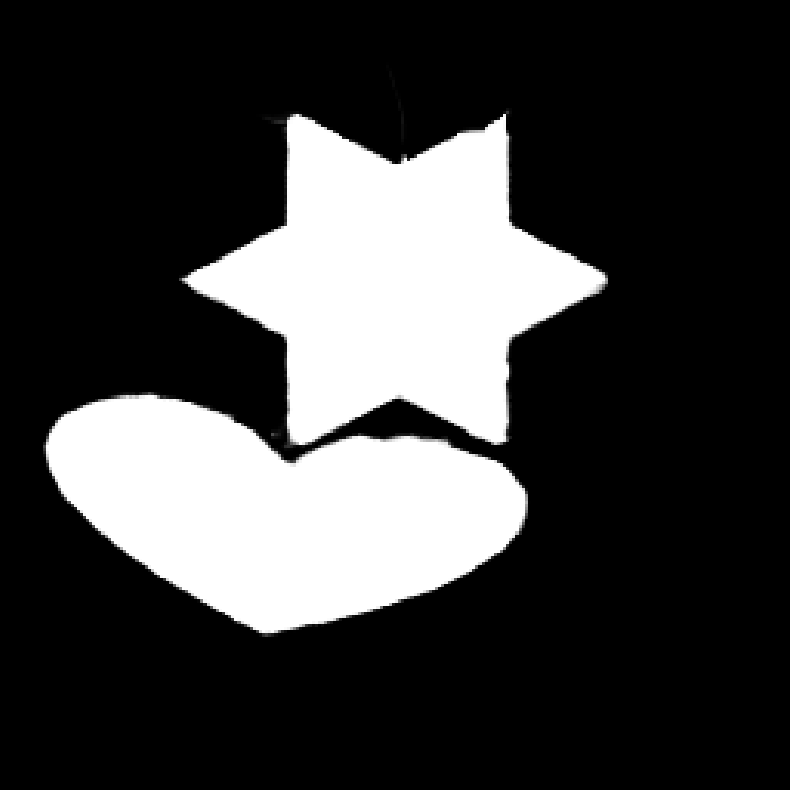}
\end{minipage}}
\subfigure[Deformed mesh]{
\begin{minipage}[b]{0.21\textwidth}
\includegraphics[width=1\textwidth]{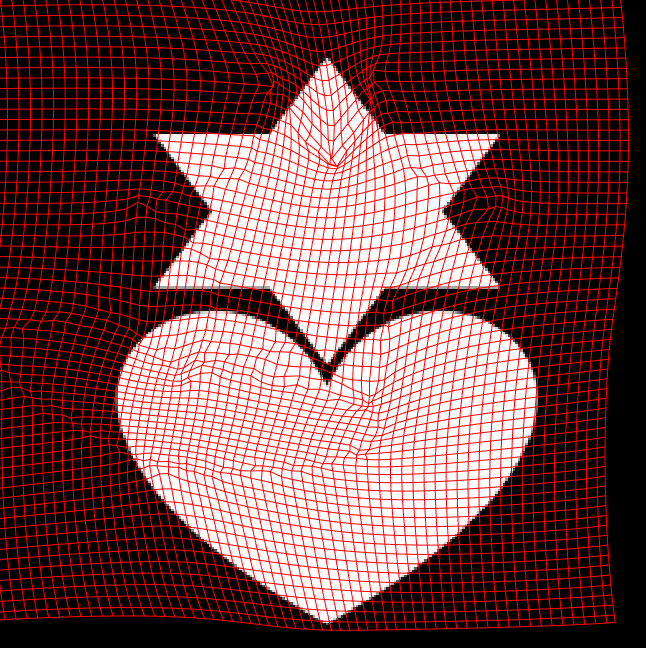}
\end{minipage}}
\subfigure[Displacement field]{
\begin{minipage}[b]{0.21\textwidth}
\includegraphics[width=1\textwidth]{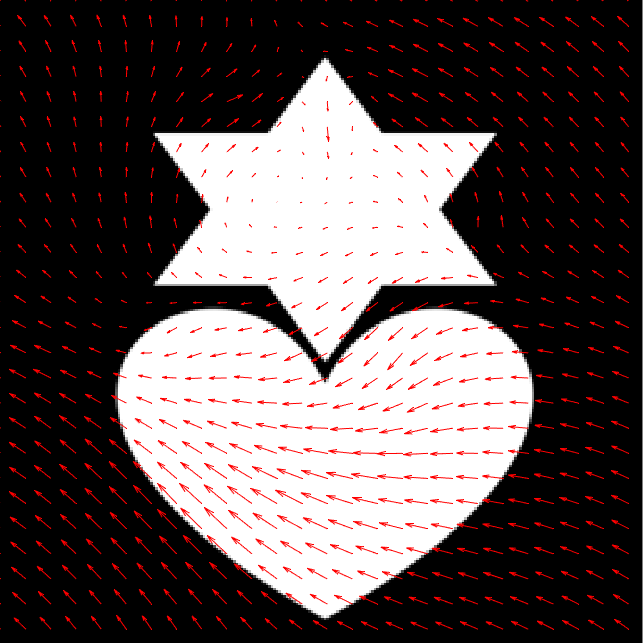}
\end{minipage}}
\subfigure[Variation of each individual term in $\mathcal{E}$]{
\begin{minipage}[b]{0.4\textwidth}
\includegraphics[width=1\textwidth]{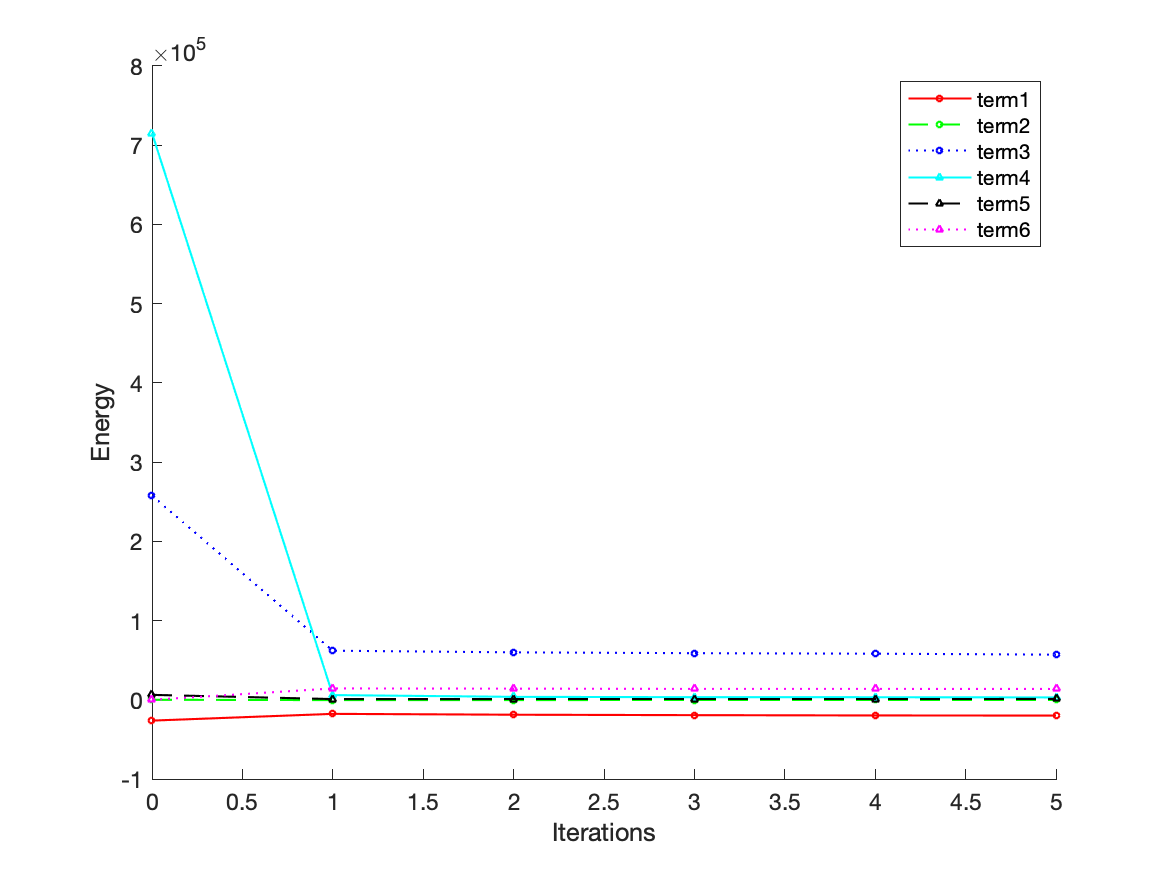}
\end{minipage}}
\subfigure[Decay of the energy function $\mathcal{E}$]{
\begin{minipage}[b]{0.4\textwidth}
\includegraphics[width=1\textwidth]{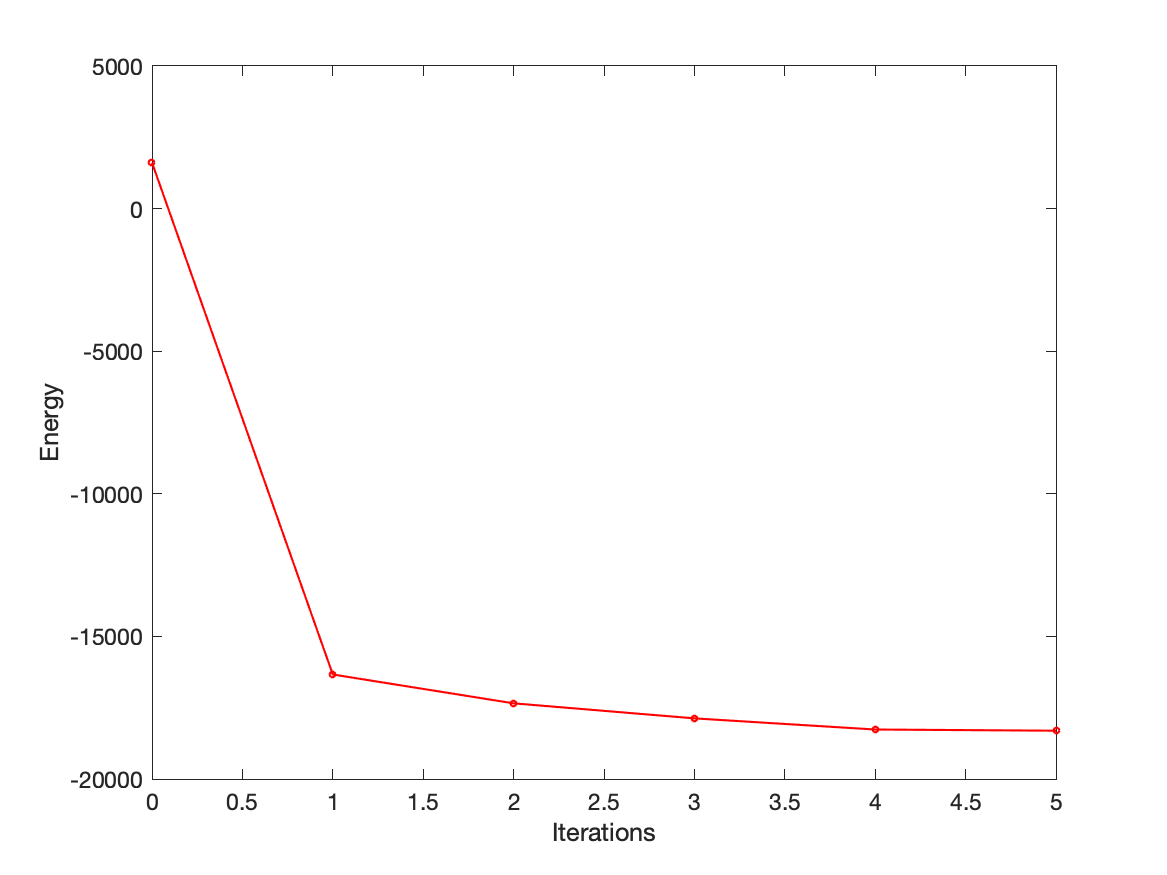}
\end{minipage}}
\caption{Comparison of the proposed joint model with segmentation or registration only method.}
\label{fig:StarHeart}
\end{figure}

\subsection{Test on thigh muscle MR images}\label{sec:TestMuscle}
We evaluate our proposed model on five T1-weighted MRI 3D volumes. We choose one of them as the moving image which has ground truth segmentation by professional doctors through manual annotation, and the other four as fixed images to be segmented for validation. The proposed algorithm is implemented on MATLAB R2019b 9.7 environment. The four segmentation clusters are: quadriceps (vastus medials, vastus lateralis, vastus intermedius, rectus femoris), hamstrings (semimembranosus, semitendinosus, biceps femoris long head, biceps femoris short head), other muscle groups (adductor group, gracilis, sartorius), and the rest (background, fat, cortical bone, bone marrow). For 3D thigh muscle data, we choose the middle axial cross section to visualize the segmentation results. \cref{fig:MuscleCompare} shows the comparison of the proposed model with segmentation only or registration only method. The (b)-(c), (f)-(g) are segmentation results of minimizing $\mathcal{E}_{\mathrm{Seg}}$ and $\mathcal{E}$ respectively. We can see that minimizing $\mathcal{E}_{\mathrm{Seg}}$ can hardly separate the whole muscle from others, as the skin layer of the thigh is also divided into the muscle cluster. However, minimizing $\mathcal{E}$ can well separate out the whole muscle as a prior provided by the registration. (d) and (h) are deformed atlas by minimizing $\mathcal{E}_{\mathrm{Reg}}$ and $\mathcal{E}$ respectively. We can clearly see that the proposed model obtains an accurate segmentation result of thigh muscles with the geometric structure provided by the segmentation, but method with registration only can't. This figure further illustrates the advantages of joint model. In the following experiments, without specification, we use the data shown in \cref{fig:MuscleCompare} (a) as the moving image.
\cref{fig:MuscleProposed} visualizes the results of our proposed model from different aspects. We can see that the deformed image is very similar with the fixed image and the segmentation result is accurate in visual. \cref{fig:MuscleAllSlice} lists the segmentation results of all slices by the proposed model, which shows that all of the cross sections have an accurate segmentation, thus illustrating the reliability and effectiveness of the proposed model.

\begin{figure}[htp]
\centering
\subfigure[Image $I_m$]{\label{fig:MuscleComparea}
\begin{minipage}[b]{0.23\textwidth}
\includegraphics[width=1\textwidth]{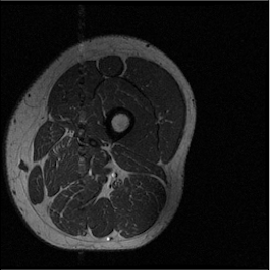}
\end{minipage}}
\subfigure[Label by $\min\mathcal{E}_{\mathrm{Seg}}$]{
\begin{minipage}[b]{0.23\textwidth}
\includegraphics[width=1\textwidth]{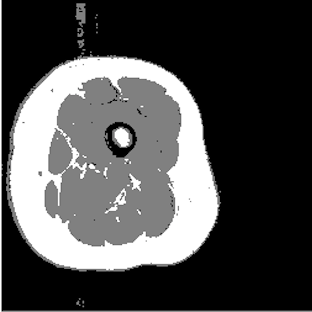}
\end{minipage}}
\subfigure[Contour by $\min\mathcal{E}_{\mathrm{Seg}}$]{
\begin{minipage}[b]{0.23\textwidth}
\includegraphics[width=1\textwidth]{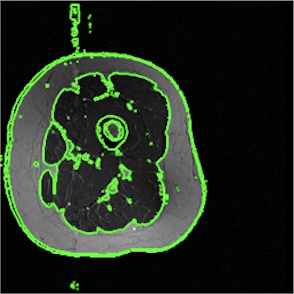}
\end{minipage}}
\subfigure[Deformed atlas by $\min\mathcal{E}_{\mathrm{Reg}}$]{
\begin{minipage}[b]{0.23\textwidth}
\includegraphics[width=1\textwidth]{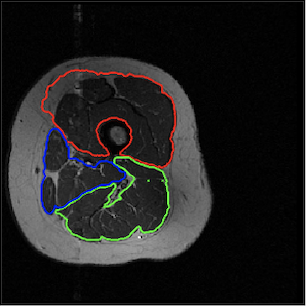}
\end{minipage}}
\subfigure[Image $I_f$]{
\begin{minipage}[b]{0.23\textwidth}
\includegraphics[width=1\textwidth]{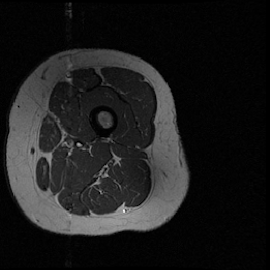}
\end{minipage}}
\subfigure[Label by $\min\mathcal{E}$]{
\begin{minipage}[b]{0.23\textwidth}
\includegraphics[width=1\textwidth]{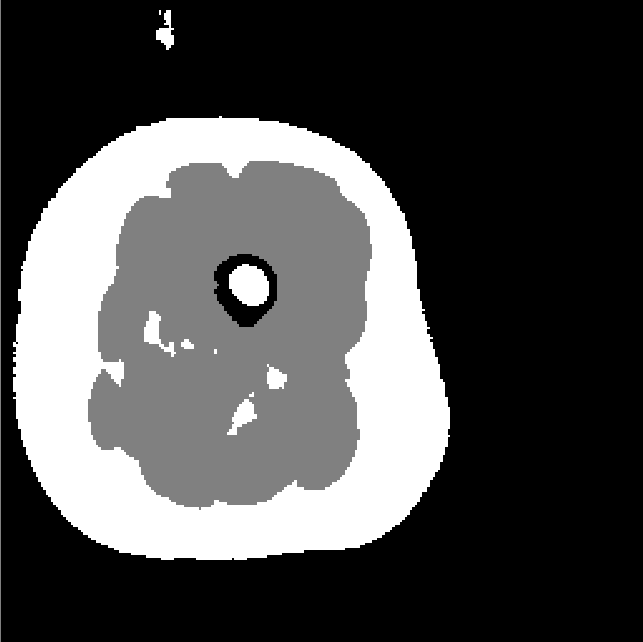}
\end{minipage}}
\subfigure[Contour by $\min\mathcal{E}$]{
\begin{minipage}[b]{0.23\textwidth}
\includegraphics[width=1\textwidth]{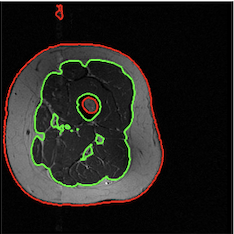}
\end{minipage}}
\subfigure[Deformed atlas by $\min\mathcal{E}$]{
\begin{minipage}[b]{0.23\textwidth}
\includegraphics[width=1\textwidth]{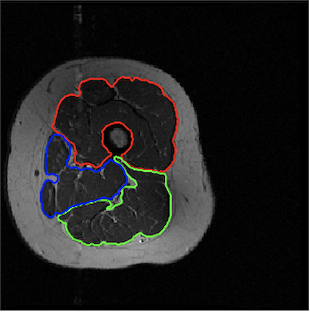}
\end{minipage}}
\caption{Comparison of the proposed model with segmentation or registration only method.} 
\label{fig:MuscleCompare}
\end{figure}

\begin{figure}[htp]
\centering
\subfigure[Image $I_f$]{
\begin{minipage}[b]{0.23\textwidth}
\includegraphics[width=1\textwidth]{figure/muscle10_slice8.png}
\end{minipage}}
\subfigure[Deformed image]{
\begin{minipage}[b]{0.23\textwidth}
\includegraphics[width=1\textwidth]{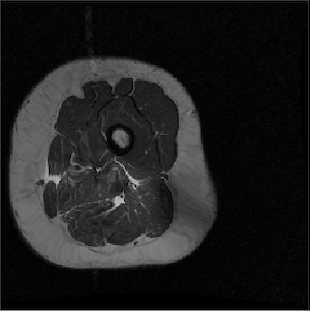}
\end{minipage}}
\subfigure[Deformed mesh]{
\begin{minipage}[b]{0.23\textwidth}
\includegraphics[width=1\textwidth]{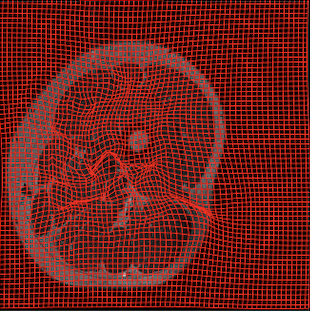}
\end{minipage}}
\subfigure[Displacement field]{
\begin{minipage}[b]{0.23\textwidth}
\includegraphics[width=1\textwidth]{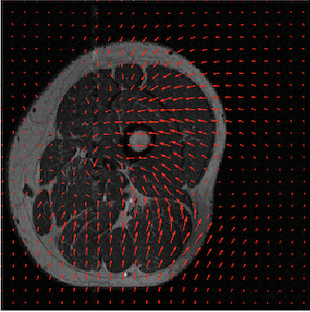}
\end{minipage}}
\subfigure[Difference of $I_f$ and $I_m$]{
\begin{minipage}[b]{0.23\textwidth}
\includegraphics[width=3.6cm,height=3.6cm]{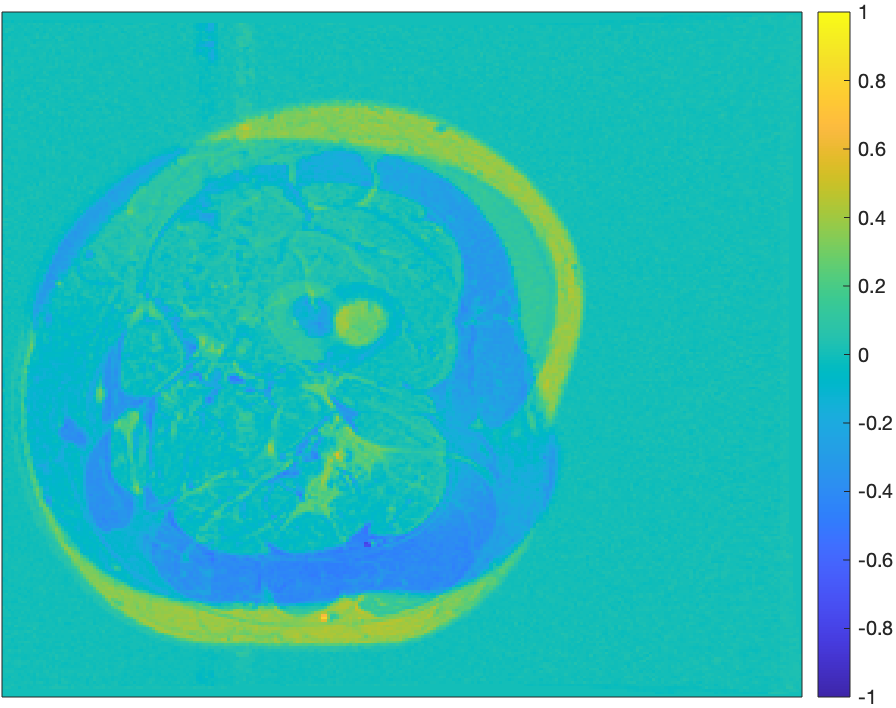}
\end{minipage}}
\subfigure[Difference of $I_f$ and deformed image]{
\begin{minipage}[b]{0.23\textwidth}
\includegraphics[width=3.6cm,height=3.6cm]{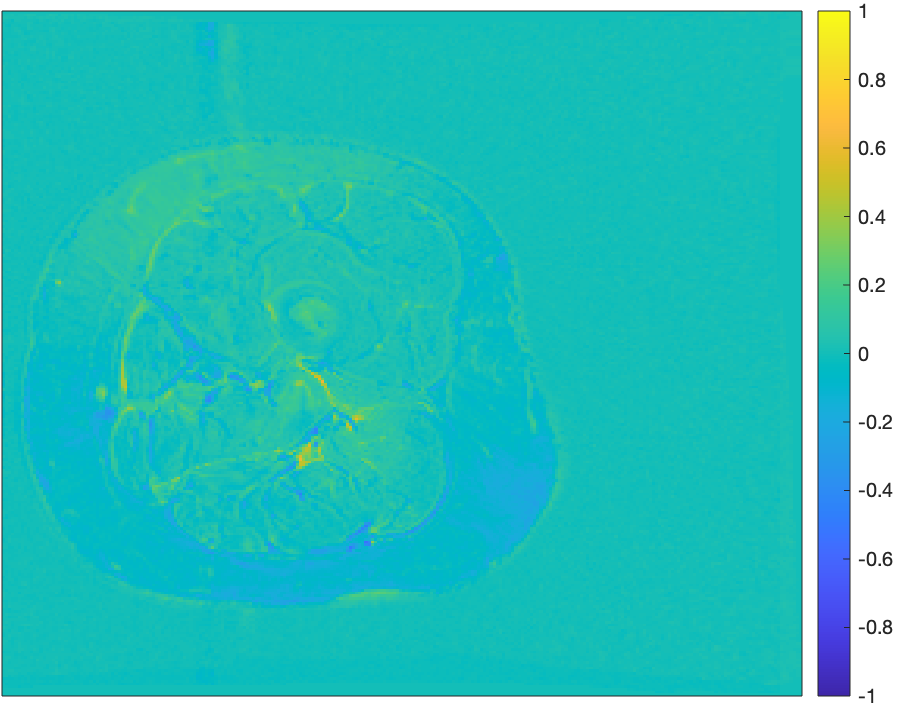}
\end{minipage}}
\subfigure[Segmentation result shown in 2D]{
\begin{minipage}[b]{0.23\textwidth}
\includegraphics[width=1\textwidth]{figure/muscle10_slice8_deformed_edge.png}
\end{minipage}}
\subfigure[Segmentation result shown in 3D]{
\begin{minipage}[b]{0.23\textwidth}
\includegraphics[width=1\textwidth]{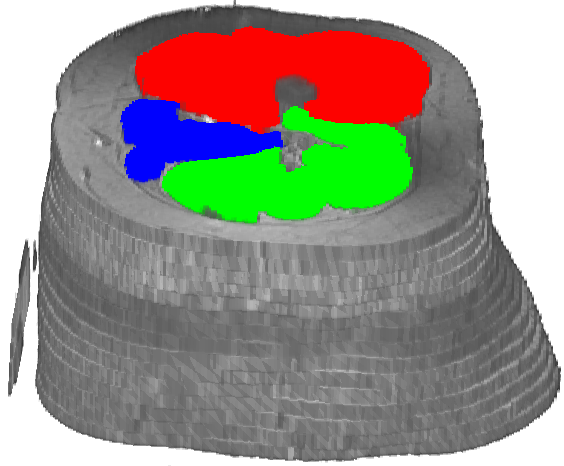}
\end{minipage}}
\caption{Results of the proposed model (the moving image $I_m$ is shown in \cref{fig:MuscleCompare} (a)).} 
\label{fig:MuscleProposed}
\end{figure}

\begin{figure}[htp]
\centering
\subfigure{
\begin{minipage}[b]{0.98\textwidth}
\includegraphics[width=1\textwidth]{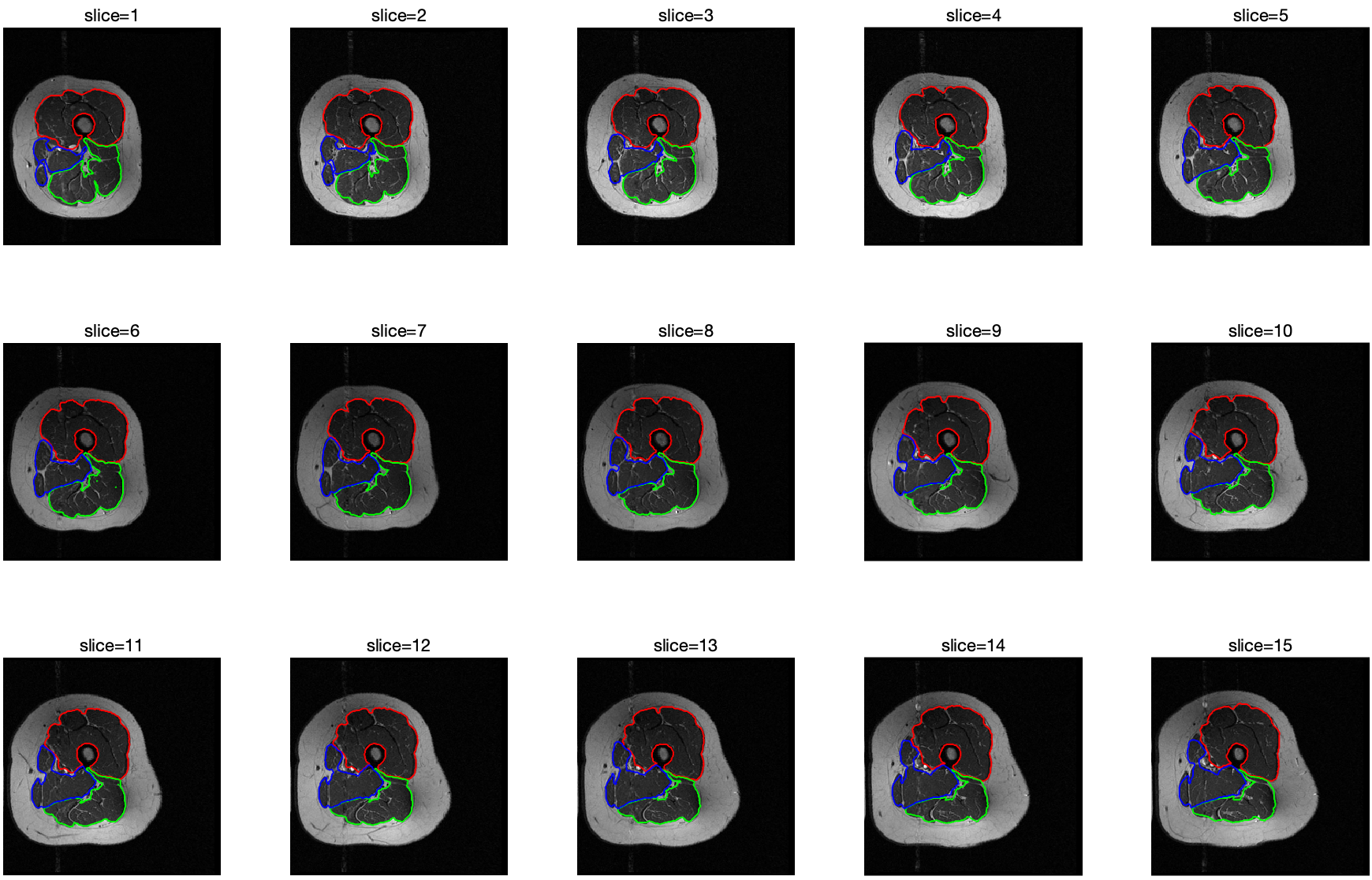}
\end{minipage}}
\caption{The segmentation results of all slices by the proposed model.}\label{fig:MuscleAllSlice}
\end{figure}

In the last, we compare our proposed model with other three methods: a non-parametric diffeomorphic image registration algorithm based on Thirion's demons algorithm \cite{vercauteren2009diffeomorphic} (we denote it as M1 below), a non-rigid image registration based on B-spline composition and level sets \cite{chan2017two} (denoted as M2), and an unified segmentation \cite{ashburner2005unified} introduced in Section 2.2 (denoted as M3). For convenience, we denote our proposed model as M4 in the following. Note that M1 and M2 are both atlas based segmentation methods as they depend on image registration only. M3 is another joint segmentation and registration method. \cref{fig:CompareResult1} and \cref{fig:CompareResult2} show the segmentation results of these methods evaluated on two subjects (shown in \cref{fig:CompareSubjects}). These visual results demonstrate the superiority of the proposed method than others.

\begin{figure}[htp]
\centering
\subfigure[Moving image]{
\begin{minipage}[b]{0.23\textwidth}
\includegraphics[width=1\textwidth]{figure/N1_slice8.png}
\end{minipage}}
\subfigure[Subject 1]{
\begin{minipage}[b]{0.23\textwidth}
\includegraphics[width=1\textwidth]{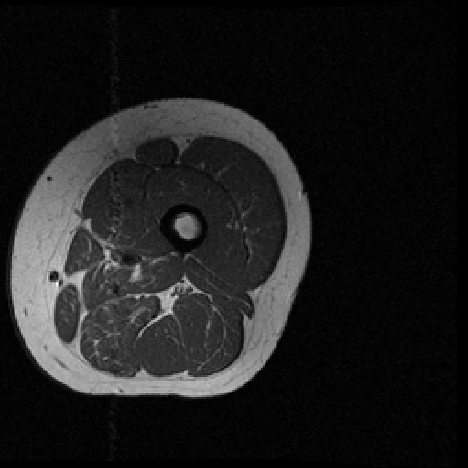}
\end{minipage}}
\subfigure[Subject 2]{
\begin{minipage}[b]{0.23\textwidth}
\includegraphics[width=1\textwidth]{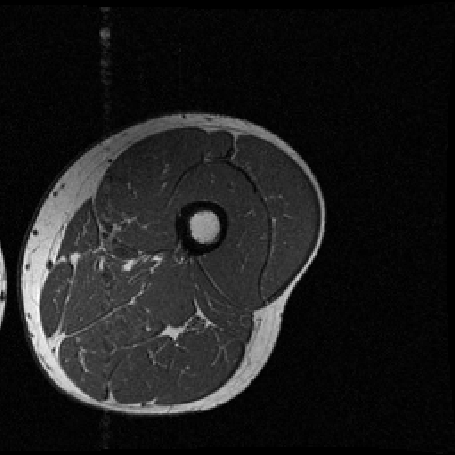}
\end{minipage}}
\caption{Visualization of middle slices of the moving image, Subject 1 and Subject 2.}
\label{fig:CompareSubjects}
\end{figure}

\begin{figure}[htp]
\centering
\subfigure{
\begin{minipage}[b]{0.23\textwidth}
\includegraphics[width=1\textwidth]{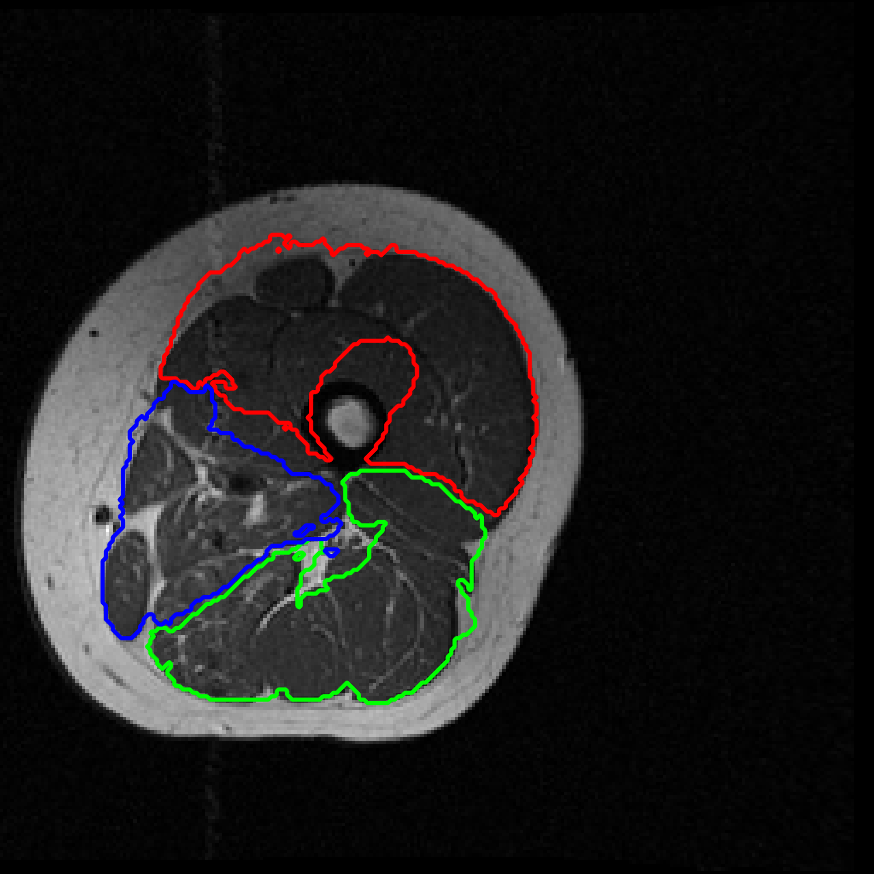}
\end{minipage}}
\subfigure{
\begin{minipage}[b]{0.23\textwidth}
\includegraphics[width=1\textwidth]{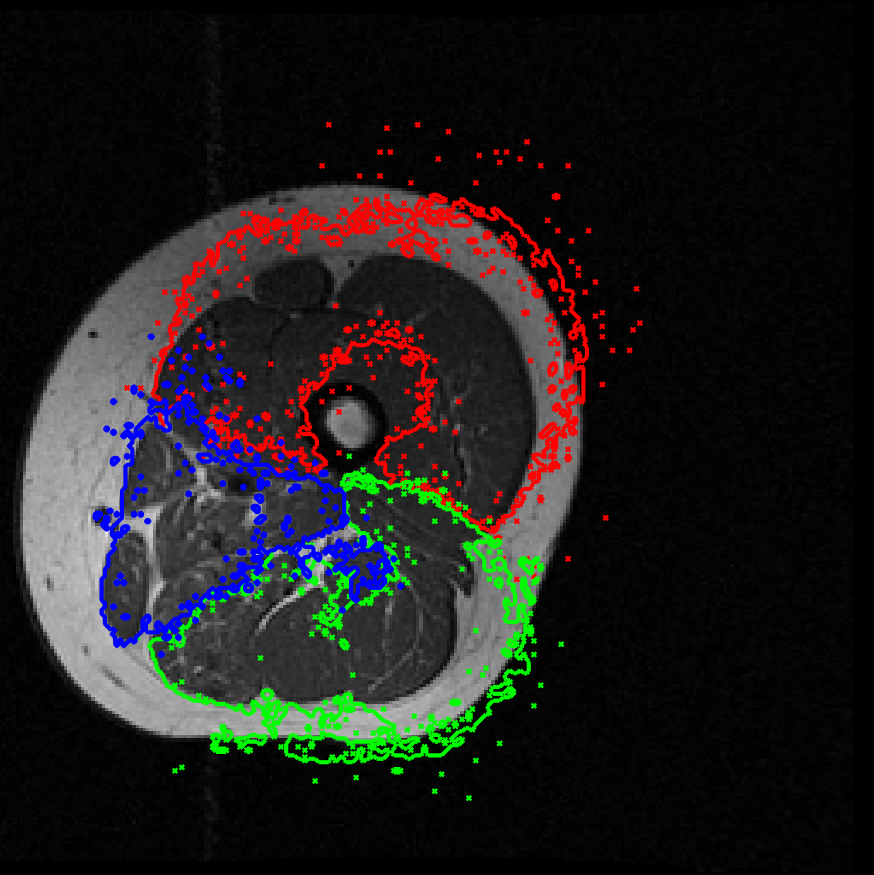}
\end{minipage}}
\subfigure{
\begin{minipage}[b]{0.23\textwidth}
\includegraphics[width=1\textwidth]{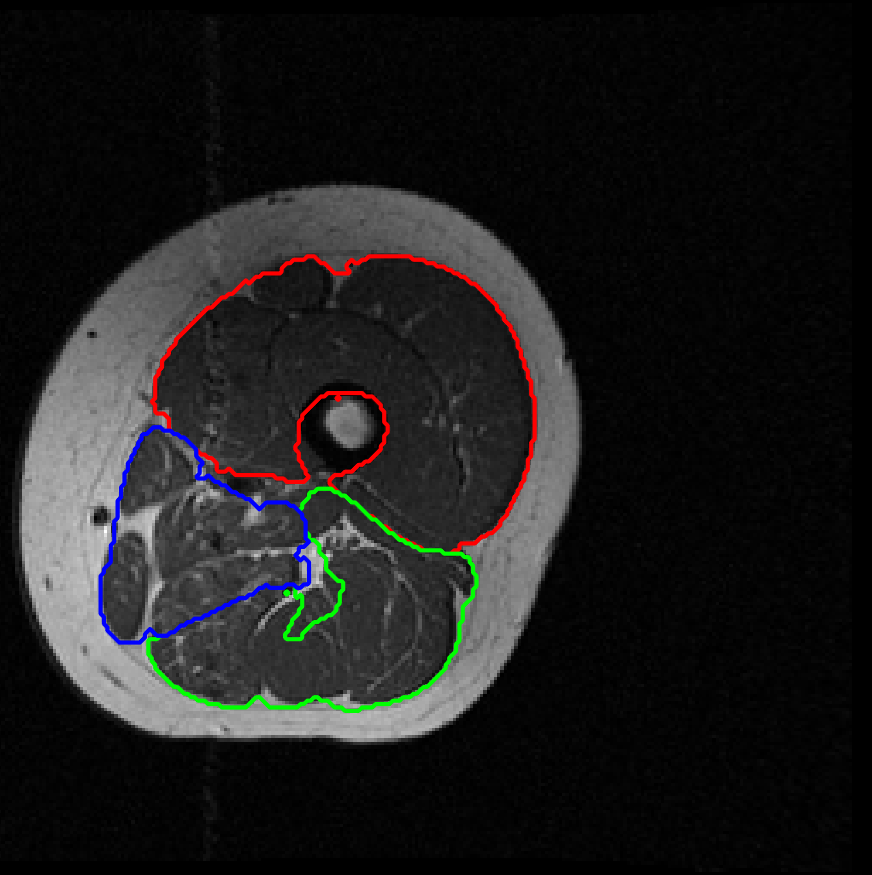}
\end{minipage}}
\subfigure{
\begin{minipage}[b]{0.23\textwidth}
\includegraphics[width=1\textwidth]{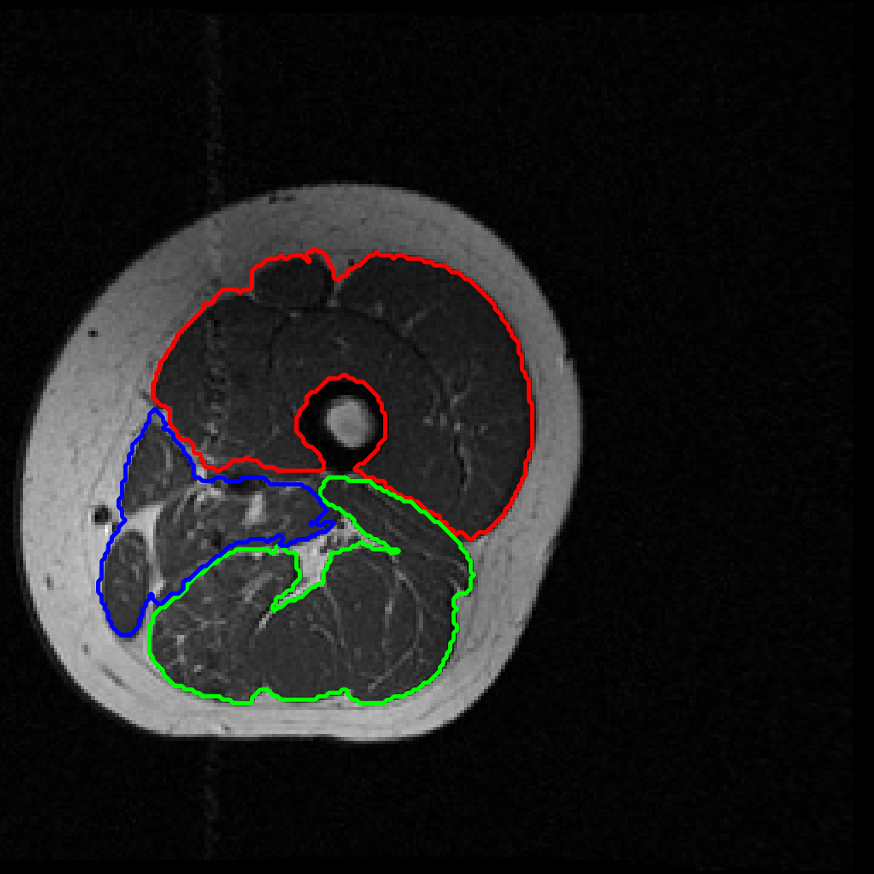}
\end{minipage}}\\
\subfigure{
\begin{minipage}[b]{0.23\textwidth}
\includegraphics[width=1\textwidth]{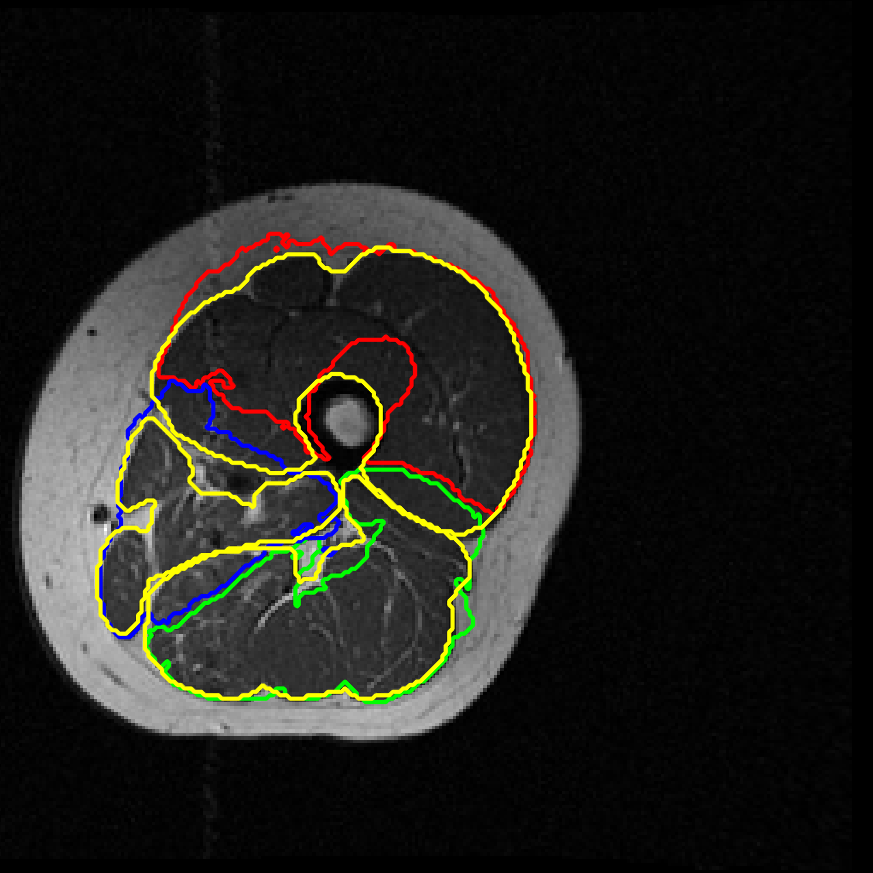}
\caption*{(a) M1 \cite{vercauteren2009diffeomorphic}}
\end{minipage}}
\subfigure{
\begin{minipage}[b]{0.23\textwidth}
\includegraphics[width=1\textwidth]{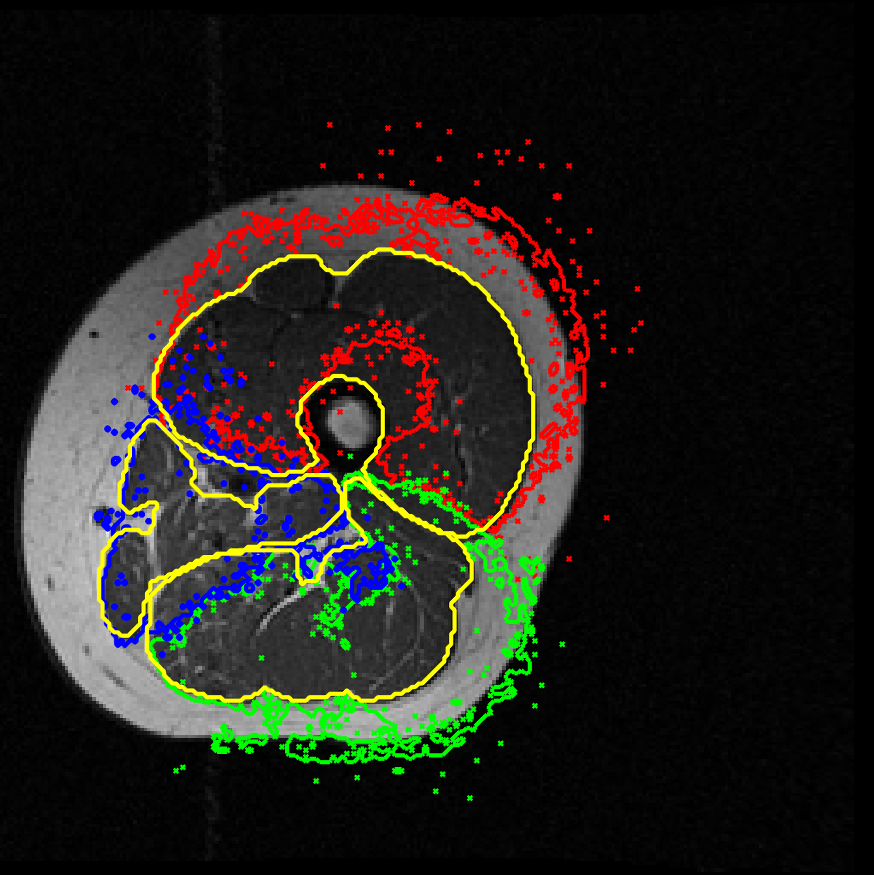}
\caption*{(b) M2 \cite{chan2017two}}
\end{minipage}}
\subfigure{
\begin{minipage}[b]{0.23\textwidth}
\includegraphics[width=1\textwidth]{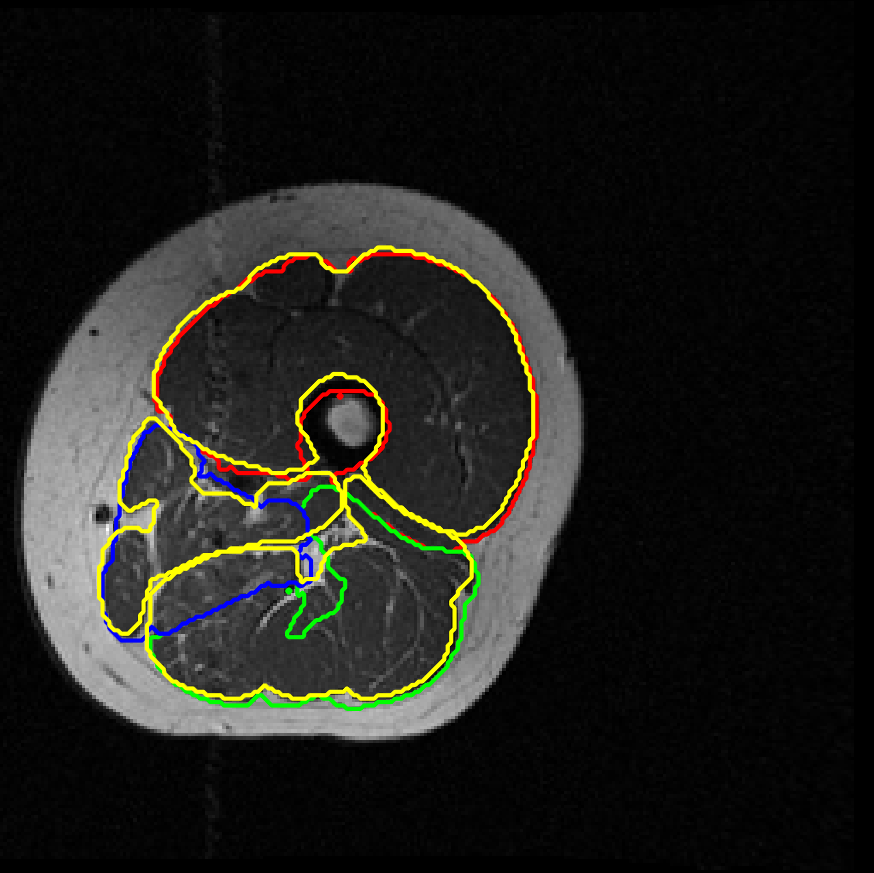}
\caption*{(c) M3 \cite{ashburner2005unified}}
\end{minipage}}
\subfigure{
\begin{minipage}[b]{0.23\textwidth}
\includegraphics[width=1\textwidth]{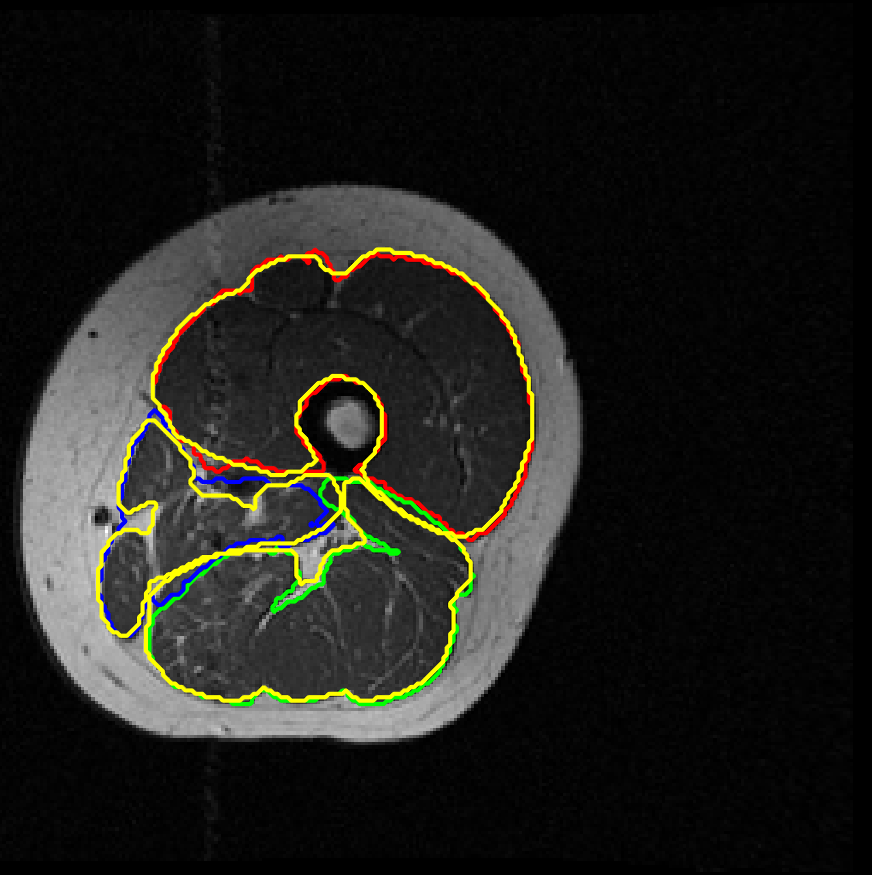}
\caption*{(d) M4 (proposed)}
\end{minipage}}
\caption{Comparison of segmentation results of M1, M2, M3, and M4 on Subject 1 (see \cref{fig:CompareSubjects}). The first line: segmentation results of different methods. The second line: segmentation results with ground truths overlaying in yellow color.}
\label{fig:CompareResult1}
\end{figure}

\begin{figure}[htp]
\centering
\subfigure{
\begin{minipage}[b]{0.23\textwidth}
\includegraphics[width=1\textwidth]{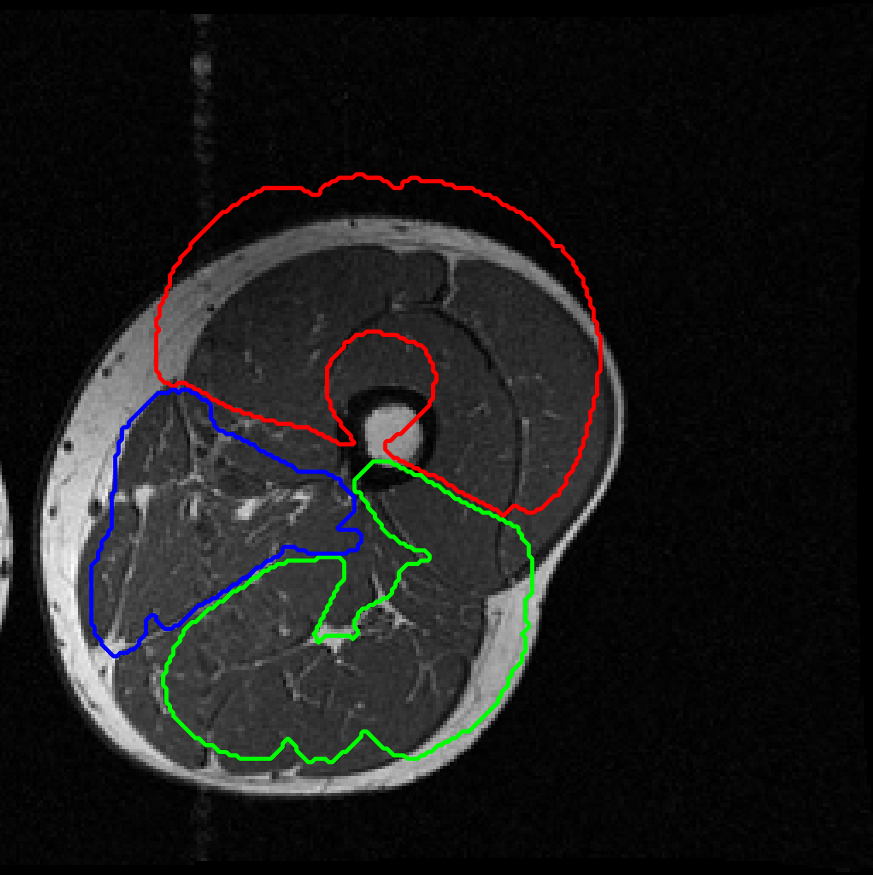}
\end{minipage}}
\subfigure{
\begin{minipage}[b]{0.23\textwidth}
\includegraphics[width=1\textwidth]{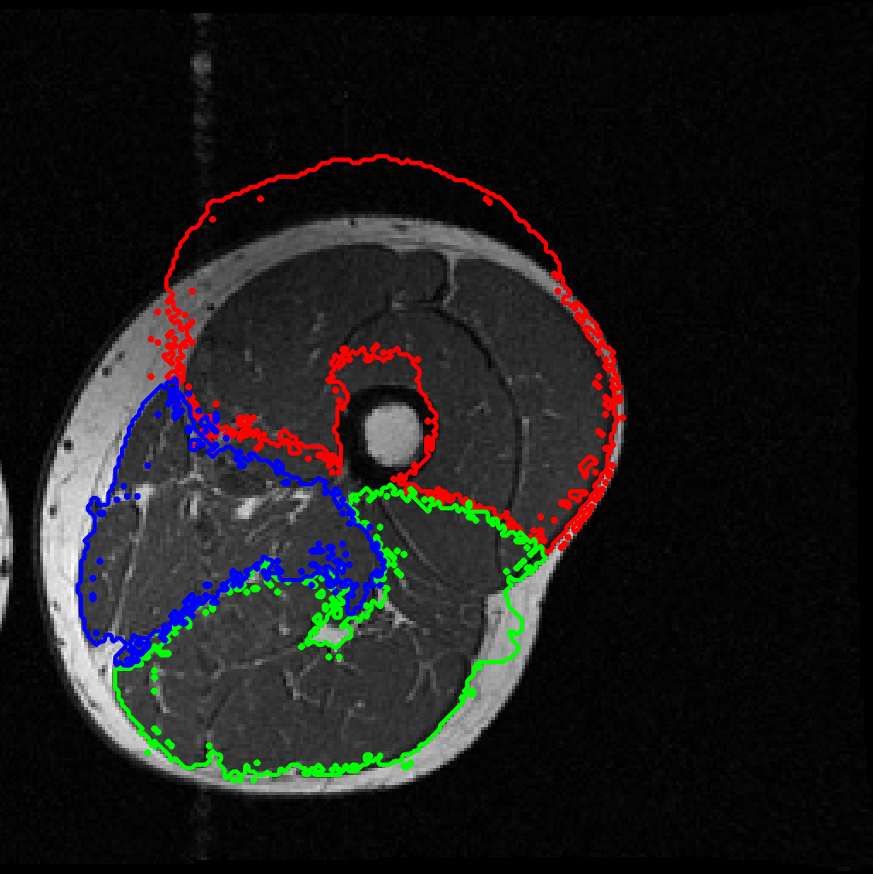}
\end{minipage}}
\subfigure{
\begin{minipage}[b]{0.23\textwidth}
\includegraphics[width=1\textwidth]{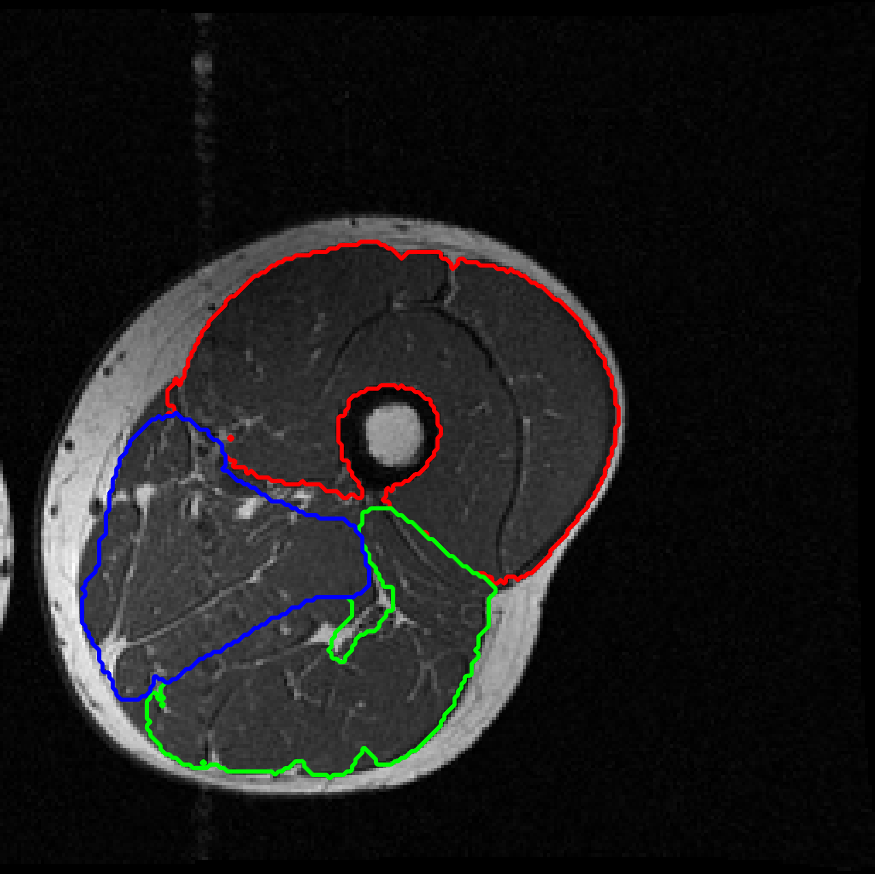}
\end{minipage}}
\subfigure{
\begin{minipage}[b]{0.23\textwidth}
\includegraphics[width=1\textwidth]{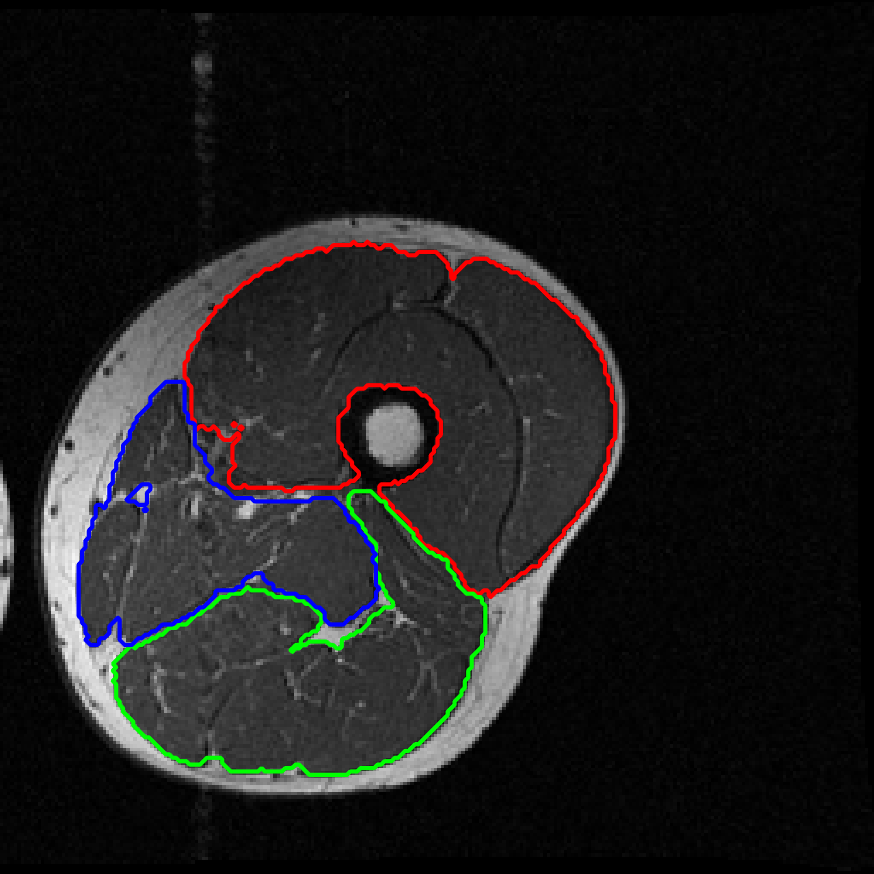}
\end{minipage}}\\
\subfigure{
\begin{minipage}[b]{0.23\textwidth}
\includegraphics[width=1\textwidth]{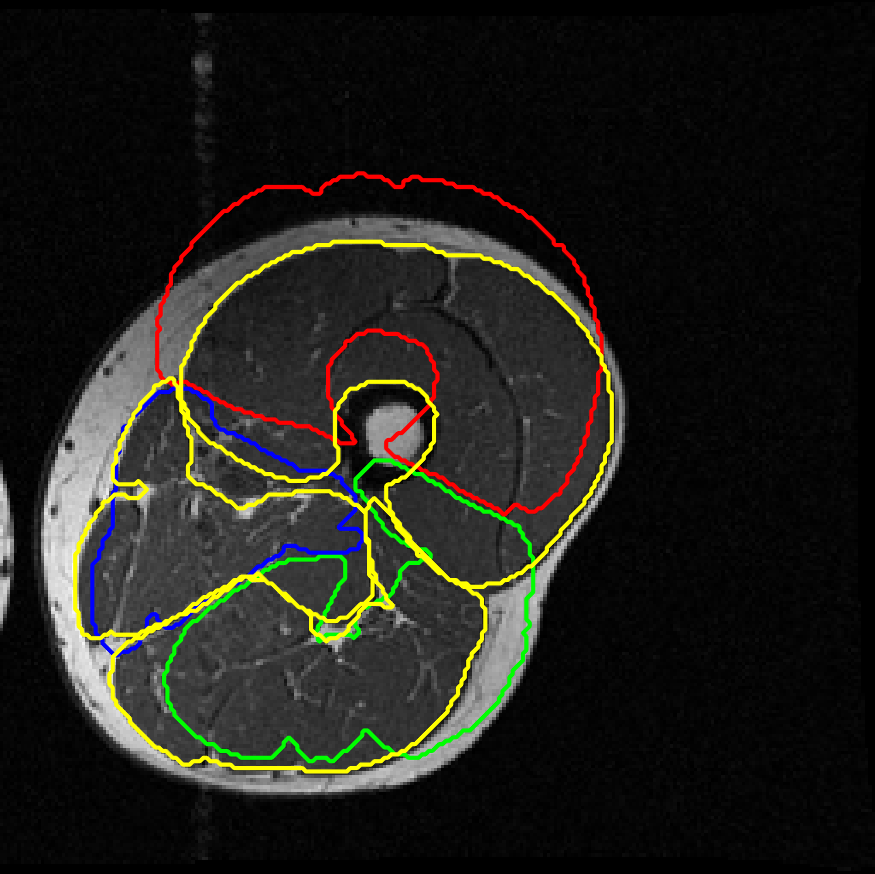}
\caption*{(a) M1 \cite{vercauteren2009diffeomorphic}}
\end{minipage}}
\subfigure{
\begin{minipage}[b]{0.23\textwidth}
\includegraphics[width=1\textwidth]{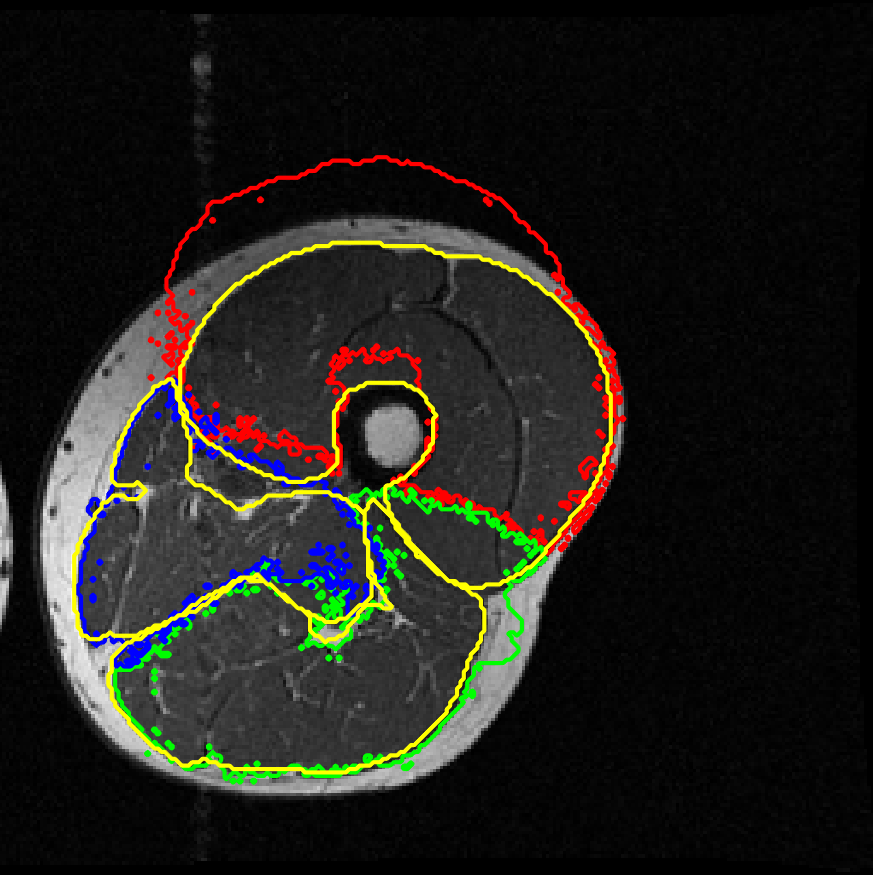}
\caption*{(b) M2 \cite{chan2017two}}
\end{minipage}}
\subfigure{
\begin{minipage}[b]{0.23\textwidth}
\includegraphics[width=1\textwidth]{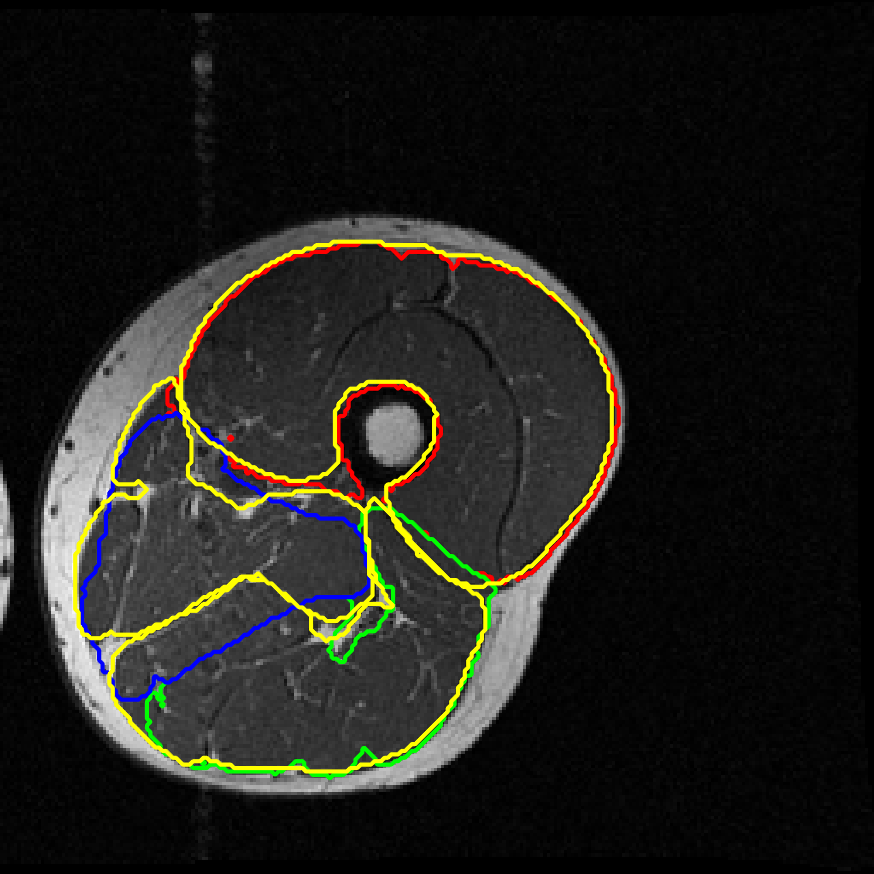}
\caption*{(c) M3 \cite{ashburner2005unified}}
\end{minipage}}
\subfigure{
\begin{minipage}[b]{0.23\textwidth}
\includegraphics[width=1\textwidth]{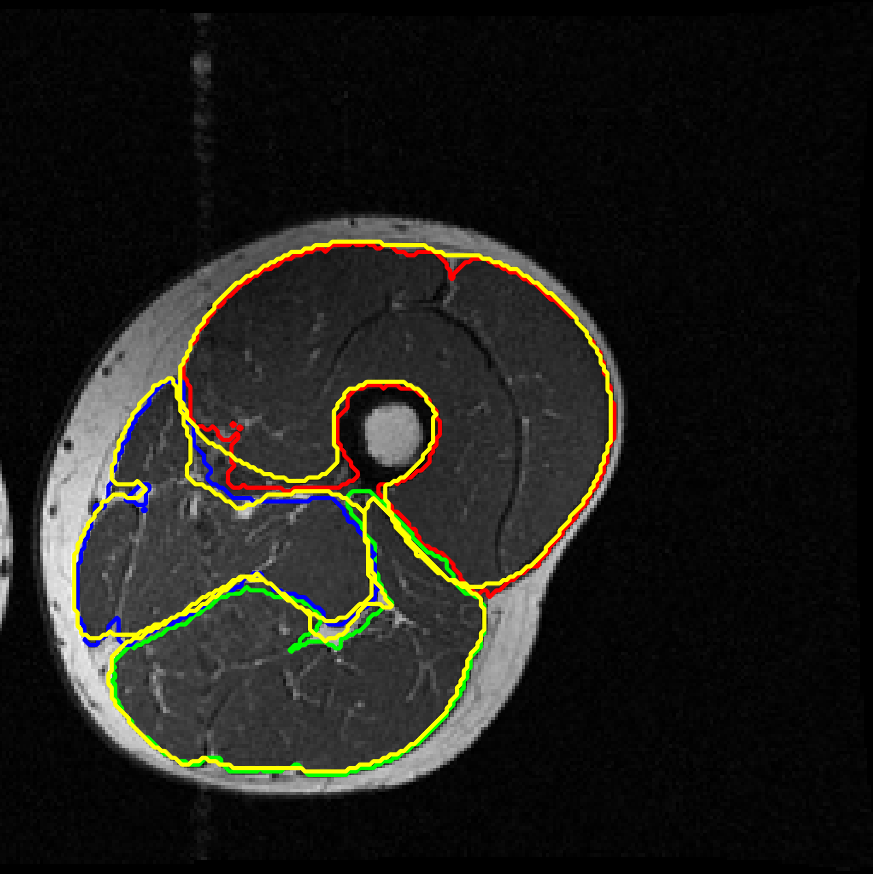}
\caption*{(d) M4 (proposed)}
\end{minipage}}
\caption{Comparison of segmentation results of M1, M2, M3, and M4 on Subject 2 (see \cref{fig:CompareSubjects}). The first line: segmentation results of different methods. The second line: segmentation results with ground truths overlaying in yellow color.}
\label{fig:CompareResult2}
\end{figure}

To evaluate the performance of M1, M2, M3 and the proposed M4, we calculate the segmentation accuracy compared with ground truths (obtained from manual segmentation). We adopt modified Jaccard (mJ), Dice similarity coefficient (DSC), and average surface distance (ASD) as the evaluation indexes. Let $X$ be the segmentation result from the algorithm and $Y$ the ground truth, $\partial X$ the segmentation boundary of $X$, $\partial Y$ the boundary of $Y$. We have definitions:
\begin{itemize}
\item modified Jaccard:
$$\mathrm{mJ}(X,Y)=\frac{|X\cap Y|}{|X|+|Y|-|X\cap Y|}.$$
\item Dice Similarity Coefficient:
$$\mathrm{DSC}(X,Y)=\frac{2|X\cap Y|}{|X|+|Y|}=\frac{2\mathrm{TP}}{\mathrm{FN}+2\mathrm{TP}+\mathrm{FP}}.$$
Here FP represents the number of false positive (i.e. the total number of the misclassified voxels of the background), FN is the number of false negative (i.e. the total number of the misclassified voxels of the object), and TP is the true positive (i.e. total number of the correctly classified pixels).
\item Average Surface Distance:
$$\mathrm{ASD}(\partial X,\partial Y)=\frac{\sum\limits_{x\in \partial X} \min\limits_{y\in \partial Y}\|x-y\|+\sum\limits_{y\in \partial Y} \min\limits_{x\in \partial X}\|y-x\|}{|\partial X|+|\partial Y|}.$$
\end{itemize}
The means and standard deviations of mJ, DSC and ASD by methods M1, M2, M3 and M4, on four data sets, are listed in \cref{tab:evaluation}, respectively. This table shows that our proposed method has the highest mJ and DSC scores and smallest ASD, which illustrates the superiority of the proposed method from the perspective of numerical evaluation.

\begin{table}[htp]
\caption{mJ, DSC and ASD (mean $\pm$ std) for M1 \cite{vercauteren2009diffeomorphic}, M2 \cite{chan2017two}, M3 \cite{ashburner2005unified}, and M4 (proposed).}\label{tab:evaluation}
\begin{center}
\begin{tabular}{cc|c|c|c|c}
\hline
~ & ~ & Quadriceps & Hamstrings & Other muscles & Average \\
\hline
\multirow{3}{*}{M1} & mJ &  $0.3627\pm0.2139$  & $0.5203\pm0.1027$ & $0.4563\pm0.1439$ & $0.4464\pm0.1462$ \\
~& DSC &  $0.5425\pm0.2352$  & $0.6602\pm0.0913$ & $0.5987\pm0.1523$ & $0.6005\pm0.1527$ \\
~& ASD & $5.9953\pm2.7591$  & $4.8623\pm1.7181$ & $4.6537\pm2.8222$ & $5.1704\pm2.4203$ \\
\hline
\multirow{3}{*}{M2} & mJ &  $0.4421\pm0.1521$  & $0.4933\pm0.0811$ & $0.3975\pm0.1783$ & $0.4443\pm0.1322$ \\
~& DSC &  $0.6007\pm0.1570$  & $0.6578\pm0.0696$ & $0.5522\pm0.1749$ & $0.6036\pm0.1289$ \\
~& ASD &  $6.1155\pm2.3083$  & $6.0264\pm1.6284$ & $5.3183\pm3.2121$ & $5.8201\pm2.3625$ \\
\hline
\multirow{3}{*}{M3} & mJ &  $0.7834\pm0.1077$  & $0.6070\pm0.1167$ & $0.4741\pm0.1526$ & $0.6215\pm0.1228$ \\
~& DSC & $0.8753\pm0.0719$  & $0.7502\pm0.0965$ & $0.6313\pm0.1539$ & $0.7523\pm0.1055$ \\
~& ASD & $2.5174\pm1.0276$  & $4.3736\pm1.2004$ & $4.7323\pm2.0927$ & $3.8744\pm1.3975$ \\
\hline
\multirow{3}{*}{M4} & mJ & $\mathbf{ 0.9190\pm0.0071}$  & $\mathbf{ 0.8828\pm0.0181}$ & $\mathbf{ 0.7594\pm0.0412}$ & $\mathbf{ 0.8537\pm0.0154}$ \\
~& DSC &  $\mathbf{ 0.9578\pm0.0039}$  & $\mathbf{ 0.9377\pm0.0102}$ & $\mathbf{ 0.8628\pm0.0261}$ & $\mathbf{ 0.9194\pm0.0097}$ \\
~& ASD &  $\mathbf{ 0.8171\pm0.1185}$  & $\mathbf{ 1.1712\pm0.1606}$ & $\mathbf{ 1.4448\pm0.1751}$ & $\mathbf{ 1.1444\pm0.0336}$ \\
\hline
\end{tabular}
\end{center}
\end{table}

\section{Conclusion and discussion}\label{sec:conclusion}
We propose a novel image segmentation model with adaptive spatial priors from joint registration, which can exploit the strong correlation between segmentation and registration, thus achieving more accurate results than sequential treatment. Besides, this framework absorbs the merits of both variational and statistical methods. The segmentation process combines GMM with spatial smoothness, intensity inhomogeneity and shape prior from registration under a variational framework, and the registration process considers two levels matching, thus making more accurate results. The numerical experiments evaluated on synthetic and thigh muscle MR images demonstrate the superiority of this proposed model. The application to automatic segmentation of thigh muscle only needs a small number of manually segmented label mapping, which is significant for clinical research as obtaining manual segmentation of thigh muscle is a time-consuming task. 

There are also some aspects can be further improved. For example, the registration model can be designed more precisely according to the characteristics of specific dataset, such as considering diffeomorphic registration or adding edge information. We just take a simple model as an example, and more elaborate registration may obtain further benefits. 
Moreover, the variability among individuals can be significant in medical images, and a more meaningful atlas will be welcomed. In this paper, we simply use one person's ground truth segmentation as the atlas. This can be improved by choosing a best one from multiple atlases or constructing an ``average" atlas, which is a meaningful statistical atlas of the global underlying anatomy of thigh muscle from a set of atlas  \cite{debroux2020variational}. More excellent registration method and more meaningful atlas would benefit the segmentation result, which will be considered in our future work.

\appendix
\section{Proof of \cref{thm:EnergyDescent}}\label{sec:proof}
Since $\mathcal{R}(\bm u^{t+1}; \bm u^{t+1})=\mathcal{R}(\bm u^{t+1})$ and $\mathcal{R}(\bm u^{t}; \bm u^{t})=\mathcal{R}(\bm u^{t})$, 
to prove $\hat{\mathcal{E}}(\Theta^{t+1}, \bm u^{t+1})\leq\hat{\mathcal{E}}(\Theta^{t}, \bm u^{t})$ is to prove 
$$\mathcal{F}(\Theta^{t+1}, \bm u^{t+1})+\lambda\mathcal{R}(\bm u^{t+1}; \bm u^{t+1})\leq \mathcal{F}(\Theta^{t}, \bm u^{t})+\lambda\mathcal{R}(\bm u^{t}; \bm u^{t}).$$
According to the second and first formulation of iteration scheme \cref{eq:iteration1}, we have 
$$\mathcal{F}(\Theta^{t+1}, \bm u^{t+1})+\lambda\mathcal{R}(\bm u^{t+1}; \bm u^{t})\leq \mathcal{F}(\Theta^{t+1}, \bm u^{t})+\lambda\mathcal{R}(\bm u^{t}; \bm u^{t})\leq \mathcal{F}(\Theta^{t}, \bm u^{t})+\lambda\mathcal{R}(\bm u^{t}; \bm u^{t}).$$
Next, it suffices to prove that
$$\mathcal{R}(\bm u^{t+1}; \bm u^{t+1})\leq\mathcal{R}(\bm u^{t+1}; \bm u^{t}).$$
It is easy to check that $\mathcal{R}(\bm u)$ is concave if the kernel function $\omega$ is semi-positive definite. The variation of $\mathcal{R}(\bm u)$ is $\delta\mathcal{R}(\bm u)=\omega\ast(1-2\bm u)$ if $\omega$ is a symmetric kernel function such as Gaussian kernel. By the property of concave function, one can have
$$\mathcal{R}(\bm u^{t+1})-\mathcal{R}(\bm u^{t})\leq\langle\bm u^{t+1}-\bm u^{t},\omega\ast(1-2\bm u^t)\rangle.$$
That is
$$\mathcal{R}(\bm u^{t+1};\bm u^{t+1})-\mathcal{R}(\bm u^{t};\bm u^{t})\leq\langle\bm u^{t+1}-\bm u^{t},\omega\ast(1-2\bm u^t)\rangle=\mathcal{R}(\bm u^{t+1};\bm u^{t})-\mathcal{R}(\bm u^{t};\bm u^{t}).$$
Thus we have
$$\mathcal{R}(\bm u^{t+1}; \bm u^{t+1})\leq\mathcal{R}(\bm u^{t+1}; \bm u^{t}),$$
which completes the proof.


\bibliographystyle{siamplain}
\bibliography{references}

\end{document}